\definecolor{darkgreen}{RGB}{5,110,49}
\newcommand{\defeq}{\vcentcolon=}
\newcommand{\vct}[1]{\boldsymbol{#1}} 
\newcommand{\field}[1]{\mathbb{#1}}
\newcommand{\R}{\field{R}} 
\newcommand{\T}{^{\textrm T}} 
\newcommand{\RSA}{\mathbb{R^{\mathcal{S} \times \mathcal{A}}}}
\newcommand{\RE}{\overline{\mathbb{R}}}
\newcommand{\norm}[1]{\|#1\|}
\newcommand{\ProbOpr}[1]{\mathbb{#1}}
\newcommand{\expect}[2]{%
\ifthenelse{\equal{#2}{}}{\ProbOpr{E}_{#1}}
{\ifthenelse{\equal{#1}{}}{\ProbOpr{E}\left[#2\right]}{\ProbOpr{E}_{#1}\left[#2\right]}}} 
\newcommand{\var}[2]{%
\ifthenelse{\equal{#2}{}}{\ProbOpr{VAR}_{#1}}
{\ifthenelse{\equal{#1}{}}{\ProbOpr{VAR}\left[#2\right]}{\ProbOpr{VAR}_{#1}\left[#2\right]}}} 
\DeclareMathOperator*{\argmax}{\mathrm{argmax}}
\DeclareMathOperator*{\argmin}{\mathrm{argmin}}
\newcommand{\sign}{\operatornamewithlimits{sign}}
\newcommand{\vr}{\vct{r}}
\newcommand{\eat}[1]{}
\newcommand{\states}{\mathcal{S}}
\newcommand{\actions}{\mathcal{A}}
\newcommand{\E}{\mathbb{E}}
\newcommand{\J}{\mathcal{J}}
\renewcommand{\T}{\mathcal{T}}
\newcommand{\sm}{\operatorname*{softmax}}
\DeclareMathOperator{\stopgrad}{stop\_grad}
\newcommand{\pix}{\kern 0.1em}
\newcommand{\pmm}{\kern 0.35em$\pm$\kern 0.35em}
\newtheorem{theorem}{Theorem}[section]
\newtheorem{lemma}[theorem]{Lemma}
\newtheorem{corollary}{Corollary}[theorem]
\newtheorem{prop}[theorem]{Proposition}
\definecolor{faintgray}{gray}{0.8}
\newcommand{\cm}{\ding{51}}
\newcommand{\xm}{{\color{faintgray}$\times$ }}
\title{IQ-Learn: Inverse soft-Q Learning for Imitation}
\author{Divyansh Garg$^{1}$ \hspace{10pt} Shuvam Chakraborty$^{1}$
	\hspace{10pt}Chris Cundy$^{1}$ \\ 
	\textbf{Jiaming Song$^{1}$ \hspace{10pt}Matthieu Geist$^{2}$ \hspace{10pt} Stefano Ermon$^{1}$}\\
	$^1$ Stanford University \hspace{10pt} 
	$^2$ Google Research, Brain Team \\
	{\tt\small \{divgarg, shuvamc, cundy, tsong, ermon\}@stanford.edu}
}
\begin{document}

\maketitle

\begin{abstract}
In many sequential decision-making problems (e.g., robotics control, game playing, sequential prediction),
human or expert data is available containing useful information about the task. 
However, imitation learning (IL) from a small amount of expert data can be challenging in high-dimensional environments with complex dynamics.
Behavioral cloning is a simple method that is widely used due to its simplicity of implementation and stable convergence but doesn't utilize any information involving the environment’s dynamics. 
Many existing methods that exploit dynamics information are difficult to train in practice due to an adversarial optimization process over reward and policy approximators or biased, high variance gradient estimators. 
We introduce a method for dynamics-aware IL which avoids adversarial training by learning a single Q-function, implicitly representing both reward and policy.
On standard benchmarks, the implicitly learned rewards show a high positive correlation with the ground-truth rewards, illustrating our method can also be used for inverse reinforcement learning (IRL). 
Our method, Inverse soft-Q learning (IQ-Learn) obtains \emph{state-of-the-art results} in offline 
and online imitation learning settings, significantly outperforming existing methods both in the number of required environment interactions and scalability in high-dimensional spaces, often by more than \textbf{3x}\footnote{Our implementation is available at \url{https://github.com/Div99/IQ-Learn}.}.
\end{abstract}


\section{Introduction}
Imitation of an expert has long been recognized as a powerful approach for
sequential decision-making~\cite{ng2000algorithms, abbeel2004apprenticeship}, with applications as diverse as healthcare~\cite{wang2020adversarial}, autonomous driving~\cite{zhou2021exploring}, and playing complex strategic games~\cite{deepmind2019mastering}.
In the imitation learning (IL) setting, we are given a set of expert trajectories, with the goal of learning a policy which induces behavior similar to the expert's. 
The learner has no access to the reward, and no explicit knowledge of the dynamics. 

The simple behavioural cloning~\cite{ross2010efficient} approach simply maximizes the probability of the expert's actions under the learned policy, approaching the IL problem as a supervised learning problem. 
While this can work well in simple environments and with large quantities of data, it ignores the sequential nature of the decision-making problem, and small errors can quickly compound when the learned policy departs from the states observed under the expert. A natural way of introducing the environment dynamics is by framing the IL problem as an Inverse RL (IRL) problem, aiming to learn a reward function under which the expert's trajectory is optimal, and from which the learned imitation policy can be trained~\cite{abbeel2004apprenticeship}. 
This framing has inspired several approaches which use rewards either explicitly or implicitly to incorporate dynamics while learning an imitation policy~\cite{Ho2016GenerativeAI, fu2018learning, Reddy2020SQILIL, Kostrikov2020Imitation}.
However, these dynamics-aware methods are typically hard to put into practice due to unstable learning which can be sensitive to hyperparameter choice or minor implementation details~\cite{kostrikov2018discriminator}.


In this work, we introduce a dynamics-aware imitation learning method which has stable, non-adversarial training, allowing us to achieve state-of-the-art performance on imitation learning benchmarks. 
Our key insight is that much of the difficulty with previous IL methods arises from the IRL-motivated representation of the IL problem as a min-max problem over reward and policy~\cite{Ho2016GenerativeAI, abbeel2004apprenticeship}. 

This introduces a requirement to separately model the reward and policy, and train these two functions jointly, often in an adversarial fashion. 
Drawing on connections between RL and energy-based models~\cite{Haarnoja2018SoftAO, haarnoja2017reinforcement}, we propose learning a \emph{single model for the $Q$-value}. 
The $Q$-value then implicitly defines both a reward and policy function. This turns a difficult min-max problem over policy and reward functions into a simpler minimization problem over a single function, the \(Q\)-value. 
Since our problem has a one-to-one correspondence with the min-max problem studied in adversarial IL \citep{Ho2016GenerativeAI}, we 
maintain the generality and guarantees of these previous approaches, resulting in a meaningful reward that may be used for inverse reinforcement learning. Furthermore, our method may be used to minimize a variety of statistical divergences between the expert and learned policy. 
We show that we recover several previously-described approaches as special cases of particular divergences, such as the regularized behavioural cloning of~\cite{piot2014boosted}, and the conservative Q-learning of~\cite{kumar2020conservative}.


In our experiments, we find that our method is performant even with very sparse data - surpassing prior methods using \textit{one expert demonstration} in the completely offline setting -  and can scale to complex image-based tasks like Atari reaching expert performance. Moreover, our learnt rewards are highly predictive of the original environment rewards.
Finally, our method is robust to distribution shifts in the environment showing great generalization performance to never seen goals and an ability to act as a meta-learner.

Concretely, our contributions are as follows:
\begin{itemize}
    \item We present a modified $Q$-learning update rule for imitation learning that can be implemented on top of soft-Q learning or soft actor-critic (SAC) algorithms in fewer than \textbf{15} lines of code.
    
    \item We introduce a simple framework to minimize a wide range of statistical distances: Integral Probability Metrics (IPMs) and f-divergences, between the expert and learned distributions.
    
    \item We empirically show state-of-art results in a variety of imitation learning settings: online and offline IL. On the complex Atari suite, we outperform prior methods by \textbf{3-7x} while requiring \textbf{3x} less environment steps.
    
    \item We characterize our learnt rewards and show a high positive correlation with the ground-truth rewards, justifying the use of our method for Inverse Reinforcement Learning. 
\end{itemize}

\begin{table}[t]\small
\newcolumntype{O}{>{          \arraybackslash}m{0.4 cm}}
\newcolumntype{A}{>{          \arraybackslash}m{2.8 cm}}
\newcolumntype{B}{>{\centering\arraybackslash}m{1.85cm}}
\newcolumntype{C}{>{\centering\arraybackslash}m{1.9 cm}}
\newcolumntype{D}{>{\centering\arraybackslash}m{2.1 cm}}
\newcolumntype{E}{>{\centering\arraybackslash}m{1.8 cm}}
\newcolumntype{F}{>{\centering\arraybackslash}m{2.1 cm}}
\newcolumntype{G}{>{\centering\arraybackslash}m{1.7 cm}}
\setlength\tabcolsep{0pt}
\renewcommand{\arraystretch}{0.93}
\renewcommand\T{\rule{0pt}{3.4ex}}       
\newcommand\B{\rule[-0.9ex]{0pt}{0pt}} 

\newcommand{\midsepremove}{\aboverulesep = 0mm \belowrulesep = 0mm}
\newcommand{\midsepdefault}{\aboverulesep = 0.605mm \belowrulesep = 0.984mm}


\vspace{-1.5em}
\caption{A comparison of various algorithms for imitation learning. ``Convergence Guarantees'' refers to if a proof is given that the algorithm converges to the correct policy with sufficient data. We consider an algorithm “directly optimized” if it consists of an optimization algorithm (such as gradient descent) applied to the parameters of a single function    
}
\label{tab:related}
\begin{center}
\begin{adjustbox}{max width=\textwidth}
\begin{tabular}{O|ABCDEFG}
\toprule
  \multicolumn{2}{c}{\textbf{Method}} & \textbf{Reference}
  & {Dynamics \smash{$\hphantom{^{X}}$}Aware\smash{\pix}}
  & {Non-Adversarial \smash{$\hphantom{^{X}}$}Training\smash{\pix}}
  & {Convergence \smash{$\hphantom{^{3}}$}Guarantees\smash{\pix}}
  & {Non-restrictive \smash{$\hphantom{^{X}}$}Reward\smash{\pix}}
  & {Direct \smash{$\hphantom{^{X}}$}Optimization\smash{\pix}} \\
\midrule
\parbox[t]{2mm}{\multirow{3}{*}{\pix\rotatebox[origin=c]{90}{\textbf{Online} \hspace{2.5em}}}}
& ~~Max Margin IRL            & \cite{ng2000algorithms, abbeel2004apprenticeship}      & \cm & \cm & \cm & \xm & \xm \\
& ~~Max Entropy IRL           & \cite{Ziebart2008MaximumEI}         & \cm & \cm & \cm & \xm & \xm \\
& ~~GAIL/AIRL & \cite{Ho2016GenerativeAI, fu2018learning}& \cm & \xm & \cm & \cm & \xm \\
& ~~ASAF & \cite{barde2020adversarial}& \cm & \cm & \cm & \xm & \cm \\
& ~~SQIL & \cite{Reddy2020SQILIL}& \cm & \cm & \xm & \xm & \cm \B \\ 
\cmidrule{2-8}
& ~\textbf{ Ours (Online)} & -- & \cm & \cm & \cm & \cm &\cm \\
\midrule 
  \parbox[t]{2mm}{\multirow{3}{*}{\pix\rotatebox[origin=c]{90}{{\textbf {Offline \hspace{3em}}}}}}
& ~~Max Margin IRL            & \cite{lee2019truly, klein2011batch}                    & \cm & \cm & \cm & \xm & \xm \\
& ~~Max Likelihood IRL        & \cite{jain2019model}                                   & \cm & \cm & \cm & \xm & \xm \\
& ~~Max Entropy IRL           & \cite{herman2016inverse}                               & \cm & \cm & \cm & \xm & \xm \\
& ~~ValueDICE & \cite{Kostrikov2020Imitation}                          & \cm & \xm & \xm & \xm & \xm \\
& ~~Behavioral Cloning & \cite{ross2010efficient} & \xm &\cm & \cm & \xm & \cm \\
& ~~Regularized BC & \cite{piot2014boosted} & \cm &\cm & \cm & \xm & \cm \\
& ~~EDM & \cite{jarrett2020strictly} & \cm &\cm & \xm & \cm & \cm \B \\
\cmidrule{2-8}
\midsepdefault
& ~~\textbf{Ours (Offline)} & -- & \cm & \cm & \cm & \cm &\cm\\
\bottomrule 
\end{tabular}

\end{adjustbox}
\end{center}
\vspace{-1.75em}
\end{table}

\section{Background}
\paragraph{Preliminaries}  We consider environments represented as a Markov decision process (MDP), which is  defined by a tuple ($\mathcal{S}, \mathcal{A}, p_0, \mathcal{P}, r, \gamma)$.
$\mathcal{S}, \mathcal{A}$ represent state and action spaces, $p_0$ and $\mathcal{P}(s'|s, a)$ represent the initial state distribution and the dynamics, $r(s, a) \in \mathcal{R}$ represents the reward
function, and $\gamma \in (0, 1)$ represents the discount factor. $\mathbb{R}^{\mathcal{S} \times \mathcal{A}}=\{x: \mathcal{S} \times \mathcal{A} \rightarrow \mathbb{R}\}$ will denote the set of all functions in the state-action space and $\overline{\mathbb{R}}$ will denote the extended real numbers $\mathbb{R} \cup \{\infty\}$.  Sections~\ref{sec:IQLearn} and~\ref{sec:approach}
will work with
finite state and action spaces $\mathcal{S}$ and $\mathcal{A}$, but our algorithms and experiments later in the
paper use continuous environments. $\Pi$ is the set of all stationary stochastic
policies that take actions in $\mathcal{A}$ given states in $\mathcal{S}$. 
We work in the $\gamma$-discounted infinite horizon setting, and we will use an expectation with respect to a policy $\pi \in \Pi$ to denote an expectation with respect to the trajectory it generates: 
$\mathbb{E}_{\pi}[r(s, a)] \triangleq \mathbb{E} [\sum_{t=0}^\infty \gamma^{t} r(s_t, a_t)]$, where $s_0 \sim p_0$, $a_t \sim \pi(\cdot|s_t)$, and $s_{t+1} \sim  \mathcal{P}(\cdot|s_t, a_t)$ for $t \geq 0$. For a policy $\pi \in \Pi$, we define its occupancy measure $\rho_{\pi}: \mathcal{S} \times \mathcal{A} \rightarrow \mathbb{R}$ as $\rho_{\pi}(s, a)=(1-\gamma)\pi(a | s) \sum_{t=0}^{\infty} \gamma^{t} P\left(s_{t}=s | \pi\right)$. 
We refer to the expert policy as $\pi_E$ and its occupancy measure as $\rho_E$.
In practice, $\pi_E$ is unknown and we have access to a sampled dataset of demonstrations $\mathcal{D}$. 
For brevity, we refer to $\rho_\pi$ as $\rho$ for a learnt policy in the paper.
\paragraph{Soft $Q$-functions} For a reward $r \in \mathcal{R}$ and $\pi \in \Pi$, the soft Bellman operator
$\mathcal{B}^\pi_r: \mathbb{R^{\mathcal{S} \times \mathcal{A}}} \rightarrow  \mathbb{R^{\mathcal{S} \times \mathcal{A}}}$ is defined as
$
    (\mathcal{B}^\pi_r Q)(s, a) = r(s, a) + \gamma \mathbb{E}_{s' \sim \mathcal{P}(s,a)}V^\pi(s')
$ 
with $V^\pi(s) = \mathbb{E}_{a \sim \pi(\cdot|s)} \left[Q(s, a) - \log\pi(a |s) \right]$. The soft Bellman operator
is contractive~\cite{Haarnoja2018SoftAO} and defines a unique soft $Q$-function for $r$, given as the fixed point solution $Q=\mathcal{B}^\pi_r Q$ with $Q \in \Omega$.

\paragraph{Max Entropy Reinforcement Learning}

For a given reward function $r \in \mathcal{R} $, maximum entropy RL \cite{haarnoja2017reinforcement, Bloem2014InfiniteTH} aims to learn a policy that maximizes the expected cumulative discounted reward along with the entropy in each state: $\max _{\pi \in \Pi} \mathbb{E}_{\rho_\pi}[r(s, a)] + H(\pi)$, where $H(\pi) \triangleq \mathbb{E}_{\rho_\pi}[-\log \pi(a|s)]$ is the discounted causal entropy of the policy $\pi$.
The optimal policy satisfies  \cite{ziebart2010modeling, Bloem2014InfiniteTH}:
\begin{align}
\label{eq:pi}
 \pi^*(a|s) = \frac{1}{Z_s}\exp{(Q^*(s, a))},
\end{align}
where $Z_s$ is the normalization factor given as $\sum_{a^{\prime}} \exp \left(Q^*\left(s, a^{\prime}\right)\right)$ and $Q^*$ is the optimal soft $Q$-function.

$Q^*$ satisfies the soft-Bellman equation:
\begin{equation}
\label{eq:q}
Q^*(s, a) = (\mathcal{B}^* Q^*)(s,a) :=  r(s, a) + \gamma \mathbb{E}_{s' \sim \mathcal{P}(\cdot | s, a)} \Big[\log \sum_{a'} \exp(Q^*(s', a'))\Big].
\end{equation}
In continuous action spaces, $Z_s$ becomes computationally intractable
and soft actor-critic methods like SAC~\cite{Haarnoja2018SoftAO} can be used to learn an explicit policy.

\paragraph{Max Entropy Inverse Reinforcement Learning}
Given demonstrations sampled using the policy $\pi_E$, maximum entropy Inverse RL aims to recover the reward function in a family of functions $\mathcal{R} \subset \mathbb{R}^{\mathcal{S} \times \mathcal{A}}$  that rationalizes the expert behavior by solving the optimization problem: $
\max_{r \in \mathcal{R}} \min _{\pi \in \Pi} \mathbb{E}_{\rho_{E}}[r(s, a)]  - \left( \mathbb{E}_{\rho_\pi}[r(s, a)] + H(\pi) \right)$, where the expected reward of $\pi_E$ is empirically approximated using a dataset $\mathcal{D}$.
It looks for a reward function that assigns high reward to the expert policy and low reward to other ones, while searching for the best policy for the reward function in an inner loop.

The Inverse RL objective can be generalized
in terms of its occupancy measure, and with a convex reward regularizer $\psi: \mathbb{R}^{\mathcal{S} \times \mathcal{A}} \rightarrow \overline{\mathbb{R}}$ \cite{Ho2016GenerativeAI}
\begin{align}
\label{eq:irl}
 \underset{r \in \mathcal{R}}{\max} \min _{\pi \in \Pi} L(\pi, r) = \mathbb{E}_{\rho_{E}}[r(s, a)]  - \mathbb{E}_{\rho_\pi}[r(s, a)] - H(\pi) -\psi(r).
\end{align}

In general, for a non-restrictive set of reward functions $\mathcal{R} = \mathbb{R}^{\mathcal{S} \times \mathcal{A}}$, we can exchange the max-min resulting in an objective that minimizes the statistical distance parameterized by $\psi$, between the expert and the policy \cite{Ho2016GenerativeAI}
\begin{align}
\label{eq:orig_irl}
 \min _{\pi \in \Pi} \underset{r \in \mathcal{R}}{\max} \ L(\pi, r) = \min _{\pi \in \Pi} d_\psi(\rho_\pi, \rho_{E} ) - H(\pi),
\end{align}
with $d_\psi \triangleq \psi^*(\rho_E - \rho_\pi)$, where $\psi^*$ is the convex conjugate of $\psi$.



\section{Inverse soft Q-learning (IQ-Learn) Framework}
\label{sec:IQLearn}
A naive solution to the nested min-max IRL problem in (Eq. \ref{eq:irl}) involves (1) an outer loop learning rewards and (2) executing RL in an inner loop to find an optimal policy for them. However, we  know that this optimal policy can be obtained solely in terms of the soft $Q$-function (Eq. \ref{eq:pi}). Interestingly, as we will show later, the rewards can also be represented in terms of only $Q$ (Eq. \ref{eq:q}). 
Together, these observations suggest it might be possible to directly solve the IRL problem by  optimizing only over the $Q$-function, thus reducing the nested min-max problem to a single minimization problem over $Q$. 

To motivate the search of an imitation learning algorithm that depends only on the $Q$-function,  we characterize the space of $Q$-functions and policies obtained using Inverse RL. 
We will study $\pi \in \Pi$, $r \in \mathcal{R}$ and $Q$-functions $Q \in \Omega$, with fully general classes $\mathcal{R} = \Omega = \mathbb{R}^{\mathcal{S} \times \mathcal{A}} $. 
The full policy class $\Pi$ is convex, compact with $\pi_E \in \Pi$.


We start with the analysis developed in \cite{Ho2016GenerativeAI}: 
\begin{prop}
\label{lemma:3.0}
The regularized IRL objective $L({\pi, r})$ given by Eq. \ref{eq:irl}
is convex in the occupancy measure of the policy ($\rho_\pi$) and concave in the reward function (r), and for a strongly convex regularizer $\psi$ has a unique saddle point $(\pi^*, r^*)$.
\end{prop}


To characterize the $Q$-functions obtained using Inverse RL it is useful to transform the IRL problem over rewards to a problem over $Q$-functions.

Define the inverse soft Bellman operator 
$\mathcal{T}^\pi: \mathbb{R^{\mathcal{S} \times \mathcal{A}}} \rightarrow  \mathbb{R^{\mathcal{S} \times \mathcal{A}}}$ as 
\begin{align*}
    (\mathcal{T}^\pi Q)(s, a) = Q(s, a) - \gamma \mathbb{E}_{s' \sim \mathcal{P}(\cdot|s,a)}V^\pi(s').
\end{align*}
with $V^\pi(s) = \mathbb{E}_{a \sim \pi(\cdot|s)} \left[Q(s, a) - \log\pi(a |s) \right]$ as defined before. Then,
$\mathcal{T}^\pi$ inverts the soft Bellman operator $\mathcal{B}^\pi$ to map from $Q$-functions to rewards. We can get a one-to-one correspondence between $r$ and $Q$:

\begin{lemma}
\label{lemma:3.1}


For a fixed policy $\pi$, the inverse soft Bellman operator $\mathcal{T}^\pi$ is bijective, and for any $r \in \mathcal{R}$,  $Q=(\mathcal{T}^\pi)^{-1} r$ is the unique fixed point of the Bellman operator $\mathcal{B}^\pi_r$.
\end{lemma}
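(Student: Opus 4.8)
The plan is to exploit the fact that, for a fixed policy $\pi$, the inverse soft Bellman operator $\mathcal{T}^\pi$ is \emph{affine} in $Q$, so that bijectivity reduces to the invertibility of its linear part. Writing out the definition, $V^\pi(s') = \mathbb{E}_{a'\sim\pi(\cdot|s')}[Q(s',a')] - \mathbb{E}_{a'\sim\pi(\cdot|s')}[\log\pi(a'|s')]$, I would first separate the $Q$-dependent piece from the entropy piece, which is a fixed function independent of $Q$. This lets me decompose
\begin{align*}
(\mathcal{T}^\pi Q)(s,a) = \big((I - \gamma P^\pi)Q\big)(s,a) + c(s,a),
\end{align*}
where $P^\pi$ is the transition operator on state-action pairs induced by the dynamics $P$ followed by $\pi$, i.e. $(P^\pi Q)(s,a) = \mathbb{E}_{s'\sim P(s,a)}\mathbb{E}_{a'\sim\pi(\cdot|s')}[Q(s',a')]$, and $c$ collects the constant entropy term $\gamma\,\mathbb{E}_{s'\sim P(s,a)}\mathbb{E}_{a'\sim\pi}[\log\pi(a'|s')]$.

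The next step is to argue that $P^\pi$ is row-stochastic: its entries are nonnegative and sum to one over $(s',a')$, since $\sum_{s',a'}P(s'|s,a)\pi(a'|s')=1$. Working in the finite $\mathcal{S},\mathcal{A}$ setting of this section, $P^\pi$ is then a stochastic matrix with spectral radius $1$, so $\gamma P^\pi$ has spectral radius at most $\gamma<1$ and $I-\gamma P^\pi$ is invertible, with inverse given by the convergent Neumann series $\sum_{k\ge 0}(\gamma P^\pi)^k$. Invertibility of the linear part immediately yields that the affine map $\mathcal{T}^\pi$ is a bijection of $\mathbb{R}^{\mathcal{S}\times\mathcal{A}}$ onto itself, with $(\mathcal{T}^\pi)^{-1}r = (I-\gamma P^\pi)^{-1}(r-c)$.

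For the second claim I would relate $\mathcal{T}^\pi$ directly to the soft Bellman operator $\mathcal{B}^\pi$ for reward $r$. Since both operators share the term $\gamma\mathbb{E}_{s'}V^\pi(s')$, one obtains the identity $\mathcal{B}^\pi Q = Q + (r - \mathcal{T}^\pi Q)$. Hence $Q$ is a fixed point of $\mathcal{B}^\pi$ if and only if $\mathcal{T}^\pi Q = r$, that is, if and only if $Q = (\mathcal{T}^\pi)^{-1}r$. Invoking the already-stated fact that $\mathcal{B}^\pi$ is a contraction and therefore admits a unique fixed point, this fixed point must be exactly $(\mathcal{T}^\pi)^{-1}r$, establishing that $(\mathcal{T}^\pi)^{-1}r$ is the unique soft $Q$-function associated with $r$.

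I expect the only real obstacle to be the invertibility of $I-\gamma P^\pi$, which hinges on correctly identifying the stochastic structure of $P^\pi$; the rest is bookkeeping. A secondary point worth stating carefully is that the entropy term in $V^\pi$ is genuinely constant in $Q$, so that $\mathcal{T}^\pi$ is affine rather than merely nonlinear, and that restricting to finite state-action spaces justifies the matrix/spectral-radius argument—though the same conclusion follows in general from $\gamma P^\pi$ being a $\gamma$-contraction in the sup norm.
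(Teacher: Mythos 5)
Your proposal is correct and follows essentially the same route as the paper: write $\mathcal{T}^\pi$ in vector form with the transition operator $P^\pi$, invert $I-\gamma P^\pi$ via the Neumann series (justified by $\gamma<1$ and stochasticity of $P^\pi$), and identify $(\mathcal{T}^\pi)^{-1}r$ with the unique fixed point of the contractive soft Bellman operator $\mathcal{B}^\pi$. If anything, your explicit affine decomposition $(I-\gamma P^\pi)Q + c$ handles the constant entropy term more carefully than the paper's own algebra, which drops a factor of $\gamma P^\pi$ in front of $\log\bm{\pi}$ when solving for $\bm{Q}$.
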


The proof of this lemma is in Appendix~\ref{subappx:proofs_1}. For a policy $\pi$, we are justified in changing between rewards and the corresponding soft-Q functions using $\mathcal{T}^\pi$.
Thus, we can freely transform functions from the reward-policy space, $\Pi \times \mathcal{R}$, to the $Q$-policy space, $\Pi \times \Omega$, giving us the following lemma:

\begin{lemma}
\label{lemma:3.2}
Let $L(\pi, r) = \mathbb{E}_{\rho_{E}}[r(s, a)]  - \mathbb{E}_{\rho_\pi}[r(s, a)] - H(\pi) -\psi(r)$ and \\ $\mathcal{J}(\pi, Q)= \mathbb{E}_{\rho_{E}}[(\mathcal{T}^\pi Q)(s, a)]  - \mathbb{E}_{\rho_\pi}[(\mathcal{T}^\pi Q)(s, a)] - H(\pi) -\psi(\mathcal{T}^\pi Q)$, then for all policies $\pi \in \Pi$,
$$
L({\pi, r}) = \mathcal{J}(\pi, (\mathcal{T}^\pi)^{-1}r) \ \forall r \in \mathcal{R}, \text{ and } \mathcal{J}(\pi, Q)=L({\pi, \mathcal{T}^\pi Q}) \ \forall Q \in \Omega.
$$
\end{lemma}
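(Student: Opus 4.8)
The plan is to recognize that this identity is essentially definitional once Lemma~\ref{lemma:3.1} is in hand, so the proof reduces to careful bookkeeping of where the reward $r$ enters $L$ together with a check that the relevant transforms land in the correct function spaces. First I would note that $L(\pi, r)$ depends on $r$ only through the three terms $\mathbb{E}_{\rho_E}[r(s,a)]$, $\mathbb{E}_{\rho}[r(s,a)]$, and $\psi(r)$, since the entropy term $H(\pi)$ does not involve $r$ at all. Comparing this with the definition of $\mathcal{J}(\pi, Q)$, each of these three occurrences of $r$ is replaced verbatim by $\mathcal{T}^\pi Q$. Hence, provided $\mathcal{T}^\pi Q \in \mathcal{R}$ so that $L(\pi, \mathcal{T}^\pi Q)$ is well defined, we obtain $\mathcal{J}(\pi, Q) = L(\pi, \mathcal{T}^\pi Q)$ for every $Q \in \Omega$ directly by substitution.

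To make this rigorous I would invoke Lemma~\ref{lemma:3.1}, which asserts that $\mathcal{T}^\pi : \mathbb{R}^{\mathcal{S}\times\mathcal{A}} \to \mathbb{R}^{\mathcal{S}\times\mathcal{A}}$ is a bijection. Since $\mathcal{R} = \Omega = \mathbb{R}^{\mathcal{S}\times\mathcal{A}}$, for any $Q \in \Omega$ we have $\mathcal{T}^\pi Q \in \mathcal{R}$, so the substitution above is legitimate and the second identity $\mathcal{J}(\pi, Q) = L(\pi, \mathcal{T}^\pi Q)$ follows.

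For the first identity I would simply specialize the second. Fixing $\pi \in \Pi$ and $r \in \mathcal{R}$, set $Q = (\mathcal{T}^\pi)^{-1} r$, which is a well-defined element of $\Omega$ precisely because $\mathcal{T}^\pi$ is bijective (Lemma~\ref{lemma:3.1}). Applying the second identity to this $Q$ and using $\mathcal{T}^\pi (\mathcal{T}^\pi)^{-1} r = r$ yields $\mathcal{J}(\pi, (\mathcal{T}^\pi)^{-1} r) = L(\pi, \mathcal{T}^\pi (\mathcal{T}^\pi)^{-1} r) = L(\pi, r)$, as claimed, for all $\pi \in \Pi$ and all $r \in \mathcal{R}$.

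The main obstacle, such as it is, is conceptual rather than computational: essentially all the substance is carried by Lemma~\ref{lemma:3.1}, which supplies the bijection $\mathcal{T}^\pi$ and thereby guarantees both that $\mathcal{T}^\pi Q$ ranges over all of $\mathcal{R}$ and that $(\mathcal{T}^\pi)^{-1} r$ is a legitimate $Q$-function. The only point demanding a moment's care is confirming that every dependence of $L$ on $r$ is captured by the substitution — in particular that $H(\pi)$ contributes nothing and that the regularizer enters only through $\psi(\mathcal{T}^\pi Q)$ — so that no hidden term spoils the exact equality between the two objectives.
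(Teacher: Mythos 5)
Your proof is correct and follows the same route the paper takes: the paper treats Lemma~\ref{lemma:3.2} as an immediate consequence of the bijectivity of $\mathcal{T}^\pi$ established in Lemma~\ref{lemma:3.1}, obtained by substituting $r = \mathcal{T}^\pi Q$ into $L$ (and conversely $Q = (\mathcal{T}^\pi)^{-1} r$), exactly as you do. Your write-up simply makes explicit the bookkeeping that the paper leaves implicit.
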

The proof follows directly from Lemma \ref{lemma:3.1}.
These lemmas allow us to adapt the Inverse RL objective $L({\pi, r})$ to learning $Q$ through $\mathcal{J}(\pi, Q)$, i.e., working in the $Q$-policy space.

We can  simplify the new objective $\mathcal{J}(\pi, Q)$ by working with initial states $s_0$ sampled from the initial state distribution $p_0(s)$ (Lemma~\ref{lemma:telescopic} in Appendix) as follows: 
\begin{equation}
\label{eq:psi_irl}
\mathcal{J}(\pi, Q) = \mathbb{E}_{(s, a) \sim \rho_{E}}[Q(s,a) - \gamma \mathbb{E}_{s' \sim \mathcal{P}(\cdot | s,a)}V^\pi(s')] 
- (1- \gamma) \mathbb{E}_{s_0 \sim p_0}[V^\pi(s_0)] -  \psi(\mathcal{T}^\pi Q).
\end{equation}

where again $V^\pi(s) = \mathbb{E}_{a \sim \pi(\cdot|s)} \left[Q(s, a) - \log\pi(a |s) \right]$.

We are now ready to study $\mathcal{J}(\pi, Q)$, the Inverse RL problem in the $Q$-policy space. As the regularizer $\psi$ depends on both $Q$ and $\pi$, a general analysis over all functions in $\mathbb{R^{\mathcal{S} \times \mathcal{A}}}$ becomes too difficult. We restrict ourselves to regularizers induced by a convex function $g: \R \rightarrow \RE$ such that
\begin{align}
\label{eq:psi}
\psi_{g}(r)=
\mathbb{E}_{\rho_{E}} [{g}(r(s, a))].
\end{align}
This allows us to simplify our analysis to the set of all real functions while retaining generality\footnote{Averaging over the expert occupancy allows $\psi$ to adjust to arbitrary experts and accommodate multimodality.}. We further motivate this choice in Section~\ref{sec:approach}.

\begin{prop}
\label{prop:1}
In the Q-policy space, there exists a unique saddle point $(\pi^*, Q^*)$ that optimizes $\mathcal{J}$. i.e. $Q^* = {\argmax}_{Q \in \Omega} \min _{\pi \in \Pi} \  \mathcal{J}(\pi, Q)$ and $\pi^* = \argmin _{\pi \in \Pi} {\max}_{Q \in \Omega} \  \mathcal{J}(\pi, Q)$. Furthermore, $\pi^*$ and $r^* = \mathcal{T}^{\pi^*} Q^*$ are the solution to the Inverse RL objective $L(\pi, r)$.
\end{prop}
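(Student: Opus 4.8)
The plan is to transport the unique saddle point of the reward-space objective $L$, whose existence and uniqueness we are given from \cite{Ho2016GenerativeAI}, across the change of variables of Lemma~\ref{lemma:3.2}, namely $\mathcal{J}(\pi,Q)=L(\pi,\mathcal{T}^\pi Q)$. One direction is immediate. For fixed $\pi$, Lemma~\ref{lemma:3.1} makes $Q\mapsto\mathcal{T}^\pi Q$ a bijection of $\Omega$ onto $\mathcal{R}$, so $\max_{Q\in\Omega}\mathcal{J}(\pi,Q)=\max_{r\in\mathcal{R}}L(\pi,r)$ identically; minimizing over $\pi$ gives $\min_\pi\max_Q\mathcal{J}=\min_\pi\max_r L=L(\pi^*,r^*)$ and shows the outer minimizer is the same $\pi^*$ that solves the IRL saddle. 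I would then define $Q^*=(\mathcal{T}^{\pi^*})^{-1}r^*$, equivalently $r^*=\mathcal{T}^{\pi^*}Q^*$, and aim to prove $(\pi^*,Q^*)$ is the unique saddle point of $\mathcal{J}$.

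The harder $\max_Q\min_\pi$ direction I would attack through two observations. First, for fixed $\pi$ the map $Q\mapsto\mathcal{T}^\pi Q=Q-\gamma\mathbb{E}_{s'}V^\pi(s')$ is affine, so in \eqref{eq:psi_irl} the first two terms are affine in $Q$ and $-\psi_g(\mathcal{T}^\pi Q)=-\mathbb{E}_{\rho_E}[g(\mathcal{T}^\pi Q)]$ is concave because $g$ is convex; hence $\mathcal{J}(\pi,\cdot)$ is concave on $\Omega$. Second, and more usefully, $\mathcal{J}(\pi,Q)$ depends on $\pi$ only through the soft values $V^\pi=(V^\pi(s))_s$, and $\mathcal{T}^\pi Q$ is \emph{affine} in this vector. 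Reparametrizing by $v=V^\pi$, the reduced objective is a sum of affine terms and $-\mathbb{E}_{\rho_E}[g(Q-\gamma\mathbb{E}_{s'}v)]$, hence concave in $v$, while the attainable set $\{V^\pi:\pi\in\Pi\}$ is the box $\prod_s[\min_a Q(s,a),\ \log\sum_a e^{Q(s,a)}]$ whose top vertex is realized exactly by the Boltzmann policy $\pi_Q(a\mid s)\propto e^{Q(s,a)}$ of \eqref{eq:pi}. Minimizing a concave function over this box places the minimum at a vertex, and the content of this step is to show it is the top (soft-max) vertex, i.e. $\min_\pi\mathcal{J}(\pi,Q)=\mathcal{J}(\pi_Q,Q)$.

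Granting this, the pieces assemble cleanly. Because $\pi^*$ is the max-entropy-optimal policy for $r^*$ it is precisely the Boltzmann policy of its soft $Q$-function $Q^*$, so $\pi_{Q^*}=\pi^*$ and $\min_\pi\mathcal{J}(\pi,Q^*)=\mathcal{J}(\pi^*,Q^*)=L(\pi^*,r^*)$. Together with weak duality $\max_Q\min_\pi\mathcal{J}\le\min_\pi\max_Q\mathcal{J}=L(\pi^*,r^*)$, this shows $Q^*$ attains the max-min, the two optimal values coincide, and $(\pi^*,Q^*)$ is a saddle point. Uniqueness follows from that of $L$: any saddle $(\hat\pi,\hat Q)$ must satisfy $\hat\pi=\argmin_\pi\max_Q\mathcal{J}=\pi^*$, after which $\hat Q$ maximizes $Q\mapsto\mathcal{J}(\pi^*,Q)=L(\pi^*,\mathcal{T}^{\pi^*}Q)$, whose maximizer is $(\mathcal{T}^{\pi^*})^{-1}r^*=Q^*$ by bijectivity and uniqueness of $r^*$. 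Finally $r^*=\mathcal{T}^{\pi^*}Q^*$ and $\mathcal{J}(\pi^*,Q^*)=L(\pi^*,r^*)$ give, via Lemma~\ref{lemma:3.2}, that $(\pi^*,r^*)$ solves the Inverse RL objective.

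The main obstacle is the identification of the inner minimizer as $\pi_Q$. Since the reduced objective is concave in $v=V^\pi$ over the box, its minimum sits at a vertex, but concavity alone does not force that vertex to be the all-soft-max corner; I expect to need that the partial derivatives in each $v(s)$ stay nonpositive throughout the box, which reduces to controlling the sign of $1-g'(\mathcal{T}^\pi Q)$ using the convexity of $g$ together with the first-order optimality relation $1-g'(r^*)=\rho^*/\rho_E\ge 0$ (with $\rho^*=\rho_{\pi^*}$) inherited from $r^*$ maximizing $L(\pi^*,\cdot)$. Making this sign control rigorous over the whole attainable range of $\mathcal{T}^\pi Q$ is the step I expect to require the most care.
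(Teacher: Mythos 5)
Your overall route is essentially the paper's: transport the unique saddle point of $L$ through the affine bijection $r=\mathcal{T}^\pi Q$ of Lemma~\ref{lemma:3.1}, observe that $\min_\pi\max_Q\mathcal{J}=\min_\pi\max_r L$ is immediate, and reduce everything to the one nontrivial claim that $\min_\pi\mathcal{J}(\pi,Q)$ is attained at the Boltzmann policy $\pi_Q$ (this is exactly Proposition~\ref{prop:2}, proved as Lemma~\ref{lemma:C2} in Appendix C). Two of your choices are genuinely different and both defensible: you replace the paper's appeal to Sion's minimax theorem (concavity in $Q$ from Lemma C.1 plus quasiconvexity in $\pi$ from Lemma~\ref{lemma:C2}) by weak duality plus explicit attainment at $(\pi^*,Q^*)$, which is cleaner because it sidesteps the quasiconvexity claim; and you prove the inner-minimizer identification by noting $\mathcal{J}(\pi,Q)$ depends on $\pi$ only through $v=V^\pi$, whose attainable set is the box $\prod_s[\min_aQ(s,a),\log\sum_ae^{Q(s,a)}]$, and showing the objective is coordinate-wise non-increasing in $v$ (you do not even need the vertex-of-a-box argument once you have $\partial\mathcal{J}/\partial v(s)=-\gamma\,\mathbb{E}_{\rho_E}[\phi'(\mathcal{T}^\pi Q)\,\mathcal{P}(s\mid\cdot)]-(1-\gamma)p_0(s)\le 0$). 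The paper instead writes $V^\pi(s)=-D_{KL}(\pi\,\|\,\pi_Q)+\log Z_s$ and argues via convexity of $\mathcal{T}^\pi Q$ in $\pi$; both arguments hinge on the same condition $\phi'\ge 0$.

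The genuine gap is in how you propose to obtain that condition. You hope to propagate the first-order relation $\phi'(r^*)=1-g'(r^*)=\rho^*/\rho_E\ge 0$ to the whole attainable range of $\mathcal{T}^\pi Q$ ``using the convexity of $g$,'' but convexity of $g$ makes $g'$ non-decreasing, hence $\phi'=1-g'$ non-increasing: $\phi'(r^*)\ge 0$ only controls the sign for rewards \emph{below} $r^*$, whereas for every $\pi$ one has $V^\pi\le V^{\pi_{Q^*}}$ pointwise and therefore $\mathcal{T}^\pi Q^*\ge \mathcal{T}^{\pi^*}Q^*=r^*$ — exactly the side on which $\phi'$ may turn negative. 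So the sign control cannot be derived from optimality at $r^*$; it must be assumed or localized. The paper does the former in Lemma~\ref{lemma:C2} (it restricts to non-decreasing $\phi$, which holds for the regularizers actually used), and then removes the assumption in the ``Generalization'' subsection via a feasibility-region argument (Lemmas~\ref{lemma:C3} and \ref{lemma:C4}): any $Q$-maximizer for a fixed $\pi$ must satisfy $\phi'(\mathcal{T}^\pi Q)=\rho/\rho_E\ge 0$ componentwise, hence lies in the region where $\phi$ is non-decreasing, and any saddle point must lie on the intersection of the $Q$-argmax and $\pi$-argmin curves, hence inside that region. If you either add the standing hypothesis that $\phi$ is non-decreasing on $\mathcal{R}_\psi$ or import that localization step, the rest of your argument, including the uniqueness transfer, goes through.
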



Thus we have,
 ${\max}_{Q \in \Omega} \min _{\pi \in \Pi} \  \mathcal{J}(\pi, Q) = {\max}_{r \in \mathcal{R}} \min _{\pi \in \Pi} \  L(\pi, r)$. And the maxima $Q^*$ is simply the optimal soft Q-function for the reward $r^*$.

This tells us, even after transforming to $Q$-functions we have retained the saddle point property of the original IRL objective and optimizing $\mathcal{J}(\pi, Q)$ recovers this saddle point.
In the $Q$-policy space, we can get an additional property:

\begin{prop}
\label{prop:2}
For a fixed $Q$, $\argmin _{\pi \in \Pi} \ \mathcal{J}(\pi, Q)$ is simply the solution to max entropy RL with rewards $r = \mathcal{T}^\pi Q$. Thus, using Eq.~\ref{eq:pi}, the argmin policy satisfies
\begin{align*}
\pi_Q(a|s) = \frac{1}{Z_s}\exp(Q(s, a)),
\end{align*}
with normalization factor $Z_s = \sum_a \exp{Q(s, a)}$. Thus, the policy minima for a given $Q$ describes a manifold in the Q-policy space (Figure~\ref{fig:q_space}).
\end{prop}

Proposition \ref{prop:1} and \ref{prop:2} tell us that if we know $Q$, then the inner optimization problem in terms of policy is trivial, and obtained in a closed form! Thus, we can recover an objective that only requires learning $Q$: 
\begin{equation}
 \underset{Q \in \Omega}{\max} \min _{\pi \in \Pi} \  \mathcal{J}(\pi, Q) = \underset{Q \in \Omega}{\max} \  \mathcal{J}\left(\pi_Q, Q\right)
\end{equation}
Lastly, we have:
\begin{prop}
\label{prop:3}
Let $\mathcal{J^*}(Q) = \mathcal{J}\left(\pi_Q, Q\right)$. Then the new objective $\mathcal{J^*}$ is concave in $Q$.
\end{prop}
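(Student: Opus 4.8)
The plan is to exploit the variational characterization of $\mathcal{J}^{*}$ supplied by Proposition 3.4 and then invoke the elementary fact that a pointwise infimum of concave functions is concave. By Proposition 3.4, for every fixed $Q$ the policy $\pi_Q$ is precisely the minimizer of $\mathcal{J}(\cdot, Q)$ over $\Pi$, so I would first rewrite
\[
\mathcal{J}^{*}(Q) = \mathcal{J}(\pi_Q, Q) = \min_{\pi \in \Pi} \mathcal{J}(\pi, Q).
\]
This reduces the claim to showing that $Q \mapsto \mathcal{J}(\pi, Q)$ is concave for each fixed $\pi$, after which concavity of $\mathcal{J}^{*}$ follows immediately: an infimum (here a minimum, attained at $\pi_Q$) over $\pi$ of a family of functions each concave in $Q$ is again concave in $Q$.

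Next I would establish concavity of $\mathcal{J}(\pi, Q)$ in $Q$ for fixed $\pi$. The crucial observation is that the inverse soft Bellman operator $\mathcal{T}^\pi$ is \emph{affine} in $Q$: both $(\mathcal{T}^\pi Q)(s,a) = Q(s,a) - \gamma \mathbb{E}_{s' \sim \mathcal{P}(\cdot|s,a)} V^\pi(s')$ and $V^\pi(s) = \mathbb{E}_{a \sim \pi(\cdot|s)}[Q(s,a) - \log \pi(a|s)]$ are affine maps of $Q$ once $\pi$ is held fixed. Using the form of $\mathcal{J}$ from Lemma 3.2, the terms $\mathbb{E}_{\rho_{E}}[(\mathcal{T}^\pi Q)(s,a)] - \mathbb{E}_{\rho}[(\mathcal{T}^\pi Q)(s,a)]$ are therefore affine in $Q$, while $-H(\pi)$ does not depend on $Q$ at all.

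It remains to handle the regularizer term $-\psi(\mathcal{T}^\pi Q)$. Since $\psi$ is convex (in particular $\psi_g(r) = \mathbb{E}_{\rho_{E}}[g(r(s,a))]$ with $g$ convex, where convexity is inherited pointwise and preserved under the expectation), and $Q \mapsto \mathcal{T}^\pi Q$ is affine, the composition $Q \mapsto \psi(\mathcal{T}^\pi Q)$ is convex; hence $-\psi(\mathcal{T}^\pi Q)$ is concave in $Q$. Adding an affine term and a constant to a concave function preserves concavity, so $\mathcal{J}(\pi, Q)$ is concave in $Q$ for each fixed $\pi$, which together with the first paragraph completes the argument.

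The step I expect to require the most care is verifying that $\mathcal{T}^\pi$ is genuinely affine in $Q$ — one must confirm that $V^\pi$, and through it the discounted next-state value entering $\mathcal{T}^\pi$, contributes only a linear-plus-constant dependence on $Q$. This hinges on $\pi$ being fixed, so that the entropy offset $-\log \pi(a|s)$ is a constant rather than a nonlinear function of $Q$; it is exactly this point that could break if one tried to prove concavity of $\mathcal{J}^{*}$ directly, since there $\pi_Q$ itself varies with $Q$. Everything else is standard convexity bookkeeping: affine precomposition preserves convexity, negation flips convexity to concavity, and infima of concave functions remain concave.
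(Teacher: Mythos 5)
Your proof is correct, but it takes a genuinely different route from the paper's. The paper proves Proposition~\ref{prop:3} by direct computation on the closed form $\mathcal{J}^*(Q) = \mathbb{E}_{\rho_E}[\phi(Q(s,a) - \gamma\mathbb{E}_{s'}V^*(s'))] - (1-\gamma)\mathbb{E}_{p_0}[V^*(s_0)]$ with $V^*(s) = \log\sum_a \exp Q(s,a)$: log-sum-exp is convex in $Q$, so $Q - \gamma\mathbb{E}_{s'}[V^*(s')]$ is concave, composing with the non-decreasing concave $\phi$ preserves concavity, and the remaining term is the negative of a convex function. You instead use the variational identity $\mathcal{J}^*(Q) = \min_{\pi\in\Pi}\mathcal{J}(\pi,Q)$ (licensed by Proposition~\ref{prop:2}), concavity of $\mathcal{J}(\pi,\cdot)$ in $Q$ for each fixed $\pi$, and the fact that a pointwise minimum of concave functions is concave. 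Your middle step is exactly the content of the paper's Lemma~C.1 (whose proof, despite the stated notation, fixes $\pi$ and varies $Q$, using the same affineness of $\mathcal{T}^\pi$ and convexity of $\psi$ that you invoke), so your argument is essentially a recombination of Lemma~C.1 with Proposition~\ref{prop:2} rather than a fresh computation. What your route buys is modularity: it never needs the explicit log-sum-exp form of $V^*$ and would survive in any setting where the inner minimizer is attained on the manifold $\pi_Q$. What it costs is that the monotonicity of $\phi$ is no longer visible in your concavity argument itself; it is hidden inside the identification of $\pi_Q$ as the global minimizer (the paper's Lemma~C.2), whereas the paper's direct proof surfaces that assumption explicitly in the composition rule. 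The hypotheses required are the same either way, so both proofs are valid under the paper's standing assumptions.
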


Thus, this new optimization objective is well-behaved and has a unique maxima $Q^*$ that gives the required saddle point as $(\pi_{Q^*}, Q^*)$. 

\begin{wrapfigure}{r}{0.7\textwidth}
\vskip -10pt
\hspace*{-0.1cm}\includegraphics[width=0.72\textwidth]{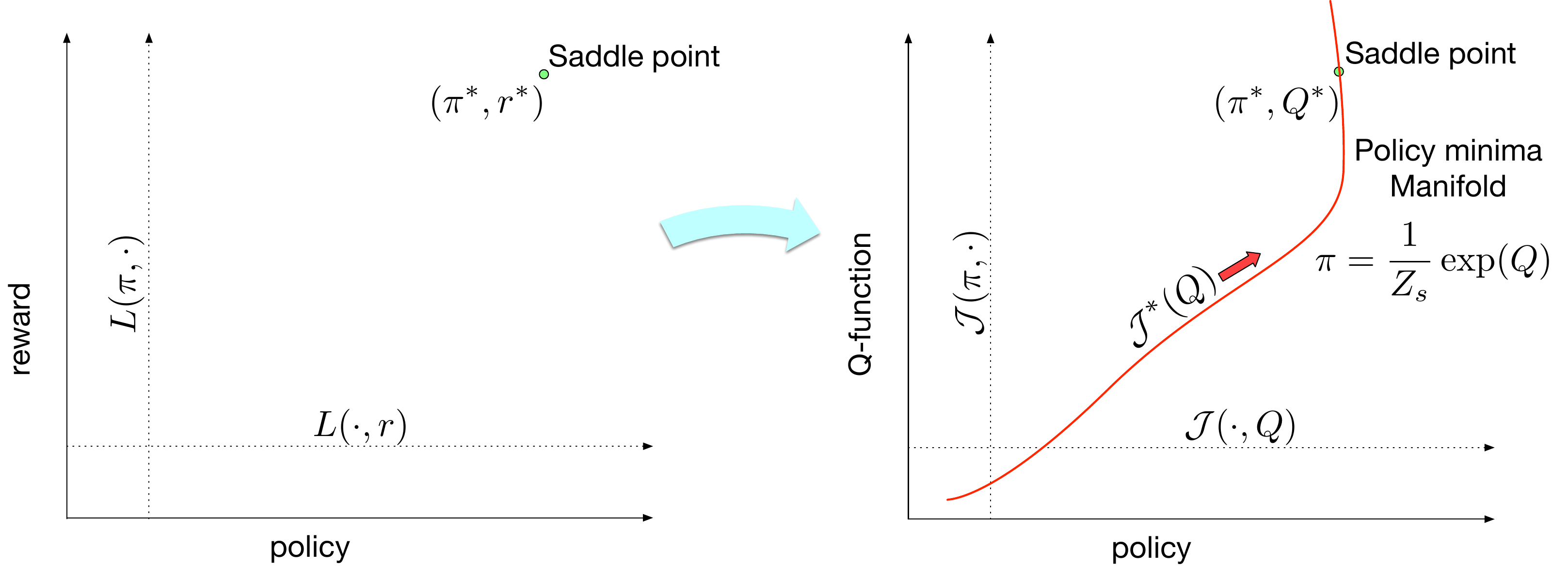}
\vskip -5pt
\caption{\small{Properties of IRL objective in reward-policy space and Q-policy space.}}
\label{fig:q_space}
\vskip -10pt
\end{wrapfigure}

In Appendix~\ref{Appx:C} we expand on our analysis and characterize the behavior for different choices of regularizer $\psi$, while giving proofs of all our propositions. Figure~\ref{fig:q_space} summarizes the properties for the IRL objective: there exists an optimal policy manifold depending on $Q$, allowing optimization along it (using $\mathcal{J^*}$) to converge to the saddle point.
We further present analysis of IL methods that learn $Q$-functions like SQIL~\cite{Reddy2020SQILIL} and ValueDICE~\cite{Kostrikov2020Imitation} and find subtle fallacies affecting their learning. 

Note that although the same analysis holds in the reward-policy space, the optimal policy manifold depends on $Q$, which isn't trivially known unlike when we work directly in the Q-policy space.

\section{Approach}
\label{sec:approach}
In this section, we develop our inverse soft-Q learning (IQ-Learn) algorithm, such that it recovers the optimal soft $Q$-function for an MDP from a given expert distribution. We start by learning energy-based models for the policy similar to soft $Q$-learning and later learn an explicit policy similar to actor-critic methods.

\subsection{General Inverse RL Objective}
For designing a practical algorithm using regularizers of the form $\psi_g$ (from Eq.~\ref{eq:psi}), we define $g$ using a concave function $\phi: \mathcal{R}_\psi \rightarrow \mathbb{R}$, such that $
\begin{array}{l}
g(x)=\left\{\begin{array}{ll}
x - \phi(x) & \text { if } x \in \mathcal{R_\psi} \\
+\infty & \text { otherwise }
\end{array}\right.
\end{array} \\
$ with the rewards constrained in $R_\psi$.

For this choice of $\psi$, the Inverse RL objective  $L(\pi, r)$ takes the form of  Eq.~\ref{eq:orig_irl} with a distance measure:
\begin{align}
\label{eq:gen_dist}
d_\psi(\rho, \rho_{E} ) = \underset{r \in \mathcal{R_\psi}}{\max}\mathbb{E}_{\rho_{E}}[\phi(r(s, a))]  - \mathbb{E}_{\rho}[r(s, a)],
\end{align}
This forms a general learning objective that allows the use of a wide-range of statistical distances including Integral Probability Metrics (IPMs) and f-divergences (see Appendix \ref{appx:B}).\footnote{We recover IPMs when using identity $\phi$ and restricted reward family ${\cal R}$.}


\subsection{Choice of Statistical Distances}
While choosing a practical regularizer, it can be useful to obtain certain properties on the reward functions we recover. 
Some (natural) nice properties are: having rewards bounded in a range, learning smooth functions or enforcing a norm-penalty.

In fact, we find these properties correspond to the Total Variation distance, the Wasserstein-1 distance and the $\chi^2$-divergence respectively. The regularizers and the induced statistical distances are summarized in Table~\ref{tbl:regularizer}:
\begin{table}[h]
	\vskip-18pt
    \small
	\centering
	\caption{\small {Enforced reward property, corresponding regularizer $\psi$ and statistical distance ($R_{\text{max}}, K, \alpha \in \mathbb{R}^+$ )}}   %
	\label{tbl:regularizer}
	\def\arraystretch{1.3}
	\begin{tabular}{l|c|c|}
		Reward Property & $\psi$ & $d_\psi$\\ \hline
        Bound range & $\psi = 0$ if $\left|r\right|  \leq R_{\text{max}}$ and $+\infty$ otherwise & $2 R_{\text{max}}\cdot\operatorname{TV}(\rho, \rho_{E})$ \\
		Smoothness & $\psi = 0$ if $\norm{r}_{\text{Lip}} \leq K$ and $+\infty$ otherwise & $K \cdot W_1(\rho, \rho_E)$ \\
		L2 Penalization & $\psi(r) = \alpha r^2$ & $\frac{1}{4 \alpha}\cdot  \chi^2(\rho, \rho_{E})$ \\ 
 \hline
	\end{tabular}
	\vskip-10pt
\end{table}

We find that these choices of regularizers\footnote{The additional scalar terms scale the entropy regularization strength and can be ignored in practice.} work very well in our experiments. In Appendix \ref{appx:B}, we further give a table for the well known  $f$-divergences, the corresponding $\phi$ and the learnt reward estimators, along with a result ablation on using different divergences. Compared to $\chi^2$, we find other $f$-divergences like  Jensen-Shannon result in similar performances but are not as readily interpretable.

\subsection{Inverse soft-Q update (Discrete control)}
Optimization along the optimal policy manifold gives the concave objective (Prop~\ref{prop:3}):
\begin{align}
\label{eq:method1}
    \underset{Q \in \Omega}{\max} \ \mathcal{J^*}(Q) = \mathbb{E}_{\rho_{E}}[\phi(Q(s, a) - \gamma \mathbb{E}_{s' \sim \mathcal{P}(\cdot|s,a)}V^*(s'))]  - (1- \gamma) \mathbb{E}_{\rho_0}[V^*(s_0)],
\end{align}
with $V^*(s) = \log \sum_{a} \exp {Q(s, a)}$.

For each $Q$, we get a corresponding reward $r(s, a)=Q(s, a)-\gamma \mathbb{E}_{s^{\prime} \sim \mathcal{P}(\cdot|s,a)} [\log \sum_{a^{\prime}} \exp {Q\left(s^{\prime}, a^{\prime}\right)}]
$. This correspondence is unique (Lemma \ref{lemma:Tstar} in Appendix), and every update step can be seen as finding a better reward for IRL. 

Note that estimating $V^*(s)$ exactly is only possible in discrete action spaces. Our objective forms a variant of soft-Q learning: to learn the optimal $Q$-function given an expert distribution.

\subsection{Inverse soft actor-critic update (Continuous control)}
\label{sec:Method1}
In continuous action spaces, it might not be possible to exactly obtain the optimal policy $\pi_Q$, which forms an energy-based model of the $Q$-function, and we use an explicit policy $\pi$ to approximate $\pi_Q$.

For any policy $\pi$, we have a objective (from Eq.~\ref{eq:psi_irl}):
\begin{equation}
 \label{eq:method2}
\mathcal{J}(\pi, Q) = \mathbb{E}_{\rho_{E}}[\phi(Q - \gamma \mathbb{E}_{s' \sim \mathcal{P}(\cdot|s,a)}V^\pi(s'))]  - (1- \gamma) \mathbb{E}_{\rho_0}[V^\pi(s_0)].
\end{equation}
For a fixed $Q$, soft actor-critic (SAC) update: $
\underset{\pi}{\max} \, \mathbb{E}_{s \sim \mathcal{D}, a \sim \pi(\cdot | s)} [Q(s, a) - \log \pi(a|s)]
$, brings $\pi$ closer to $\pi_Q$ while always minimizing Eq.~\ref{eq:method2} (Lemma \ref{lemma:sac} in Appendix).
Here $\mathcal{D}$ is the distribution of previously sampled states, or a replay buffer.

Thus, we obtain the modified actor-critic update rule to learn $Q$-functions from the expert distribution:
\begin{enumerate}
    \item For a fixed $\pi$, optimize $Q$ by maximizing $\mathcal{J}(\pi, Q)$.
    \item For a fixed $Q$, apply SAC update to optimize $\pi$ towards $\pi_Q$. 
\end{enumerate}

This differs from ValueDICE \cite{Kostrikov2020Imitation}, where the actor is updated adverserially and the objective may not always converge (Appendix \ref{Appx:C}).

\section{Practical Algorithm}

Algorithm~\ref{alg:ALG1} shows our $Q$-learning and actor-critic variants, with differences with conventional RL algorithms in red (we optimize -$\mathcal{J}$ to use gradient descent). We can implement our algorithm IQ-Learn in \textbf{15} lines of code on top of standard implementations of (soft) DQN~\cite{haarnoja2017reinforcement} for discrete control or soft actor-critic (SAC)~\cite{Haarnoja2018SoftAO} for continuous control, with a change on the objective for the $Q$-function. Default hyperparameters from \cite{haarnoja2017reinforcement, Haarnoja2018SoftAO} work well, except for tuning the entropy regularization. Target networks were helpful for continuous control. We elaborate details in Appendix~\ref{appx:D}.





\subsection{Training methodology}
\label{sec:training}
\looseness=-1
Corollary \ref{cor:xx} states $\mathbb{E}_{(s, a) \sim \mu}[V^\pi(s)-\gamma\mathbb{E}_{s' \sim \mathcal{P}(\cdot|s,a)} V^\pi(s')] = (1-\gamma) \mathbb{E}_{s \sim p_0}[V^\pi(s)]$, where $\mu$ is any policy's occupancy. We use this to stabilize training instead of using Eq.~\ref{eq:method1} directly.

\begin{wrapfigure}{R}{0.63\textwidth}
\vskip -15pt
\centering
\begin{minipage}[t]{.63\textwidth}
\begin{algorithm}[H]
    \small
    \caption{Inverse soft Q-Learning (both variants)}
    \label{alg:ALG1}
    \begin{algorithmic}[1]
    \State Initialize Q-function $Q_\theta$, and optionally a policy $\pi_\phi$
    \For {step $t$ in \{1...N\}}
        \State Train Q-function using objective from \autoref{eq:method1}:
        \newline \hspace*{1.25em}  $\theta_{t+1} \leftarrow \theta_{t} -\alpha_Q \nabla_{\theta} [{\color{red} \mathcal{-J}(\theta)}]$ 
        \newline \hspace*{1.25em} (Use $V^*$ for Q-learning and $V^{\pi_\phi}$ for actor-critic)
        \State (only with actor-critic) Improve policy $\pi_\phi$ with SAC style  \newline \hspace*{1.25em} actor update:
        \newline \hspace*{1.25em}  $\phi_{t+1} \leftarrow \phi_{t} + \alpha_\pi \nabla_{\phi} \mathbb{E}_{s \sim \mathcal{D}, a \sim \pi_\phi(\cdot | s)} [Q(s, a) - \log \pi_\phi(a|s)]$
    \EndFor
    \end{algorithmic}
\end{algorithm}
\end{minipage}

\begin{minipage}[t]{.63\textwidth}
\begin{algorithm}[H]
    \small
    \caption{Recover policy and reward}
    \label{alg:ALG2}
    \begin{algorithmic}[1]
    \State Given trained Q-function $Q_\theta$, and optionally a trained policy $\pi_\phi$
    \State Recover policy $\pi$:
    \newline \hspace*{1.25em} (Q-learning) $\pi \defeq \frac{1}{Z}{\exp Q_\theta}$ 
    \newline \hspace*{1.25em} (actor-critic) $\pi \defeq \pi_\phi$ 
    \State For state $\mathbf{s}$, action $\mathbf{a}$ and $\mathbf{s'} \sim \mathcal{P}(\cdot | \mathbf{s}, \mathbf{a})$
    \State Recover reward $r(\mathbf{s}, \mathbf{a}, \mathbf{s'})=Q_\theta(\mathbf{s}, \mathbf{a})-\gamma V^{\pi}\left(\mathbf{s'}\right)$
    \end{algorithmic}
\end{algorithm}
\end{minipage}
\vskip-20pt
\end{wrapfigure}

\looseness=-1
\textbf{Online}:
Instead of directly estimating $\mathbb{E}_{p_0}[V^\pi(s_0)]$ in our algorithm, we can sample $(s, a, s')$ from a replay buffer and get a single-sample estimate $\mathbb{E}_{(s, a, s') \sim \text{replay}}[V^\pi(s) - \gamma V^\pi(s')] $.
This removes the issue where we are only optimizing $Q$ in the inital states resulting in overfitting of $V^\pi(s_0)$, and improves the stability for convergence in our experiments. We find sampling half from the policy buffer and half from the expert distribution gives the best performances. 
Note that this is makes our learning online, requiring environment interactions.

\textbf{Offline}:
Although $\mathbb{E}_{p_0}[V^\pi(s_0)]$ can be estimated offline we still observe an overfitting issue. Instead of requiring policy samples we use only expert samples to estimate $\mathbb{E}_{(s, a, s') \sim \text{expert}}[V^\pi(s) - \gamma V^\pi(s')] $ to sufficiently approximate the term. This methodology gives us state-of-art results for offline IL.

\subsection{Recovering rewards}
Instead of the conventional reward function $r(s, a)$ on state and action pairs, our algorithm allows recovering rewards for each transition $(s, a, s')$ using the learnt $Q$-values as follows:
\begin{equation}
\label{eq:reward}
r(s, a, s^{\prime})=Q(s, a)-\gamma V^{\pi}\left(s^{\prime}\right)
\end{equation}
Now, $
\mathbb{E}_{s' \sim \mathcal{P}(\cdot|s,a)} [Q(s, a)-\gamma V^{\pi}\left(s^{\prime}\right)] = Q(s, a)-\gamma \mathbb{E}_{s' \sim \mathcal{P}(\cdot|s,a)} [ V^{\pi}\left(s^{\prime}\right)] = \mathcal{T}^\pi Q(s, a)$. This is just the reward function $r(s, a)$ we want. So by marginalizing over next-states, our expression correctly recovers the reward over state-actions. Thus, Eq.~\ref{eq:reward} gives the reward over transitions.

Our rewards require $s'$ which can be sampled from the environment, or by using a dynamics model.

\subsection{Implementation of Statistical Distances}
Implementing TV and $W_1$ distances is fairly trivial and we give details in Appendix \ref{appx:B}. For the \textbf{$\chi^2$-divergence}, we note that it corresponds to $\phi(x) = x - \frac{1}{4\alpha}x^2$. On substituting in Eq. \ref{eq:method1}, we get
\begin{equation*}
\label{eq:method_chi}
\resizebox{\textwidth}{!}{$
    \underset{Q \in \Omega}{\max} \ \mathbb{E}_{\rho_{E}}[(Q(s, a) - \gamma \mathbb{E}_{s' \sim \mathcal{P}(\cdot|s,a)}V^*(s'))]  - (1- \gamma) \mathbb{E}_{p_0}[V^*(s_0)] -\frac{1}{4\alpha} \mathbb{E}_{\rho_{E}}[(Q(s, a) - \gamma \mathbb{E}_{s' \sim \mathcal{P}(\cdot|s,a)}V^*(s'))^2]$}
\end{equation*}
In a fully offline setting, this can be further simplified as (using the offline methodology in Sec~\ref{sec:training}):
\begin{equation}
\label{eq:method_chi2}
    \underset{Q \in \Omega}{\min} \  -\mathbb{E}_{\rho_{E}}[(Q(s, a) - V^*(s))]  +\frac{1}{4\alpha} \mathbb{E}_{\rho_{E}}[(Q(s, a) - \gamma \mathbb{E}_{s' \sim \mathcal{P}(\cdot|s,a)}V^*(s'))^2]
\end{equation}

This is interestingly the same as the $Q$-learning objective in CQL \cite{kumar2020conservative}, a state-of-art method for offline RL  (using 0 rewards), and shares similarities with regularized behavior cloning~\cite{Reddy2020SQILIL}.\footnote{The simplification to get Eq.~\eqref{eq:method_chi2} is not applicable in the online IL setting where our method differs.}




\subsection{Learning state-only reward functions}

Previous works like AIRL \cite{fu2018learning} propose learning rewards that are only function of the state, and claim that these form of reward functions generalize between different MDPs. We find our method can predict state-only rewards by using the policy and expert state-marginals with a modification to Eq. \ref{eq:method1}:
\begin{align*}
\label{eq:state-rewards}
    \underset{Q \in \Omega}{\max} \ \mathcal{J^*}(Q) = \mathbb{E}_{s \sim \rho_{E}(s)}[\mathbb{E}_{a \sim \pi(\cdot|s)}[\phi(Q(s, a) - \gamma \mathbb{E}_{s' \sim \mathcal{P}(\cdot|s,a)}V^*(s'))]]  - (1- \gamma) \mathbb{E}_{p_0}[V^*(s_0)],
\end{align*}
with $\pi$ being here a stop gradient of $\pi_Q$. 
Interestingly, our objective no longer depends on the the expert actions $\pi_E$ and can be used for IL using only observations. For the sake of brevity, we expand on this in Appendix~\ref{sec:state_rewards}.




\section{Related Work}

\textbf{Classical IL}: 
Imitation learning has a long history, with early works using supervised learning to match a policy's actions to those of the expert \cite{hayes1994robot,sammut1992learning}.
A significant advance was made with the formulation of IL as the composition of RL and IRL~\cite{ng2000algorithms, abbeel2004apprenticeship, Ziebart2008MaximumEI},
recovering the expert's policy by inferring the expert's reward function, then finding the policy which maximizes reward under this reward function.
These early approaches required a hand-designed featurization of the MDP, limiting their applicability to complex MDPs. 
In this setting, early approaches ~\cite{dvijotham2010inverse, piot2016bridging} noted a formal equivalence between IRL and IL using an inverse Bellman operator similar to our own. 

\textbf{Online IL}: More recent work aims to leverage the power of modern machine learning approaches to learn good featurizations and extend IL to complex settings. Recent work generally falls into one of two settings: online or offline.
In the online setting, the IL algorithm is able to interact with the environment to obtain dynamics information. GAIL~\cite{Ho2016GenerativeAI} takes the nested RL/IRL formulation of earlier work 
, optimizing over all reward functions with a convex regularizer. This results in the objective in Eq.~\eqref{eq:irl}, with a max-min adversarial problem similar to a GAN~\cite{goodfellow2014generative}. A variety of further work has built on this adversarial approach~\cite{kostrikov2018discriminator,fu2018learning,baram2016model}. A separate line of work aims to simplify the problem in Eq.~\eqref{eq:irl} by using a fixed \(r\) or \(\pi\). In SQIL~\cite{Reddy2020SQILIL}, \(r\) is chosen to be the 1-0 indicator on the expert demonstrations, while ASAF~\cite{barde2020adversarial} takes the GAN approach and uses a discriminator (with role similar to \(r\)) of fixed form, consisting of a ratio of expert and learner densities. AdRIL~\cite{swamy2021moments} is a recent extension of SQIL, additionally assigning decaying negative reward to previous policy rollouts.

\textbf{Offline IL}: In the offline setting, the learner has no access to the environment. 
The simple behavioural cloning (BC)~\cite{ross2010efficient} approach is offline, but doesn't use any dynamics information. ValueDICE~\cite{Kostrikov2020Imitation} is a dynamics-aware offline approach with an objective somewhat similar to ours, motivated from minimization of a variational representation of the KL-divergence between expert and learner policies. ValueDICE requires adversarial optimization to learn the policy and Q-functions, with a biased gradient estimator for training. We show a way to recover a unbiased gradient estimate for the KL-divergence in Appendix
~\ref{Appx:C}.
The O-NAIL algorithm~\cite{arenz2020non} builds on ValueDICE and combines with a SAC update to obtain a method that is similar to our algorithm described in section \ref{sec:Method1}, with the specific choice of reverse KL-divergence as the relevant statistical distance.
The EDM method~\cite{jarrett2020strictly} incorporates dynamics via learning an explicit energy based model for the expert state occupancy, although some theoretical details have been called into question (see~\cite{swamyCritiqueStrictlyBatch2021} for details). The recent AVRIL approach~\cite{chan2021scalable} uses a variational method to solve a probabilistic formulation of IL, finding a posterior distribution over $r$ and $\pi$. Illustrating the potential benefits of alternative distances for IL, the PWIL~\cite{dadashi2021primal} algorithm gives a non-adversarial procedure to minimize the Wasserstein distance between expert and learned occupancies. The approach is specific to the primal form of the \({\cal W}_1\)-distance, while our method (when used with the Wasserstein distance) targets the dual form. 


\section{Experiments}

\subsection{Experimental Setup}
We compare IQ-Learn (``IQ'') to prior works on a diverse collection of RL tasks and environments - ranging from low-dimensional control tasks: CartPole, Acrobot, LunarLander - to more challenging continuous control MuJoCo tasks: HalfCheetah, Hopper, Walker and Ant. Furthermore, we test on the visually challenging Atari Suite  with high-dimensional image inputs. We compare on offline IL - with no access to the the environment while training, and online IL - with environment access. We show results on  $W_{1}$ and $\chi^{2}$ as our statistical distances, as we found them more effective than TV distance. In all cases, we train until convergence and average over multiple seeds. 
Hyperparameter settings and training details are detailed in Appendix~\ref{appx:D}.

\subsection{Benchmarks}

\paragraph{Offline IL} We compare to the state-of-art IL methods EDM and AVRIL, following the same experimental setting as \cite{chan2021scalable}. Furthermore, we compare with ValueDICE which also learns Q-functions, albeit with drawbacks such as adversarial optimization. We also experimented with SQIL, but found that it was not competitive in the offline setting. Finally, we utilize  BC as an additional IL baseline. 

\paragraph{Online IL}
We use MuJoCo and Atari environments and compare against state-of-art online IL methods: ValueDICE, SQIL and GAIL. We only show results on $\chi^2$ as $W_1$ was harder to stabilize on complex environments\footnote{$\chi^2$ and $W_1$ can be used together to still have a convex regularization and is more stable. 
}. 
Using target updates stabilizes the $Q$-learning on MuJoCo. 
For brevity, further online IL results are shown in Appendix~\ref{appx:D}.

\subsection{Results}

\begin{figure}[ht]
\vskip -5pt
\centering
\hspace*{-0.2cm}\includegraphics[width=1.02\linewidth]{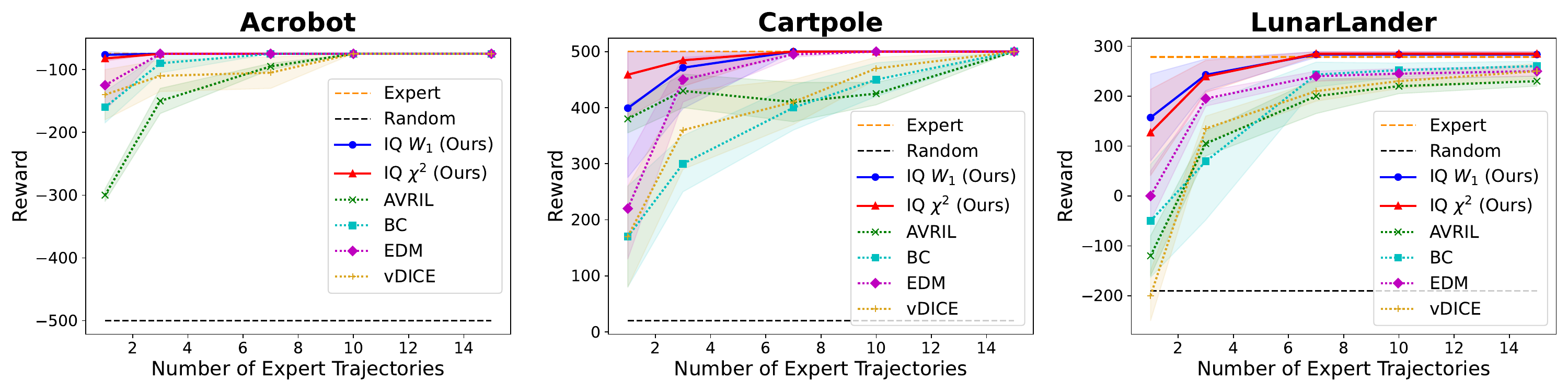}
\vskip -5pt
\caption{\small\textbf{Offline IL results.} We plot the average environment returns vs the number of expert trajectories.}
\label{fig:offline}
\vskip -10pt
\end{figure}

\paragraph{Offline IL} We present results on the three offline control tasks in Figure \ref{fig:offline}. On all tasks, IQ strongly outperforms prior works we compare to in performance and sample efficiency. Using just \textit{one expert trajectory}, we achieve expert performance on Acrobot and reach near expert on Cartpole.

\begin{wraptable}{R}{0.6\linewidth}
    \centering
	\small
	\vskip-13pt
	\tabcolsep 3pt
	\caption{\small \textbf{Mujoco Results.} We show our performance on MuJoCo control tasks using a single expert trajectory.}  
	\vskip-5pt
	\label{tbl:mujoco}
	\begin{tabular}{l|c|c|c|c||c}
		Task & GAIL & DAC & ValueDICE & IQ (Ours) & Expert \\ \hline
		Hopper & 3252.5 & 3305.1 & 3312.1 & \textbf{3546.4} & 3532.7  \\
	    Half-Cheetah & 3080.0 & 4080.6 & 3835.6  &  \textbf{5076.6} & 5098.3 \\
		Walker & 4013.7 & 4107.9 & 3842.6 & \textbf{5134.0} & 5274.5  \\ 
		Ant & 2299.1 & 1437.5 &  1806.3 & \textbf{4362.9} & 4700.0 \\
		Humanoid & 232.6 & 380.5 & 644.5 & \textbf{5227.1} & 5312.8 \\
		\hline
    \end{tabular}
    \vskip-15pt
\end{wraptable}

\paragraph{Mujoco Control}
We present our results on the MuJoCo tasks using a single expert demo in Table~\ref{tbl:mujoco}. IQ achieves expert-level performance in all the tasks while outperforming prior methods like ValueDICE and GAIL. We did not find SQIL competitive in this setting, and skip it for brevity.

\paragraph{Atari}
We present our results on Atari using 20 expert demos in Figure \ref{fig:atari}. We reach expert performance on Space Invaders while being near expert on Pong and Breakout. Compared to prior methods like SQIL, IQ obtains \textbf{3-7x} normalized score\footnote{Normalized rewards are obtained by setting random behavior to 0 and expert one to 1.} and converges in $\sim$300k steps, being \textbf{3x} faster compared to Q-learning based RL methods that take more than 1M steps to converge. Other popular methods like GAIL and ValueDICE perform near random even with 1M env steps.
 
\begin{figure}[h]
\vskip -8pt
\centering
\includegraphics[width=\linewidth]{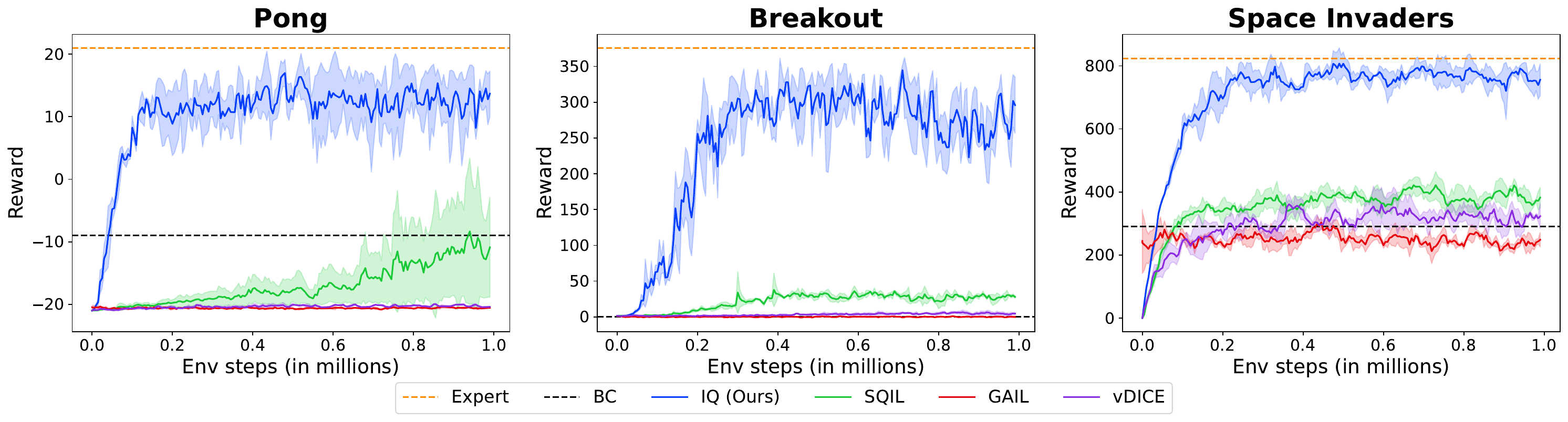}
\vskip -7pt
\caption{\small\textbf{Atari Results}. We show the returns vs the number of env steps (averaged over 5 seeds).}
\label{fig:atari}
\vskip -15pt
\end{figure}

\subsection{Recovered Rewards}
IQ has the added benefit of recovering rewards and can be used for IRL. On Hopper task, our learned rewards have a Pearson correlation of \textbf{0.99} with the true rewards. In Figure \ref{fig:rew_grid}, 
we visualize our recovered rewards in a simple grid environment. We elaborate details in Appendix \ref{appx:D}.


\begin{figure}[h!]
\vskip -10pt
\centering
\includegraphics[width=0.9\textwidth]{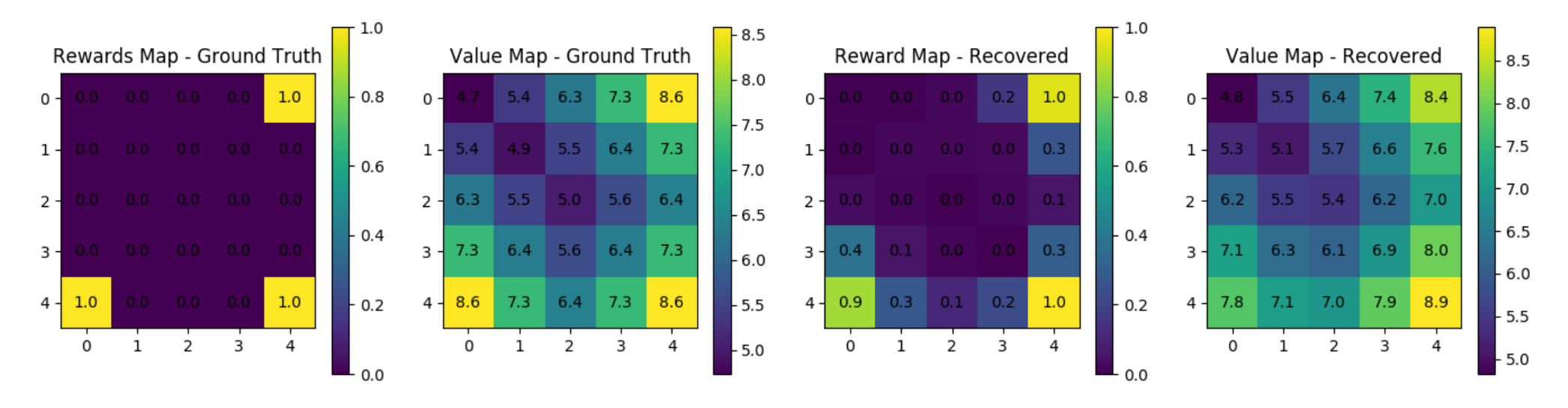}
\vskip -10pt
\caption{\small\textbf{Reward Visualization.} We use a discrete GridWorld environment with 5 possible actions: up, down, left, right, stay. Agent starts in a random state. (With 30 expert demos)}
\label{fig:rew_grid} 
\vskip -15pt
\end{figure}

\subsection{Robustness to Distribution Shifts}
We find IQ to be robust  to distribution shifts  between the expert and policy occupanices, and detail experiments with shift in the initial state distributions as well as goal distributions in Appendix~\ref{appx:F}. Overall we find that IQ shows good generalization performance to never seen before goals, and the capability to act as a meta-learner for IL.

\section{Discussion and Outlook}
We present a new principled framework for learning soft-$Q$ functions for IL and recovering the optimal policy and the reward, building on past works in IRL~\cite{Ziebart2008MaximumEI}. Our algorithm IQ-Learn outperforms prior methods with very sparse expert data and scales to complex image-based environments. We also recover rewards highly correlated with actual rewards. It has applications in autonomous driving and complex decision-making, but proper considerations need to be taken into account to ensure safety and reduce uncertainty, before any deployment. Finally, human or expert data can have errors that can propagate.
A limitation of our method is that our recovered rewards depend on the environment dynamics, preventing trivial use on reward transfer settings. One direction of future work could be to learn a reward model from the trained soft-$Q$ model to make the rewards explicit.

\section{Acknowledgements}

We thank Kuno Kim and John Schulman for helpful discussions. We also thank Ian Goodfellow as some initial motivations for this work were developed under an internship with him.

\section{Funding Transparency}

This research was supported in part by NSF (\#1651565, \#1522054, \#1733686), ONR (N00014-19-1-2145),
AFOSR (FA9550-19-1-0024) and FLI.
\bibliography{main.bib}
\bibliographystyle{plainnat}

\newpage

\title{Supplementary: Implicit IRL}

\appendix

\section{Appendix A}

\begin{figure}[H]
\vskip -15pt
\centering
\includegraphics[width=0.5\textwidth]{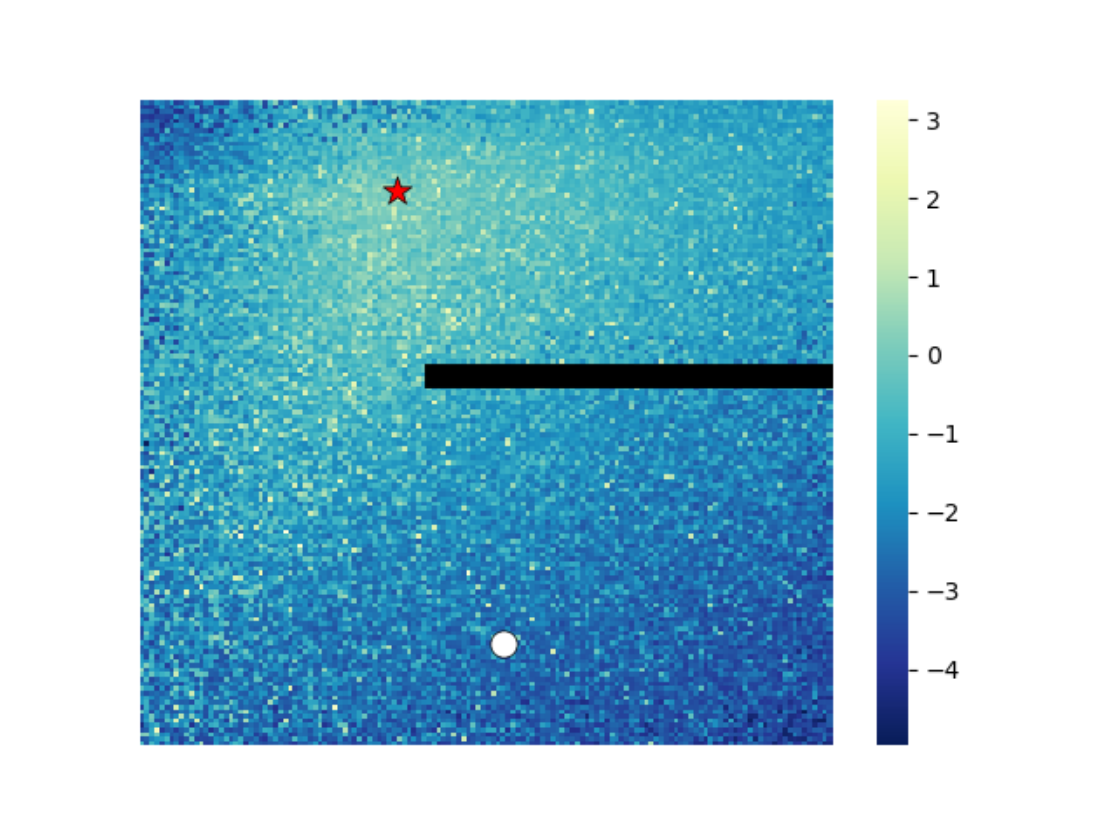}
\vskip -15pt
\caption{\small\textbf{State Rewards Visualization.} We visualize the state-only rewards recovered on a continuous control point maze task. The agent (white circle) has to reach the goal (red star) avoiding the barrier on right.}
\label{fig:pointmaze}
\vskip -10pt
\end{figure}

\subsection{Learning with state-only rewards}
\label{sec:state_rewards}
For a policy $\pi \in \Pi$, we define its state-marginal occupancy measure $\rho_{\pi}: \mathcal{S}  \rightarrow \mathbb{R}$ as $\rho_{\pi}(s)=(1-\gamma)\sum_{t=0}^{\infty} \gamma^{t} P\left(s_{t}=s | \pi\right)$. 

Suppose we are interested in learning rewards that are functions of only the states, then the Inverse-RL objective $L$ from Eq.~\ref{eq:irl} becomes a function of the state-marginal occupancies:
\begin{equation}
 \label{eq:state_irl}
 \underset{r \in \mathcal{R_\psi}}{\max} \min _{\pi \in \Pi} L_s(\pi, r) = \mathbb{E}_{s \sim \rho_{E}(s)}[\phi(r(s))]  - \mathbb{E}_{s \sim \rho(s)}[r(s)] - H(\pi). 
\end{equation}

Now, we can parameterize the rewards $r(s)$ using state-only value-functions $V(s)$ and remove the dependency on $Q(s, a)$. Then $V(s)$ can be learnt similar to learning $Q(s, a)$ in the main paper, but $Q(s,a)$ remains unknown and the optimal policy cannot be obtained simply as an energy-based model of $Q$.

Instead, we develop a new objective that can learn $Q$ while recovering state-only rewards below.

We expand the original objective $L$ using the expert occupancy:
\begin{align*}
      L(\pi, r) = \mathbb{E}_{s \sim \rho_{E}(s)} \mathbb{E}_{a \sim \pi_E(\cdot|s)} \left[\phi(r(s, a))  - \frac{\rho(s)\pi(a|s)}{\rho_E(s)\pi_E(a|s)} r(s, a)) \right] - H(\pi). 
\end{align*}

We see that the action dependency comes in the equation from the fact that we have ${\pi}/{\pi_E}$ inside.

Now, we propose to fix the expression to make it independent of actions by replacing the expert policy $\pi_E$ with the policy $\pi$. The new objective becomes:
\begin{align*}
    L'(\pi, r) = \mathbb{E}_{s \sim \rho_{E}(s)} \mathbb{E}_{a \sim \pi(\cdot|s)} \left[\phi(r(s, a))  - \frac{\rho(s)}{\rho_E(s)} r(s, a) \right] - H(\pi). 
\end{align*} 
Then for a fixed policy $\pi$, while maximizing over $r$ the constraint we have is that each reward component $r(s, a) \in R_\psi$. In a state $s$, $r(s, a)$ that maximizes the objective will take the same value independent of the action\footnote{The objective and the reward constraints remain same along each action dimension and a symmetry argument holds.}. Thus, the expectation over actions can be removed and this recovers Eq.~\ref{eq:state_irl}.


Writing the new objective using $Q$-functions, we get the modification to Eq. \ref{eq:method1}:
\begin{align}
\label{eq:state-rewards}
    \underset{Q \in \Omega}{\max} \ \mathcal{J^*}(Q) = \mathbb{E}_{s \sim \rho_{E}(s)}[\mathbb{E}_{a \sim \pi(\cdot|s)}[\phi(Q(s, a) - \gamma \mathbb{E}_{s' \sim P(s,a)}V^*(s'))]]  - (1- \gamma) \mathbb{E}_{p_0}[V^*(s_0)],
\end{align}
with $\pi$ set to $\stopgrad(\pi_Q)$ to prevent passing gradients through it.

This new objective does not depend on the the expert actions $\pi_E$ and can be used for IL using only observations (ILO). We visualize state-only rewards recovered on a 2D point mass navigation task in Fig~\ref{fig:pointmaze}. Notice that the rewards are not directional and are high on all sides of the target point, indicating they are not dependent on the action. 
We present additional results in Appendix~\ref{appx:D} and a theoretical guarantee in Appendix~\ref{subappx:monotonic}.



\subsection{Proofs for Section 3 and Section 4}
\label{subappx:proofs_1}



\paragraph{Proof for Lemma \ref{lemma:3.1}.}
Let ${P^\pi}$ be the (stochastic) transition matrix for the MDP corresponding to a policy $\pi$, such that for any $x \in \RSA$,  $P^\pi x(s,a) = \mathbb{E}_{s' \sim \mathcal{P}(\cdot|s,a), a' \sim \pi(\cdot |s') }\left[x(s', a')\right]$.

Let $r =\mathcal{T}^\pi Q$ for any $Q \in \RSA$. We expand  $\mathcal{T}^\pi$ in vector form over $\mathcal{S} \times \mathcal{A}$ using  ${P^\pi}$.
Then $\vr = \bm{Q} - \gamma
{P^\pi} (\bm{Q} - \log \bm{\pi})$.
 Here, $\left(I - \gamma {P^\pi}\right)$ is invertible as   $\norm{\gamma {P^\pi}} < 1$, for $\gamma < 1$, and the corresponding Neumann series converges. Thus $\bm{Q} = \left(I -\gamma {P^\pi}\right)^{-1} \left(\bm{r} - \log \bm{\pi}\right) + \log \bm{\pi}$. 
 So we see that for any $r \in \RSA$, there exists a unique preimage $Q \in \RSA$ proving that  $\mathcal{T}^\pi$ is a bijection.

Furthermore, on rearranging the vector form, we have $\bm{Q} = \vr  + \gamma
{P^\pi} (\bm{Q} - \log \bm{\pi})$.
This is just the vector expansion of the soft-bellmann operator $\mathcal{B}^\pi_r$, which has a unique contraction $Q$ for a given $r$. Thus, $Q =(\mathcal{T}^\pi)^{-1} r = \mathcal{B}^\pi_r Q$ for any $r \in \RSA$.



\begin{lemma}
\label{lemma:telescopic_basic}
Let the initial state distribution be $p_0(s)$, then for a policy $\pi$ and $V^\pi$ defined as before, we have $$\mathbb{E}_{(s, a) \sim \rho_\pi}[V^\pi(s) - \gamma\mathbb{E}_{s' \sim \mathcal{P}(\cdot|s,a)} V^\pi(s')] = (1-\gamma) \mathbb{E}_{s \sim p_0}[V^\pi(s)].$$
\end{lemma}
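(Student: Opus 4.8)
The plan is to expand both expectations on the left-hand side using the definition of the discounted occupancy measure, and then exploit a telescoping cancellation across timesteps. Writing $d_t(s) = P(s_t = s \mid \pi)$ for the on-policy state distribution at step $t$ (with $s_0 \sim p_0$), recall that the (normalized) occupancy factors as $\rho(s,a) = (1-\gamma)\sum_{t=0}^\infty \gamma^t d_t(s)\,\pi(a\mid s)$, so that for any function $f$ we have $\mathbb{E}_{(s,a)\sim\rho}[f(s)] = (1-\gamma)\sum_{t=0}^\infty \gamma^t \mathbb{E}_{s\sim d_t}[f(s)]$. First I would apply this to $f = V^\pi$ to rewrite the first term as $(1-\gamma)\sum_{t\ge 0}\gamma^t\,\mathbb{E}_{s\sim d_t}[V^\pi(s)]$.

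Next I would handle the second term. The key observation is a one-step push-forward identity: marginalizing a single transition forward, $\sum_{s,a} d_t(s)\,\pi(a\mid s)\,\mathcal{P}(s'\mid s,a) = d_{t+1}(s')$. Hence $\mathbb{E}_{(s,a)\sim\rho}\big[\mathbb{E}_{s'\sim\mathcal{P}(\cdot\mid s,a)}V^\pi(s')\big] = (1-\gamma)\sum_{t\ge 0}\gamma^t\,\mathbb{E}_{s\sim d_{t+1}}[V^\pi(s)]$, and multiplying by the extra factor $\gamma$ and reindexing $t+1\mapsto t$ yields $(1-\gamma)\sum_{t\ge 1}\gamma^t\,\mathbb{E}_{s\sim d_t}[V^\pi(s)]$.

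Then I would subtract: the two series agree term-by-term for every $t\ge 1$, so all but the $t=0$ term cancel, leaving exactly $(1-\gamma)\,\mathbb{E}_{s\sim d_0}[V^\pi(s)] = (1-\gamma)\,\mathbb{E}_{s\sim p_0}[V^\pi(s)]$, which is the claimed right-hand side. The main point to be careful about is the normalization convention for $\rho$ — the factor $1-\gamma$ that makes it a probability measure is precisely what supplies the $1-\gamma$ on the right — together with the interchange of the infinite sum and the expectations; both are justified since $\gamma<1$ makes the geometric series absolutely convergent and $V^\pi$ is bounded on the finite state-action spaces of Sections 3--4. The telescoping itself is the only substantive step, and it becomes routine once the push-forward identity is stated.
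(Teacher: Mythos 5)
Your proof is correct and follows essentially the same route as the paper's: expand $\mathbb{E}_{(s,a)\sim\rho}$ as the discounted sum $(1-\gamma)\sum_{t\ge 0}\gamma^t\,\mathbb{E}_{s\sim p^\pi_t}[\cdot]$, push the transition kernel forward to shift $p^\pi_t$ to $p^\pi_{t+1}$, and telescope down to the $t=0$ term. Your explicit remarks on the $(1-\gamma)$ normalization of $\rho$ and on absolute convergence are sensible added care but do not change the argument.
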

\begin{proof} We expand the discounted stationary distribution $\rho$ over state-actions and show the series forms a telescopic sum.
Let $p^\pi_t(s)$ be the marginal state distribution at time $t$ for a policy $\pi$.
Then,

$$
\begin{array}{l}
\mathbb{E}_{(s, a) \sim \rho_\pi}[V^\pi(s) - \gamma\mathbb{E}_{s' \sim \mathcal{P}(\cdot|s,a)} V^\pi(s')]  \\
=(1-\gamma) \sum_{t=0}^{\infty} \gamma^{t} \mathbb{E}_{s \sim p^\pi_{t}, a \sim \pi(s)}\left[V^\pi(s)-\gamma \mathbb{E}_{s^{\prime} \sim \mathcal{P}(\cdot|s,a)}V^\pi(s')\right] \\
=(1-\gamma) \sum_{t=0}^{\infty} \gamma^{t} \mathbb{E}_{s \sim p^\pi_{t}}[V^\pi(s)]-(1-\gamma) \sum_{t=0}^{\infty} \gamma^{t+1} \mathbb{E}_{s \sim p^\pi_{t+1}}[V^\pi(s)] \\
=(1-\gamma) \mathbb{E}_{s \sim p_0}[V^\pi(s)].
\end{array}
$$
\end{proof}

\begin{corollary}
\label{cor:xx}
In fact, for any valid occupancy measure $\mu$ over state-actions and $V^\pi$, it holds that $$\mathbb{E}_{(s, a) \sim \mu}[V^\pi(s) - \gamma\mathbb{E}_{s' \sim \mathcal{P}(\cdot|s,a)} V^\pi(s')] = (1-\gamma) \mathbb{E}_{s \sim p_0}[V^\pi(s)].$$
\end{corollary}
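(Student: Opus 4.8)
The plan is to notice that the telescoping computation in the immediately preceding lemma never used the fact that the value function $V^\pi$ and the measure we integrate against were generated by the same policy. Since $\mu$ is a valid occupancy measure, by definition there is some policy $\pi'$ with $\mu = \rho_{\pi'}$, and exactly as in that lemma I can write $\mathbb{E}_{(s,a)\sim\mu}[f(s,a)] = (1-\gamma)\sum_{t=0}^{\infty}\gamma^t\,\mathbb{E}_{s\sim p^{\pi'}_t,\,a\sim\pi'(\cdot|s)}[f(s,a)]$, where $p^{\pi'}_t$ is the time-$t$ state marginal under $\pi'$. I would apply this with $f(s,a) = V^\pi(s) - \gamma\,\mathbb{E}_{s'\sim\mathcal{P}(\cdot|s,a)}V^\pi(s')$, treating $V^\pi$ purely as a fixed real-valued function on states and making no assumption that $\pi = \pi'$.

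First I would split the resulting series into two sums. The crucial step is identifying the law of the advanced state: for $s\sim p^{\pi'}_t$, $a\sim\pi'(\cdot|s)$ and $s'\sim\mathcal{P}(\cdot|s,a)$, the marginal of $s'$ is exactly $p^{\pi'}_{t+1}$, by the one-step Markov push-forward under $\pi'$ — and this is independent of which policy's value function we evaluate. Hence the second sum is $(1-\gamma)\sum_{t\ge 0}\gamma^{t+1}\mathbb{E}_{s\sim p^{\pi'}_{t+1}}[V^\pi(s)]$, which cancels against the first sum term-by-term, leaving only the $t=0$ contribution $(1-\gamma)\mathbb{E}_{s\sim p^{\pi'}_0}[V^\pi(s)]$. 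Using that the initial distribution $p^{\pi'}_0 = p_0$ is the same for every policy, this equals $(1-\gamma)\mathbb{E}_{s\sim p_0}[V^\pi(s)]$, as claimed. A cleaner series-free alternative would be to invoke the Bellman flow constraint characterizing valid occupancy measures, namely $\sum_a\mu(s,a) = (1-\gamma)p_0(s) + \gamma\sum_{s',a'}\mathcal{P}(s\mid s',a')\mu(s',a')$; writing $\nu(s)=\sum_a\mu(s,a)$ and substituting this identity for the inner sum in the $\gamma\,\mathbb{E}_{s'}V^\pi(s')$ term collapses the difference directly to $(1-\gamma)\sum_s p_0(s)V^\pi(s)$.

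The main obstacle here is conceptual rather than computational: recognizing that the preceding lemma already carries the entire argument, so that the only thing to verify is that its telescoping step never relied on $\pi = \pi'$. The single load-bearing fact is the Markov/flow property of occupancy measures — that advancing $p^{\pi'}_t$ one step under $\pi'$ yields $p^{\pi'}_{t+1}$ — which is precisely what makes the sum collapse for an arbitrary test function $V^\pi$. Everything else is the same bookkeeping as before.
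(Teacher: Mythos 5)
Your proof is correct and follows essentially the same route as the paper: represent $\mu$ as the occupancy measure of some generating policy, expand it into the discounted sequence of time-$t$ state marginals, and telescope using the fact that $V^\pi$ depends only on the state while the one-step push-forward of $p^{\pi'}_t$ under the generating policy is $p^{\pi'}_{t+1}$. The alternative you sketch via the Bellman flow constraint is a nice series-free packaging of the same fact, but the main argument matches the paper's.
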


\begin{proof}
This relies on the fact that $V^\pi(s)$ is a function of only state and doesn't depend on the action. First, for any valid occupancy measure $\mu$, there exists a corresponding unique policy $\beta^\mu(a|s)$ s.t. $\beta^\mu$ generates $\mu$ \cite{Ho2016GenerativeAI}.

Let $p^\mu_t(s)$ be the marginal state distribution at timestep $t$ for the policy $\beta^\mu$.
Then,

$$
\begin{array}{l}
\mathbb{E}_{(s, a) \sim \mu}[V^\pi(s) - \gamma\mathbb{E}_{s' \sim \mathcal{P}(\cdot|s,a)} V^\pi(s')]  \\
=(1-\gamma) \sum_{t=0}^{\infty} \gamma^{t} \mathbb{E}_{s \sim p^{{\mu}}_{t}, a \sim \beta^\mu(s)}\left[V^\pi(s)-\gamma \mathbb{E}_{s^{\prime} \sim \mathcal{P}(\cdot|s,a)}V^\pi(s')\right] \\
=(1-\gamma) \sum_{t=0}^{\infty} \gamma^{t} \mathbb{E}_{s \sim p^\mu_{t}}[V^\pi(s)]-(1-\gamma) \sum_{t=0}^{\infty} \gamma^{t+1} \mathbb{E}_{s \sim p^\mu_{t+1}}[V^\pi(s')] \\
=(1-\gamma) \mathbb{E}_{s \sim p^\mu_0}[V^\pi(s)].
\end{array}
$$

Now $p^\mu_0$ is just the initial state distribution $p_0$ which is independent of the policy, thus giving our result.

\end{proof}

\begin{lemma}
\label{lemma:telescopic}
$\mathbb{E}_{\rho_\pi}[(\mathcal{T}^\pi Q)(s, a)] + H(\pi) = (1- \gamma) \mathbb{E}_{p_0}[V^\pi(s_0)]$, where $p_0(s)$ is the initial state distribution.
\end{lemma}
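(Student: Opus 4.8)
The plan is to reduce the left-hand side to the telescoping identity proved just above — namely $\mathbb{E}_{(s,a)\sim\rho}[V^\pi(s) - \gamma\mathbb{E}_{s'\sim\mathcal{P}(\cdot|s,a)}V^\pi(s')] = (1-\gamma)\mathbb{E}_{s\sim p_0}[V^\pi(s)]$ — so that the entire argument becomes a bookkeeping rearrangement that absorbs the entropy term into the soft value function $V^\pi$.

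First I would expand the definition $(\mathcal{T}^\pi Q)(s,a) = Q(s,a) - \gamma\,\mathbb{E}_{s'\sim\mathcal{P}(\cdot|s,a)}V^\pi(s')$ and rewrite the discounted causal entropy in occupancy-measure form as $H(\pi) = \mathbb{E}_\rho[-\log\pi(a|s)]$; this last equality is just the definition $H(\pi) = \mathbb{E}[\sum_{t=0}^\infty \gamma^t(-\log\pi(a_t|s_t))]$ re-expressed as a sum against the occupancy measure $\rho$. Adding the two expressions gives $\mathbb{E}_\rho[Q(s,a) - \log\pi(a|s)] - \gamma\,\mathbb{E}_\rho[\mathbb{E}_{s'\sim\mathcal{P}(\cdot|s,a)}V^\pi(s')]$.

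Next I would collapse the first expectation using the factorization $\rho(s,a)=\rho(s)\pi(a|s)$, where $\rho(s)$ is the state-marginal occupancy. Taking the inner expectation over $a\sim\pi(\cdot|s)$ turns $Q(s,a)-\log\pi(a|s)$ into exactly $V^\pi(s) = \mathbb{E}_{a\sim\pi(\cdot|s)}[Q(s,a)-\log\pi(a|s)]$, so that $\mathbb{E}_\rho[Q(s,a)-\log\pi(a|s)] = \mathbb{E}_{s\sim\rho(s)}[V^\pi(s)] = \mathbb{E}_{(s,a)\sim\rho}[V^\pi(s)]$, the last step holding because $V^\pi$ does not depend on the action. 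The left-hand side is therefore $\mathbb{E}_{(s,a)\sim\rho}[V^\pi(s) - \gamma\,\mathbb{E}_{s'\sim\mathcal{P}(\cdot|s,a)}V^\pi(s')]$, which is precisely the left-hand side of the telescoping lemma; applying that lemma yields $(1-\gamma)\mathbb{E}_{s\sim p_0}[V^\pi(s_0)]$ and closes the proof.

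The only real subtlety — and the step I would be most careful with — is the pair of identifications $H(\pi) = \mathbb{E}_\rho[-\log\pi(a|s)]$ and $\rho(s,a)=\rho(s)\pi(a|s)$ for the (unnormalized) occupancy measure, since it is exactly the entropy term that conspires to complete $Q - \log\pi$ into $V^\pi$. Once the entropy is absorbed in this way nothing further is required, as all contraction and convergence facts are inherited from the earlier lemmas; there is no genuine analytic obstacle here.
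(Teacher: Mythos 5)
Your proof is correct and follows essentially the same route as the paper's: expand $(\mathcal{T}^\pi Q)(s,a)$, fold the discounted causal entropy into the expectation over $\rho$ so that $Q(s,a)-\log\pi(a|s)$ averages to $V^\pi(s)$, and then invoke the telescoping identity of the preceding lemma. The only difference is that you spell out the factorization $\rho(s,a)=\rho(s)\pi(a|s)$ explicitly where the paper performs that step implicitly.
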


\begin{proof}
We can show this forms a telescopic series as in \cite{Nachum2019DualDICEEE} using lemma \ref{lemma:telescopic_basic} to depend only on the initial state distribution:
\begin{align*}
 \mathbb{E}_{\rho_\pi}[Q(s, a) - \gamma \mathbb{E}_{s' \sim \mathcal{P}(\cdot|s,a)}V^\pi(s')] + H(\pi) 
&=  \mathbb{E}_{\rho_\pi}[Q(s, a) - \gamma \mathbb{E}_{s' \sim \mathcal{P}(\cdot|s,a)}V^\pi(s') + H(\pi(a|s)] \\ 
&= \mathbb{E}_{\rho_\pi}[Q(s, a) -\log\pi(a|s) - \gamma \mathbb{E}_{s' \sim \mathcal{P}(\cdot|s,a)}V^\pi(s')] \\
&= \mathbb{E}_{\rho_\pi}[V^\pi(s) - \gamma\mathbb{E}_{s' \sim \mathcal{P}(\cdot|s,a)} V^\pi(s')] \\
&= (1- \gamma) \mathbb{E}_{p_0}[V^\pi(s)]. 
\end{align*}
This makes sense as the LHS and RHS both represent the max entropy RL objective, that is to maximize the cumulative sum of rewards or the expected value with respect to a policy for the initial state.
\end{proof}

\begin{lemma}
\label{lemma:sac}
SAC actor update decreases the objective $\mathcal{J}(\pi, Q)$ for the actor-critic update in main paper, wrt $\pi$ for a fixed $Q$.
\end{lemma}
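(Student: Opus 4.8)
The plan is to exploit the fact, already recorded in the simplified form~\eqref{eq:method2}, that for a fixed $Q$ the objective $\mathcal{J}(\pi, Q) = \mathbb{E}_{\rho_{E}}[\phi(Q(s,a) - \gamma\mathbb{E}_{s'\sim\mathcal{P}(\cdot|s,a)}V^\pi(s'))] - (1-\gamma)\mathbb{E}_{p_0}[V^\pi(s_0)]$ depends on the policy \emph{only through its soft value function} $V^\pi(s) = \mathbb{E}_{a\sim\pi(\cdot|s)}[Q(s,a) - \log\pi(a|s)]$. It therefore suffices to track how $V^\pi$ moves under the actor update and to check that the induced change in $\mathcal{J}$ has the right sign.

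First I would record the pointwise identity $V^\pi(s) = V^*(s) - D_{\mathrm{KL}}(\pi(\cdot|s)\,\|\,\pi_Q(\cdot|s))$, where $V^*(s) = \log\sum_a\exp Q(s,a)$ and $\pi_Q$ is the Boltzmann policy of Prop.~\ref{prop:2}; this follows by expanding the KL and substituting $\log\pi_Q(a|s) = Q(s,a) - \log Z_s$. Since $V^*$ does not depend on $\pi$, reducing the KL to $\pi_Q$ is exactly the same as raising $V^\pi$, with the maximum $V^\pi = V^*$ attained at $\pi=\pi_Q$. The SAC actor step is by construction a descent step on $D_{\mathrm{KL}}(\pi(\cdot|s)\,\|\,\pi_Q(\cdot|s))$ (averaged over the buffer states), so in the idealized exact-update setting it does not increase this KL at any state and hence produces a policy $\pi'$ with $V^{\pi'}(s)\ge V^\pi(s)$ for all $s$.

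It then remains to show that a pointwise increase of $V^\pi$ cannot increase $\mathcal{J}$. Because $\gamma\ge 0$, the recovered reward $r(s,a) = Q(s,a) - \gamma\mathbb{E}_{s'}V^\pi(s')$ is nonincreasing at every $(s,a)$ when $V^\pi$ rises, i.e. $r'(s,a)\le r(s,a)$. Using that $\phi$ is nondecreasing on the effective reward range (the identity map for the TV and $W_1$ IPMs, and increasing on the operative range for $\chi^2$), monotonicity gives $\phi(r'(s,a))\le\phi(r(s,a))$, so the first term of $\mathcal{J}$ does not increase; the second term $-(1-\gamma)\mathbb{E}_{p_0}[V^\pi(s_0)]$ also does not increase as $V^\pi$ rises. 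Adding the two contributions yields $\mathcal{J}(\pi', Q)\le\mathcal{J}(\pi, Q)$, which is the claim.

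The main obstacle is precisely this last monotonicity step. The reduction to $V^\pi$ and the KL identity are essentially bookkeeping, but concluding that $\mathcal{J}$ decreases requires $\phi$ to be nondecreasing over the range of recovered rewards; this is immediate for the IPM-type regularizers where $\phi=\mathrm{id}$, but for penalty-type choices such as $\chi^2$ it needs the reward range to be controlled. A secondary technical point is promoting ``SAC reduces the average KL'' to the pointwise increase $V^{\pi'}\ge V^\pi$ used above: this is exact in the tabular/closed-form improvement setting (a state-wise KL projection of $\pi_Q$ onto the policy class, as in the SAC policy-improvement lemma) and holds on the support of $\mathcal{D}$ under function approximation, so I would state the result for the idealized update, consistent with the rest of the analysis.
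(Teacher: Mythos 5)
Your argument is correct and follows essentially the same route as the paper's own proof: both rewrite $V^\pi(s)$ as $\log Z_s - D_{\mathrm{KL}}\bigl(\pi(\cdot|s)\,\|\,\pi_Q(\cdot|s)\bigr)$, observe that the SAC actor step is exactly a descent step on this KL and hence increases $V^\pi$, and then conclude that $\mathcal{J}(\pi,Q)$ decreases provided $\phi$ is monotonically non-decreasing. Your additional remarks on the pointwise-versus-averaged KL decrease and on controlling the reward range for penalty-type $\phi$ are more explicit than the paper's one-line caveat, but they do not change the substance of the argument.
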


\begin{proof}
$$
V^\pi(s) = \mathbb{E}_{a \sim \pi} [Q(s, a) - \log\pi (a|s)] = -D_{KL}\left(\pi(\cdot | s) \lVert \frac{1}{Z_s}\exp(Q(s, \cdot)\right) +  \log(Z_s),
$$
where $Z_s$ is the normalizing factor
$\sum_{a} \exp{Q(s, a)}$

Now, for a policy $\pi'$ the the SAC actor update rule \cite{Haarnoja2018SoftAO} is $
\underset{\pi'}{\arg\min} \, D_{KL}\left(\pi' \lVert \frac{1}{Z}\exp(Q)\right)
$.

Thus, if $\pi$ is the policy obtained on applying the SAC actor update to $\pi'$, we have $V^{\pi}(s) > V^{\pi'}(s)$. So, as long as $\phi$ in $\mathcal{J}$ is a monotonically non-decreasing function, this implies $\mathcal{J}(\pi, Q) < \mathcal{J}(\pi', Q)$.
\end{proof}

\section{Appendix B}
\label{appx:B}







\paragraph{Integral Probability Metric (IPM)}
An IPM parameterized by $\mathcal{F}$ between two distributions $P$ and $Q$ is defined as 
\begin{align}
    \gamma_{\mathcal{F}}(P, Q):=\sup _{f \in \mathcal{F}}\left|\mathbb{E}_{P} f(X) -\mathbb{E}_{Q} f(X) \right|.
\end{align}

Suppose $\mathcal{F}$ is such that $f \in \mathcal{F} \Rightarrow-f \in \mathcal{F}$. Then,
\begin{align}
\label{eq:ipm}
\gamma_{\mathcal{F}}(P, Q)=\sup _{f \in \mathcal{F}}|\mathbb{E}_{P} f-\mathbb{E}_{Q} f|=\sup _{f \in \mathcal{F}} \ \mathbb{E}_{P} f-\mathbb{E}_{Q} f.
\end{align}

Some IPMs that satisfy this symmetry are: Dudley metric, Wasserstein metric, total variation distance, Maximum Mean Discrepancy (MMD). 

We can see that for $\phi = \mathcal{I}, R_\psi = \mathcal{F}$, Eq.~\ref{eq:gen_dist} reduces to Eq.~\ref{eq:ipm}. 

\paragraph{$f$-divergence}
The $f$-divergence between two distributions ${P}$ and ${Q}$ is defined using the convex conjugate $f^*$ as
\begin{align}
D_{f}(P \| Q)=\mathbb{E}_{Q}\left[f\left(\frac{P}{Q}\right)\right]=\sup _{g: \mathcal{X} \rightarrow \mathbb{R}} \mathbb{E}_{P}[g(X)]-\mathbb{E}_{Q}\left[f^{*}(g(X))\right].
\end{align}

Interpreting $g = -r$,
\begin{align}
\label{eq:f-dist}
D_{f}(P \| Q) &= \sup _{r: \mathcal{X} \rightarrow \mathbb{R}} \mathbb{E}_{P}[-r(X)]-\mathbb{E}_{Q}\left[f^{*}(-r(X))\right] \\
&= \sup _{r: \mathcal{X} \rightarrow \mathbb{R}} \mathbb{E}_{Q}\left[-f^{*}(-r)\right]  - \mathbb{E}_{P}[r].
\end{align}

Thus, for $\phi(x) = -f^{*}(-x), R_\psi = \RSA$, Eq.~\ref{eq:gen_dist} reduces to Eq.~\ref{eq:f-dist}.

\begin{table}[h]
\small
	\centering
	\caption{\small {List of divergence functions, convex conjugates, $\phi$ and optimal reward estimators}}   %
	\label{tbl:div}
	\vskip5pt
	\def\arraystretch{1.5}
	\begin{tabular}{l|c|c|c|c}
        
		Divergence & $f(t)$ & $f^*(u)$ & $\phi(x)$ & $r$ \\ \hline
        Forward KL & $-\log{t}$ & $-1 -\log(-u)$ & $1+\log{x}$ & $\frac{\rho_E}{\rho}$\\
		Reverse KL & $t\log{t}$ & $e^{(u-1)}$ & $-e^{-(x+1)}$ & $-(1+ \log \frac{\rho}{\rho_{E}})$ \\
		Squared Hellinger & $(\sqrt{t} - 1)^2$ & $\frac{u}{1-u}$ & $\frac{x}{1+x}$ & $\sqrt{\frac{\rho_E}{\rho}}-1$\\ 
		Pearson $\chi ^{2}$ & $(t-1)^2$ & $u + \frac{u^2}{4}$ & $x - \frac{x^2}{4}$  & $2(1-\frac{\rho}{\rho_{E}})$\\
		Total variation & $\frac{1}{2} | t-1| $ & $u$ & $x$ & $\frac{1}{2} \sign{(1-\frac{\rho}{\rho_{E}})}$ \\
		Jensen-Shannon & $-(t+1)\log(\frac{t+1}{2}) + t \log{t}$ & $-\log{(2- e^u)}$ & $\log{(2- e^{-x})}$ & $\log{\frac{1}{2}(1+\frac{\rho_E}{\rho})}$ \\
 \hline
	\end{tabular}
	\vskip-10pt
\end{table}

\subsection{Implementation of Statistical Distances}

\paragraph{Total Variation}
Total variation gives a constraint on reward functions: $\left|r\right|  \leq \frac{1}{2} $.

As $Q_{t'} = \sum_{t=t'}^{\infty} \gamma^t r(s_t, a_t) + \gamma^t H(a_t|s_t)$, we obtain a constraint on $Q$:

$\left|Q\right| \leq \frac{1}{1-\gamma} (R_{max} + \log|A|) = \frac{1}{1-\gamma} (\frac{1}{2} + \log|A|)
$

This can be easily enforced by bounding $Q$ to this range using a $tanh$ activation.

\paragraph{$W_1$ Distance}
For Wasserstein-1 distance, we use gradient penalty~\cite{Gulrajani2017ImprovedTO} to enforce the Lipschitz constraint, although other techniques like spectral normalization~\cite{Miyato2018SpectralNF} can also be utilized.

\paragraph{$\chi^2$-divergence}
$\chi^2$-divergence corresponds to an $f$-divergence with a choice of $f(x) = (x-1)^2$.

We generalize this to a choice of $f(x) = \alpha (x-1)^2$ with $\alpha > 0$, which scales the original divergence by a constant factor of $\alpha$.  

Then $\phi(x) = -f^*(-x) = x - \frac{1}{4\alpha} x^2$. It corresponds to using a (strong) convex reward regularizer $\psi(r) = \frac{1}{4\alpha}r^2$.

\subsection{Effect of different Divergences}

\begin{figure}[ht]
\vskip -10pt
\centering
\includegraphics[width=\textwidth]{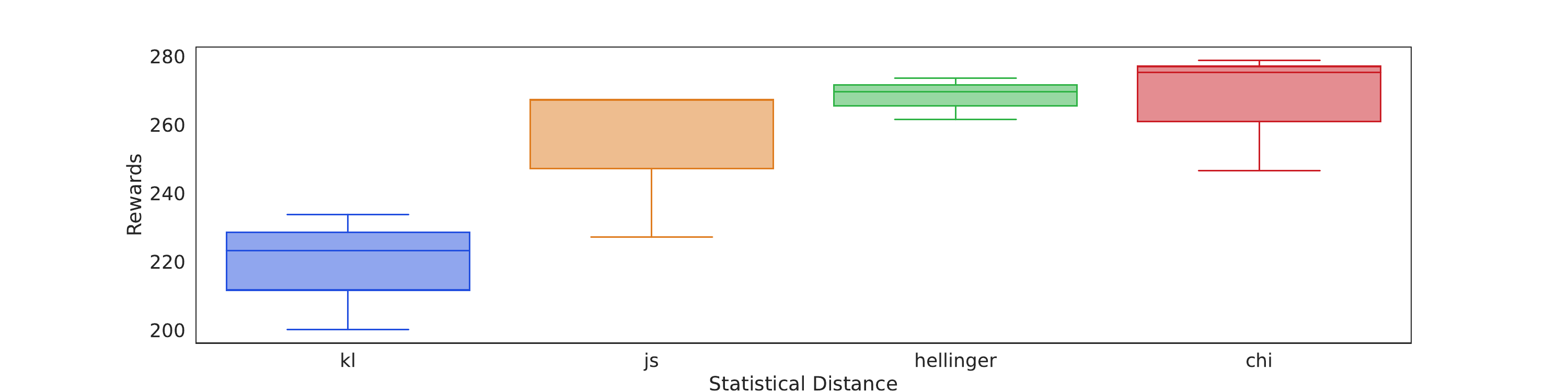}
\vskip -5pt
\caption{\small\textbf{Divergence ablation.} We show environment returns for different divergences on LunarLander.}
\label{fig:div} 
\vskip -5pt
\end{figure}

We test IQ-Learn with different divergences:  Jensen-Shannon (JS), Hellinger, KL and $\chi^2$ divergence. We use the LunarLander environment with our offline IL experimental settings and a single expert trajectory. All experiments are repeated over 10 seeds. We show a box-plot of the environment returns for different divergences and find that JS, Hellinger and $\chi^2$ divergence perform similarly, consistent with the findings on different type of GANs~\cite{Lucic2018AreGC}. Here, KL-divergence performs worse and is suboptimal compared to the other divergences.

\section{Appendix C}
\label{Appx:C}

In this section, we expand over our analysis in Section 3 and present proof of properties over the $Q$-policy space: Propositions \ref{prop:1}, \ref{prop:2}, \ref{prop:3} in main paper.

\looseness=-1
For simplicity, we define a concave function $\phi: \mathbb{R} \rightarrow \mathbb{R} \cup \{-\infty\}$ such that $g$ is given as $g(x) \defeq x - \phi(x)$, same as in Section 4 of the main paper. We are interested in regularizers $\psi$ induced by $g$, such that 
\begin{align}
\psi_{g}(r)=\mathbb{E}_{\rho_{E}} [{g}(r(s, a))].
\end{align}

We simplify the IRL objective (from Eq.~\ref{eq:psi_irl}):
\begin{align*}
\mathcal{J}(\pi, Q) &= \mathbb{E}_{ \rho_{E}}[\mathcal{T}^\pi Q] 
- (1- \gamma) \mathbb{E}_{ p_0}[V^\pi(s_0)] -  \psi(\mathcal{T}^\pi Q) \\
&= \mathbb{E}_{\rho_{E}}[\mathcal{T}^\pi Q] - (1- \gamma) \mathbb{E}_{\rho_0}[V^\pi(s_0)] - \E_{\rho_E}[\mathcal{T}^\pi Q - \phi(\mathcal{T}^\pi Q)] \\
&= \mathbb{E}_{\rho_{E}}[\phi(Q - \gamma \mathbb{E}_{s' \sim \mathcal{P}(\cdot|s,a)}V^\pi(s'))] - (1- \gamma) \mathbb{E}_{\rho_0}[V^\pi(s_0)].
\end{align*}


\begin{lemma}
$\mathcal{J}(\pi, \cdot)$ is concave for all $\pi \in \Pi$.
\end{lemma}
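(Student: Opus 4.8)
The plan is to show that $\mathcal{J}(\cdot, Q)$ is concave in $\pi$ by examining the two terms of the simplified objective
\begin{align*}
\mathcal{J}(\pi, Q) = \mathbb{E}_{\rho_{E}}[\phi(Q(s,a) - \gamma \mathbb{E}_{s' \sim \mathcal{P}(\cdot|s,a)}V^\pi(s'))] - (1- \gamma) \mathbb{E}_{\rho_0}[V^\pi(s_0)]
\end{align*}
separately, and to reduce everything to statements about how $V^\pi(s)$ depends on $\pi$. The key observation is that $V^\pi(s) = \mathbb{E}_{a \sim \pi(\cdot|s)}[Q(s,a) - \log\pi(a|s)]$ is, for fixed $Q$, an \emph{affine-plus-entropy} functional of $\pi(\cdot|s)$: the term $\mathbb{E}_{a \sim \pi}[Q(s,a)]$ is linear in $\pi(\cdot|s)$, and the term $\mathbb{E}_{a \sim \pi}[-\log\pi(a|s)] = H(\pi(\cdot|s))$ is the Shannon entropy, which is concave in $\pi(\cdot|s)$. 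Hence $V^\pi(s)$ is concave in $\pi$ for each fixed $s$, and this concavity is the engine behind the whole result.

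First I would handle the second term. Since $-(1-\gamma)\mathbb{E}_{\rho_0}[V^\pi(s_0)]$ is a nonnegative-weighted ($1-\gamma > 0$) integral of $-V^\pi(s_0)$ over the fixed initial distribution $p_0$, and $-V^\pi$ is convex in $\pi$, this term is convex in $\pi$; equivalently, the full second term contributes concavely once the sign is tracked — more carefully, $-(1-\gamma)\mathbb{E}_{\rho_0}[V^\pi]$ is convex, so I must be careful that it is the overall $\mathcal{J}$ that is concave. Let me reconsider: I want $\mathcal{J}$ concave, and the second term $-(1-\gamma)\mathbb{E}_{\rho_0}[V^\pi(s_0)]$ is convex in $\pi$, which works \emph{against} concavity. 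This means the first term must carry the concavity and the argument is more delicate than a term-by-term sum; the resolution is that the two terms must be combined using the telescoping identity (Lemma A.2 / Corollary 2.1), which rewrites $(1-\gamma)\mathbb{E}_{\rho_0}[V^\pi(s_0)]$ as $\mathbb{E}_{\rho}[V^\pi(s) - \gamma\mathbb{E}_{s'}V^\pi(s')] $ so that the $V^\pi(s')$ pieces align and the whole expression becomes one of expert reward minus a correction.

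The cleaner route, which I would pursue, is to work directly with the composition structure. For the first term, the inner quantity $Q(s,a) - \gamma\mathbb{E}_{s'\sim\mathcal{P}(\cdot|s,a)}V^\pi(s')$ is concave in $\pi$ because $V^\pi(s')$ is concave in $\pi$ and it enters with a negative coefficient $-\gamma < 0$; since $\phi$ is concave \emph{and nondecreasing}, the composition $\phi(\text{concave})$ is concave, and taking $\mathbb{E}_{\rho_E}$ (a nonnegative-weighted integral over the fixed expert occupancy) preserves concavity. The nondecreasing property of $\phi$ is exactly what the choice $\phi$ concave on $\mathcal{R}_\psi$ supplies (the same monotonicity invoked in Lemma A.4), and I expect \textbf{verifying that $\phi$ is nondecreasing on the relevant reward range is the main obstacle}, since concavity of $\phi$ alone does not guarantee monotonicity of the composition. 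Once both terms are assembled — using the telescoping identity to express $\mathcal{J}(\pi,Q)$ in a form where the net dependence on $V^\pi$ appears as $\mathbb{E}_{\rho_E}[\phi(\cdots)]$ minus an affine term that cancels against the second term — the sum of a concave functional and an affine functional of $\pi$ is concave, completing the proof for all $Q \in \Omega$.
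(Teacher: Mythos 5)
You have attempted to prove a different statement from the one the paper intends, and the statement you chose is in fact false. The notation $\mathcal{J}(\cdot, Q)$ is misleading here: the paper's own proof fixes $\pi$, takes $Q_1, Q_2 \in \Omega$, and establishes concavity \emph{in $Q$}. That is also the only reading consistent with how the lemma is used: Sion's minimax theorem for $\max_{Q}\min_{\pi}\mathcal{J}$ needs quasiconcavity in the maximizing variable $Q$ (supplied by this lemma) and quasiconvexity in the minimizing variable $\pi$ (supplied by Lemma C.2, which exhibits a unique interior minimizer $\pi_Q$ over $\Pi$ --- and a non-constant concave function on a convex set cannot have a unique interior minimizer, so $\mathcal{J}$ cannot be concave in $\pi$). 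Your own computation exposes the failure: taking $\phi$ to be the identity, both terms of $\mathcal{J}$ have the form $-(\text{const})\cdot\mathbb{E}[V^\pi]$ with $V^\pi$ strictly concave in $\pi$ (linear plus entropy), so $\mathcal{J}$ is strictly convex in $\pi$, not concave. The telescoping identity cannot rescue this: it rewrites $(1-\gamma)\mathbb{E}_{p_0}[V^\pi(s_0)]$ as an expectation under an occupancy measure that itself depends on $\pi$, and both sides of that identity carry the same curvature, so no cancellation of the offending convexity occurs. The issue you flagged as the ``main obstacle'' (monotonicity of $\phi$) is not the obstacle at all.

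The paper's actual proof is short and routes through the reward space: for fixed $\pi$, the map $Q \mapsto \mathcal{T}^\pi Q = (I - \gamma P^\pi)Q + \gamma P^\pi \log\pi$ is affine in $Q$, and $L(\pi,\cdot)$ is concave in $r$ (linear in $r$ minus the convex regularizer $\psi$), so $\mathcal{J}(\pi, Q) = L(\pi, \mathcal{T}^\pi Q)$ is a concave function composed with an affine map, hence concave in $Q$. No monotonicity of $\phi$ is needed for this lemma; that hypothesis enters only later (Lemmas C.2 and A.4). If you prefer to argue from the simplified form of $\mathcal{J}$, the correct observation is that for fixed $\pi$ the quantity $Q(s,a) - \gamma\mathbb{E}_{s'}V^\pi(s')$ is \emph{affine in $Q$} (since $V^\pi(s) = \mathbb{E}_{a\sim\pi}[Q(s,a) - \log\pi(a|s)]$ is affine in $Q$), a concave $\phi$ composed with an affine map is concave without any monotonicity assumption, and the term $-(1-\gamma)\mathbb{E}_{p_0}[V^\pi(s_0)]$ is likewise affine in $Q$.
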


\begin{proof}
Let $Q_1, Q_2 \in \Omega$ and suppose $\lambda \in [0, 1]$. We rely on the fact that the regularized IRL objective $L(\pi, \cdot)$ is concave for all $\pi$. Note that $r=\mathcal{T}^\pi Q$ is an affine transform of $Q$, given in vector form as $\bm{r} - \log \bm{\pi}= \left(I -P^\pi\right) \bm{Q}$. Thus, $\mathcal{T}^\pi(\lambda Q_1 + (1 - \lambda) Q_2) = \lambda \mathcal{T}^\pi Q_1 + (1 - \lambda) \mathcal{T}^\pi Q_2$.
\begin{align*}
 \mathcal{J}(\pi, \lambda Q_1 + (1-\lambda) Q_2 )
&=  L(\pi, \mathcal{T}^\pi (\lambda Q_1 + (1-\lambda) Q_2)) \\ 
&= L(\pi, \lambda \mathcal{T}^\pi Q_1 + (1 - \lambda) \mathcal{T}^\pi Q_2) \\ 
&\geq \lambda L(\pi,  \mathcal{T}^\pi Q_1) + (1 - \lambda) L(\pi,  \mathcal{T}^\pi Q_2)\\
&= \lambda \mathcal{J}(\pi, Q_1) +  (1 -\lambda) \mathcal{J}(\pi, Q_2).  
\end{align*}
Thus, $\mathcal{J}(\pi, \cdot)$ is concave.
\end{proof}

For building up our analysis, we will first prove the saddle point properties of $\J(\pi, Q)$ by adding a monotonicity assumption on  $\phi$ that it is a non-decreasing function. We then generalize the proof to show that these properties hold for any concave $\phi$ in Section~\ref{sec:general}.

\begin{lemma}
\label{lemma:C2}
For $\psi_g$ corresponding to a non-decreasing $\phi, \mathcal{J}(\cdot, Q)$ 
has a unique minima  $\pi_Q = \frac{1}{Z_s}\exp(Q)$ with normalizing factor  $Z_s = \sum_{a} \exp{Q(s, a)}$.
\end{lemma}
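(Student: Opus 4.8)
The plan is to fix $Q$ and study the map $\pi\mapsto\mathcal{J}(\pi,Q)$ (the minimizer announced in the statement is the policy $\pi_Q$, so the relevant variable is $\pi$). The whole argument rests on rewriting the soft value through the log-partition identity. With $Z_s=\sum_a\exp Q(s,a)$ and $\pi_Q(a|s)=\exp(Q(s,a))/Z_s$, a one-line computation (the same identity used in Appendix A) gives
\[
V^\pi(s)=\mathbb{E}_{a\sim\pi(\cdot|s)}[Q(s,a)-\log\pi(a|s)]=\log Z_s-D_{KL}\!\left(\pi(\cdot|s)\,\|\,\pi_Q(\cdot|s)\right).
\]
Hence $V^\pi(s)=V^*(s)-d_s(\pi)$, where $V^*(s)=\log Z_s$ and $d_s(\pi)\defeq D_{KL}(\pi(\cdot|s)\|\pi_Q(\cdot|s))\ge0$. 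Substituting this into $\mathcal{J}$ (Eq.~\eqref{eq:psi_irl}) rewrites the objective as a function of the non-negative gaps $\{d_s(\pi)\}_s$: the initial-state term contributes $(1-\gamma)\mathbb{E}_{p_0}[d_{s_0}(\pi)]$ up to an additive constant, and each expert summand becomes $\phi\big(A(s,a)+\gamma\mathbb{E}_{s'}d_{s'}(\pi)\big)$ with $A(s,a)=Q(s,a)-\gamma\mathbb{E}_{s'}V^*(s')$ independent of $\pi$.

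First I would establish the unique minimizer by monotonicity. By definiteness of the relative entropy, $d_s(\pi)\ge0$ with equality iff $\pi(\cdot|s)=\pi_Q(\cdot|s)$. Because $\phi$ is non-decreasing and every gap enters with a non-negative coefficient ($\gamma\ge0$ and $(1-\gamma)\ge0$), $\mathcal{J}(\cdot,Q)$ is a non-decreasing function of the collection $\{d_s(\pi)\}_s$; it is therefore minimized exactly when all gaps vanish, i.e.\ at $\pi=\pi_Q$. Uniqueness follows from the $(1-\gamma)\mathbb{E}_{p_0}[d_{s_0}]$ term, which strictly increases in $d_{s_0}$ at every state charged by $p_0$; under the usual full-support assumption on $p_0$ this forces $d_s(\pi)=0$ at all states, hence $\pi=\pi_Q$. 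This step also explains the hypothesis on $\phi$: if $\phi$ were decreasing the expert term would reward \emph{lowering} $V^\pi$ while the initial-state term rewards raising it, and the two would no longer be minimized at the same policy.

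For quasiconvexity I would use that $d_s(\pi)=D_{KL}(\pi(\cdot|s)\|\pi_Q(\cdot|s))$ is convex in $\pi$ (convexity of the KL in its first argument), so each inner argument $A(s,a)+\gamma\mathbb{E}_{s'}d_{s'}(\pi)$ is convex in $\pi$; composing a convex function with the non-decreasing scalar map $\phi$ gives a quasiconvex function, and the gap term $(1-\gamma)\mathbb{E}_{p_0}[d_{s_0}]$ is outright convex. I expect the passage from ``each piece is quasiconvex'' to ``$\mathcal{J}$ is quasiconvex'' to be the main obstacle, since quasiconvexity is not preserved under addition: averaging these quasiconvex summands over $\rho_E$ need not yield a quasiconvex objective. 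The clean case is $\phi$ affine (the IPM/identity regularizer of Appendix B), where the expert term is \emph{linear} in the convex gaps and $\mathcal{J}(\cdot,Q)$ is genuinely convex, so quasiconvexity is immediate. For a general concave, non-decreasing $\phi$ I would instead try to prove convexity of the sublevel sets $\{\pi:\mathcal{J}(\pi,Q)\le c\}$ directly, leaning on the fact that all the convex gaps share the \emph{single} common minimizer $\pi_Q$ and increase as $\pi$ moves away from it; I would watch carefully for saturating or bounded $\phi$, where this common-minimizer structure is exactly what must be exploited to rule out the spurious ridges that a naive sum-of-quasiconvex argument would otherwise permit.
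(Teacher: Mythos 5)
Your route is the same as the paper's: both start from the identity $V^\pi(s)=\log Z_s-D_{KL}\bigl(\pi(\cdot|s)\,\|\,\pi_Q(\cdot|s)\bigr)$, observe that the argument of $\phi$ in the expert term and the negated initial-value term are convex in $\pi$ with the common minimizer $\pi_Q$, and conclude from the monotonicity of $\phi$ that $\pi_Q$ is the unique minimizer. On that half you are, if anything, more careful than the paper: the paper asserts the strict inequality $\mathcal{J}(\pi,Q)>\mathcal{J}(\pi_Q,Q)$ without isolating which term supplies strictness, whereas you trace it to the $(1-\gamma)\mathbb{E}_{p_0}[d_{s_0}]$ term and correctly note that forcing \emph{every} state's gap to zero requires a support condition (on $p_0$, or jointly on $p_0$ and the next-state distributions appearing under $\rho_E$). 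That caveat is a real one that the paper elides.

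Where your proposal stops short is the quasiconvexity half, and the obstacle you flag is genuine: each summand $\phi\bigl(A(s,a)+\gamma\mathbb{E}_{s'}d_{s'}(\pi)\bigr)$ is quasiconvex as a non-decreasing function of a convex function, but a $\rho_E$-average of quasiconvex functions need not be quasiconvex, and a common minimizer does not rescue this once $\pi$ has more than one free coordinate (e.g.\ $\min(x^2,1)+\min(y^2,1)$, or $-e^{-x^2}-e^{-y^2}$, both have a unique minimizer at the origin yet non-convex sublevel sets). You should know, however, that the paper's own proof does not close this gap either: it establishes the unique minimum and then declares ``this is sufficient to establish the quasiconvexity,'' which is not a valid implication --- a function can have a unique global minimizer and increase along every ray from it while still failing to have convex sublevel sets. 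Since quasiconvexity in $\pi$ is exactly what Appendix C needs to invoke Sion's minimax theorem, the step you could not complete is a real missing piece of the argument, not merely a detail you overlooked; your observation that the affine-$\phi$ (IPM) case yields outright convexity is the one regime where the conclusion follows cleanly.
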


\begin{proof}
We have,
$$
V^\pi(s) = \mathbb{E}_{a \sim \pi} [Q(s, a) - \log\pi (a|s)] = -D_{KL}\left(\pi(\cdot | s) \lVert \frac{1}{Z_s}\exp(Q(s, \cdot)\right) +  \log(Z_s).
$$
For a fixed $Q$, the KL divergence is strictly convex in $\pi$ with minima at $\pi_Q$, implying $V^\pi(s)$ is strictly concave in $\pi$ . Similarly, $r=\mathcal{T}^\pi Q = Q - \gamma \mathbb{E}_{s' \sim \mathcal{P}(\cdot|s,a)}V^\pi(s')$ is strictly convex in $\pi$ with minima at $\pi_Q$. Now, as $\phi$ is a  non-decreasing function, $\ \mathbb{E}_{\rho_E}[\phi(\mathcal{T}^\pi Q)]$ will be minimum at $\pi_Q$ and will be always non-decreasing as we pull away. Similarly the second term of $\J$, given as $-(1-\gamma) \E_{\rho_0}[V^\pi(s_0)]$ is convex with a minima at $\pi_Q$. Thus $\mathcal{J}(\pi, Q) > \mathcal{J}(\pi_Q, Q)$,  for any $\pi \neq \pi_Q$. This is sufficient to establish that
$J(\cdot, Q)$ has a unique minima at $\pi_Q$.
\end{proof}

\begin{lemma}
\label{lemma:Tstar}
Define $\mathcal{T}^*: \mathbb{R^{\mathcal{S} \times \mathcal{A}}} \rightarrow  \mathbb{R^{\mathcal{S} \times \mathcal{A}}}$ such that 
\begin{align*}
\left(\mathcal{T}^{*} Q\right)(s, a)=Q(s, a)-\gamma \mathbb{E}_{s^{\prime} \sim \mathcal{P}(\cdot|s,a)} [\log \sum_{a^{\prime}} \exp{Q\left(s^{\prime}, a^{\prime}\right)}].
\end{align*}
Then $\mathcal{T}^*$ is bijective.
\end{lemma}

\begin{proof}
For $r=\mathcal{T}^* Q$, we have $Q(s, a)=r(s, a) + \gamma \mathbb{E}_{s^{\prime} \sim \mathcal{P}(\cdot|s,a)} [\log \sum_{a^{\prime}} \exp{Q\left(s^{\prime}, a^{\prime}\right)}]$. 
This is just the soft Bellman equation  (Eq.~\ref{eq:q}), for which a unique contraction $Q^*$ exists satisfying it \cite{haarnoja2017reinforcement}. Thus for any $r$, we have a unique preimage
$Q^*$ such that $r=\mathcal{T}^* Q^*$.
Hence,  $\mathcal{T}^*$ is a bijection.
\end{proof}

\begin{lemma}
    \label{lemma:pi_q_star}
We have that $\T^* Q = \T^{\pi_Q} Q$. Moreover, for $Q\in\R^{\states\times\actions}$, let $r = \T^* Q$. Then, the optimal (soft) policy with respect to $r$ satisfies $\pi^*_r = \pi_Q$. This notably implies that
    \begin{equation}
        \pi_Q = \argmax_{\pi} \E_{s,a\sim\rho_\pi}[(\T^* Q)(s,a) - \ln \pi(a|s)].
    \end{equation}
\end{lemma}
\begin{proof}
    The first holds is true by basic properties of the Legendre-Fenchel transform~\cite[Appx.~A]{vieillard2020leverage}.  Here, $Q$ is the fixed point of the optimal Bellman operator $\mathcal{B}^*$ for reward $r$, so $\pi_Q$ is the optimal policy.
\end{proof}

\begin{lemma}
\label{lemma:C3}
We have that a unique saddle point exists for $\mathcal{J}(\pi, Q)$ implying $\underset{\pi \in \Pi}{\min} \ \underset{Q \in \Omega}{\max} \ \mathcal{J}(\pi, Q) = \underset{Q \in \Omega}{\max}  \ \underset{\pi \in \Pi}{\min}  \ \mathcal{J}(\pi, Q)$.
\end{lemma}


Let $(\pi^*, r^*)$ be the unique saddle point for $L$. We will first solve for the min-max of $\J$.

As $r = \mathcal{T}^\pi Q$ is an affine transform of $Q$ for a fixed $\pi$, we have
$$
\pi^* = \argmin_{\pi \in \Pi} \underset{r \in \mathcal{R}}{\max} \ L(\pi, r) = 
 \argmin_{\pi \in \Pi} \underset{Q \in \Omega}{\max} \ L(\pi, \mathcal{T}^\pi Q) =  \argmin_{\pi \in \Pi} \underset{Q \in \Omega}{\max} \ \mathcal{J}(\pi, Q)
$$

Thus, $\argmin_\pi \max_Q \mathcal{J}$ coincides with the first coordinate of the saddle point for $L$. Now, we can relate the second coordinates.


For $r^* = \argmax_r L(\pi^*, r)$, as $L$ satisfies the minimax theorem, we necessarily have that 
\begin{equation*}
    \pi^* = \argmin_\pi L(\pi, r^*).
\end{equation*}
So, as $L(\pi, r^*) = \E_{s,a\sim\rho_E}[\phi(r^*(s,a))] - \E_{s,a\sim\rho_\pi}[r^*(s,a)-\ln\pi(a|s)]$, this means that $\pi^*$ is the optimal policy for $r^*$. Write $Q^* = (\mathcal{T}^*)^{-1}r^*$ the associated optimal $Q$-function, we have that $\pi^* = \pi_{Q^*}$. So, using the affine transformation property, we have
\begin{align*}
    r^* &= \argmax_r L(\pi^*, r)
    \\
    &= \mathcal{T}^{\pi^*}\left(\argmax_Q L(\pi^*, \mathcal{T}^{\pi^*} Q)\right)
    \\
    &= \mathcal{T}^{\pi_{Q^*}}\left(\argmax_Q L(\pi_{Q^*}, \mathcal{T}^{\pi_{Q^*}} Q)\right)
    \\
    &= \mathcal{T}^{\pi_{Q^*}}\left(Q^\dagger\right).
\end{align*}
where $Q^\dagger := \argmax_Q L(\pi_{Q^*}, \mathcal{T}^{\pi_{Q^*}} Q)$.

As $r^* = \mathcal{T}^* Q^*$ by definition of $Q^*$, we have $\mathcal{T}^* Q^* = \mathcal{T}^{\pi_{Q^*}}Q^\dagger$. We also know that $\mathcal{T}^* Q = \mathcal{T}^{\pi_Q} Q$, so $\mathcal{T}^* Q^* = \mathcal{T}^{\pi_{Q^*}} Q^* = \mathcal{T}^{\pi_{Q^*}}Q^\dagger$. Composing with $(\mathcal{T}^{\pi_{Q^*}})^{-1}$ we obtain $ Q^\dagger=Q^*$.

Overall, with $(\pi^*,r^*)$ the unique saddle point of $L$, having defined $Q^* = (\mathcal{T}^*)^{-1} r^*$, we have shown
\begin{equation*}
    \min_{\pi\in\Pi} \max_{Q\in\Omega} \mathcal{J}(\pi, Q) = \mathcal{J}(\pi^*, Q^*),
\end{equation*}
and $\pi^* = \pi_{Q^*}$.

Now, we show the same holds for the max-min of $\J$. We can relate $Q^{**} := \argmax_Q \min_\pi \J(\pi,Q)$ and $\pi_{Q^{**}}$ to the saddle point of $L(\pi, r)$. 
We have 
\begin{align*}
    \min_\pi \J(\pi, Q) &= \min_\pi L(\pi, \T^\pi Q) &\text{ by def. of $\J$}
    \\
    &= L(\pi_Q, \T^{\pi_Q} Q) &\text{ by Lemma~\ref{lemma:C2}}
    \\
    &= L(\pi_Q, \T^* Q) &\text{ by Lemma~\ref{lemma:pi_q_star}}
    \\
    &= \E_{\rho_E}[\phi(\T^* Q)] - \E_{\rho_{\pi_Q}}[(\T^* Q)- \ln\pi_Q] &\text{ by def. of $L$}
    \\
    &= \min_\pi \E_{\rho_E}[\phi(\T^* Q)] - \E_{\rho_{\pi}}[(\T^* Q)- \ln\pi]  &\text{ by Lemma~\ref{lemma:pi_q_star}}
    \\
    &= \min_\pi L(\pi, \T^*Q).
\end{align*}
We therefore have that,
$$
    Q^{**} 
    = \argmax_Q \min_\pi \J(\pi, Q)
    = \argmax_Q \min_\pi L(\pi, \T^*Q).
$$
Recalling that $(\pi^*, r^*)$ is the saddle point of $L$, we have 
$r^* = \T^* Q^{**}$  as  $\T^*$ is bijective. However, by definition $Q^* = (\mathcal{T}^*)^{-1} r^*$, which readily implies that $Q^{**} = Q^*$.

We have just shown that
\begin{equation*}
    \max_{Q\in\Omega}\min_{\pi\in\Pi}  \mathcal{J}(\pi, Q) = \mathcal{J}(\pi_{Q^*}, Q^*) = \mathcal{J}(\pi^*,Q^*) = \min_{\pi\in\Pi}\max_{Q\in\Omega}  \mathcal{J}(\pi, Q).
\end{equation*}

Therefore, the saddle point of $L$ uniquely corresponds to the saddle point $(\pi^*, Q^*)$ of $\mathcal{J}$, given as $(\pi^*,r^*)$ for $r^*= \mathcal{T}^{\pi^*} Q^*$.

This forms the proof for Proposition \ref{prop:1}, \ref{prop:2}.

\paragraph{Proof for Proposition \ref{prop:3}}
We have,
$$
    \mathcal{J^*}(Q) = \mathbb{E}_{\rho_{E}}[\phi(Q(s, a) - \gamma \mathbb{E}_{s' \sim \mathcal{P}(\cdot|s,a)}V^*(s'))]  - (1- \gamma) \mathbb{E}_{p_0}[V^*(s_0)].
$$

As log-sum-exp is convex, $V^*(s) = \log \sum_{a} \exp {Q(s, a)}$ is convex in Q. Then concavity follows from the fact that the first term, $\phi(Q(s, a) - \gamma \mathbb{E}_{s' \sim \mathcal{P}(\cdot|s,a)}V^*(s'))$ is concave, as it is a concave function composed with a non-decreasing concave function. 

\subsection{Generalization}
\label{sec:general}

\begin{figure}[H]
\centering
\includegraphics[width=0.8\textwidth, scale=0.2]{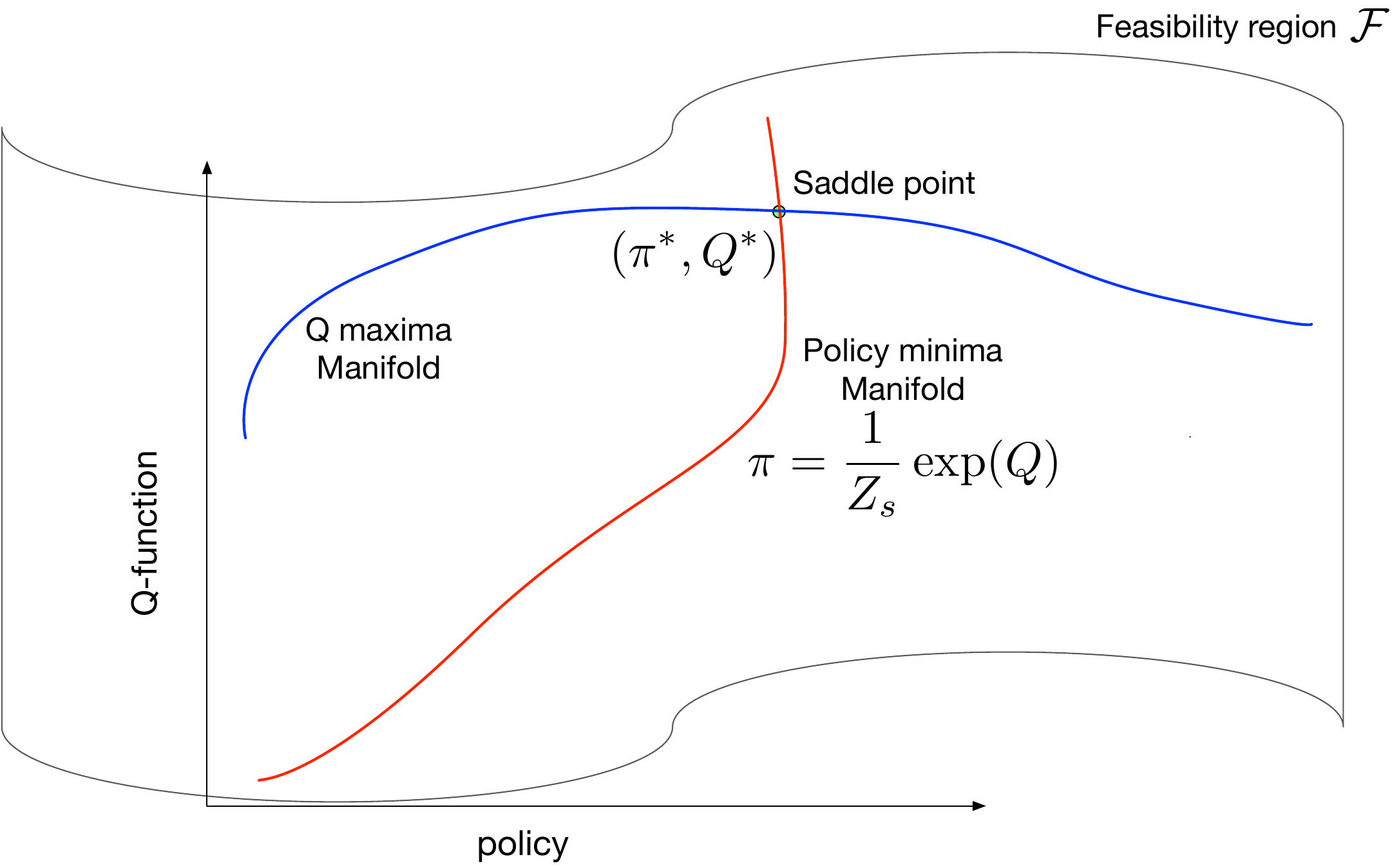}
\vskip -5pt
\caption{\small\textbf{Feasibility region in Q-policy space.}}
\label{fig:feasible}
\vskip -10pt
\end{figure}

In the above section, we made a monotonicity assumption on $\phi$ in Lemma C.2. We show that we can relax this assumption and the saddle point properties still hold, although $\mathcal{J}$ is not so well-behaved everywhere anymore.

For a fixed $\pi$, the optimizer of the concave problem, $\max_r L(\pi, r) = \mathbb{E}_{\rho_{E}}[\phi(r(s, a))]  - \mathbb{E}_{\rho}[r(s, a)] - H(\pi) $ satisfies\footnote{A concave function may not be differentiable everywhere and in general, we get a condition on the subdifferential of $\phi$: $\rho/\rho_E \in \partial \phi(r)$.}:

$$
\phi'(r) \rho_E  - \rho  = 0.
$$

Thus, $\phi'(r(s, a)) = {\rho(s, a)}/{\rho_E(s, a)} \in [0, \infty)$. This tells us that there exists a set of rewards $\mathcal{R}_\phi$, such that $\phi$ is non-decreasing on this set. For a concave $\phi$, $\mathcal{R}_\phi$ is just the convex set of reals that are on the left of its maxima.

\begin{lemma}
\label{lemma:C3}
Define a convex \textbf{feasibility region} on the Q-policy space:
$$
\mathcal{F}_\phi = \{(\pi, Q): r = \mathcal{T}^\pi Q \in \mathcal{S} \times \mathcal{A} \rightarrow \mathcal{R_\phi} \}.
$$
Then, for a given $\pi$, any optimal $Q= \argmax_{Q'}  \mathcal{J}(\pi, Q')$  has to lie in $\mathcal{F}_\phi$.
\end{lemma}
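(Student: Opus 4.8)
The plan is to reduce the membership claim to the first-order optimality condition for $L(\pi, \cdot)$ that is recorded immediately before the statement, and then read off from the sign of $\phi'$ that the optimal reward lands in the non-decreasing region $\mathcal{R_\phi}$. The whole argument can be carried out at a fixed policy $\pi$, so I would work in the reward space and transport the conclusion back to $Q$ through the bijection $\mathcal{T}^\pi$.

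First I would dispatch convexity. Since $\phi$ is concave, its non-decreasing region $\mathcal{R_\phi}$ is exactly $(-\infty, x^*]$ with $x^* = \argmax_x \phi(x)$ (or all of $\mathbb{R}$ when $\phi$ is globally non-decreasing), which is an interval and hence convex. For a fixed $\pi$ the map $Q \mapsto \mathcal{T}^\pi Q$ is affine in $Q$ (as used in the proofs of Lemma~\ref{lemma:3.1} and Lemma C.1), so the slice $\{Q : \mathcal{T}^\pi Q(s,a) \in \mathcal{R_\phi}\ \forall (s,a)\}$ is the preimage of a convex set under an affine map, hence convex. I would flag that the joint convexity of $\mathcal{F}_\phi$ over $(\pi, Q)$ is more delicate, since $(\pi, Q) \mapsto \mathcal{T}^\pi Q$ is bilinear (through the $V^\pi$ term) rather than jointly affine; I would either state convexity on fixed-$\pi$ slices or argue it separately, and I expect this to be the one genuinely fiddly bookkeeping point.

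For the membership claim I would proceed as follows. By Lemma~\ref{lemma:3.2}, $\mathcal{J}(\pi, Q) = L(\pi, \mathcal{T}^\pi Q)$, and by Lemma~\ref{lemma:3.1} the map $Q \mapsto \mathcal{T}^\pi Q = r$ is a bijection of $\Omega$ onto $\mathcal{R}$. Hence $Q^* \in \argmax_Q \mathcal{J}(\pi, Q)$ if and only if $r^* = \mathcal{T}^\pi Q^* \in \argmax_r L(\pi, r)$. Because $L(\pi, \cdot)$ is concave (equivalently $\mathcal{J}(\pi, \cdot)$ is concave in $Q$, which is the content of the proof of Lemma C.1), stationarity is necessary and sufficient for a global maximum, and it is precisely the condition written just above the lemma: $\phi'(r^*(s,a))\,\rho_E(s,a) - \rho(s,a) = 0$, i.e. $\phi'(r^*(s,a)) = \rho(s,a)/\rho_E(s,a) \geq 0$ for every $(s,a)$. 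Since $\phi$ is concave its derivative is non-increasing, so a non-negative slope forces $r^*(s,a)$ to lie at or to the left of the peak $x^*$; that is exactly $r^*(s,a) \in \mathcal{R_\phi}$. As this holds componentwise, $r^*$ maps into $\mathcal{R_\phi}$ and therefore $(\pi, Q^*) \in \mathcal{F}_\phi$.

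The main obstacle I anticipate is handling the two places where the clean first-order argument can break: non-differentiability of $\phi$ and vanishing expert occupancy. For the former I would replace $\phi'(r) = \rho/\rho_E$ by the subgradient condition $\rho/\rho_E \in \partial\phi(r)$ (as the paper's own footnote indicates); for a concave $\phi$ one has $\partial\phi(r) = [\phi'_+(r), \phi'_-(r)]$ with $\phi'_-(r) \geq \phi'_+(r)$, and a non-negative element of this interval forces $\phi'_-(r) \geq 0$, which again places $r$ in the non-decreasing region $\mathcal{R_\phi}$. For state-actions with $\rho_E(s,a) = 0$ the stationarity equation is vacuous, so I would restrict the claim to the support of $\rho_E$ (all that enters $\mathcal{J}$ through its $\mathbb{E}_{\rho_E}$ term) and note that off-support values of $r^*$ do not affect optimality.
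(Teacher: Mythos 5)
Your proof is correct and follows essentially the same route as the paper's: pass from $Q$ to $r=\mathcal{T}^\pi Q$ via the bijection, invoke the first-order condition $\phi'(r)=\rho/\rho_E\geq 0$, and conclude from concavity of $\phi$ that each reward component lies in the non-decreasing region $\mathcal{R}_\phi$. You additionally justify the convexity claim, the subgradient case, and the $\rho_E=0$ degeneracy, all of which the paper leaves implicit (the subgradient point only in a footnote), but these are refinements of the same argument rather than a different approach.
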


\begin{proof}
If $Q$ is optimal, then $\mathcal{T}^\pi Q$ maximizes $L(\cdot, r)$, and so it's corresponding $r$ is optimal. For a fixed $\pi$, and any $(s, a) \in \mathcal{S} \times \mathcal{A}$, the optimal reward has to satisfy $\phi'(r(s, a)) = {\rho(s, a)}/{\rho_E(s, a)}$. Thus, each component of the reward vector lies in $\mathcal{R_\phi}$. This tells us $\mathcal{T}^\pi Q$ lies in the required region.
\end{proof}

We get two properties in the feasibility region $\mathcal{F}_\phi$:
\begin{enumerate}
\item $\argmax_Q \mathcal{J}(\cdot, Q)$ lies in $\mathcal{F}_\phi$,
\item $\phi$ is non-decreasing, so lemma C.2 holds in this region.
\end{enumerate}

We just need one last lemma to prove the existence of a unique saddle point:

\begin{lemma}
\label{lemma:C4}
A saddle point exists only at the intersection of two curves: $\argmax_Q \mathcal{J}(\cdot, Q)$ and  $\argmin_\pi \mathcal{J}(\pi, \cdot)$.
\end{lemma}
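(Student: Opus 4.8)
The plan is to unpack the definition of a saddle point and match its two defining inequalities to the two loci named in the statement. Recall from Proposition~\ref{prop:1} that $\mathcal{J}$ is optimized in the sense $\max_{Q \in \Omega}\min_{\pi \in \Pi}$, so I take $(\pi^*, Q^*)$ to be a saddle point precisely when
\[
\mathcal{J}(\pi^*, Q) \le \mathcal{J}(\pi^*, Q^*) \le \mathcal{J}(\pi, Q^*) \qquad \text{for all } \pi \in \Pi,\ Q \in \Omega .
\]
The entire content of the lemma is then to read off each inequality in isolation. First I would use the left inequality $\mathcal{J}(\pi^*, Q) \le \mathcal{J}(\pi^*, Q^*)$, which says that $Q^*$ maximizes $Q \mapsto \mathcal{J}(\pi^*, Q)$, i.e. $Q^* \in \argmax_{Q} \mathcal{J}(\pi^*, Q)$, so the pair $(\pi^*, Q^*)$ lies on the curve $\argmax_Q \mathcal{J}(\cdot, Q)$. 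Symmetrically, the right inequality $\mathcal{J}(\pi^*, Q^*) \le \mathcal{J}(\pi, Q^*)$ says that $\pi^*$ minimizes $\pi \mapsto \mathcal{J}(\pi, Q^*)$, i.e. $\pi^* \in \argmin_\pi \mathcal{J}(\pi, Q^*)$, placing $(\pi^*, Q^*)$ on the curve $\argmin_\pi \mathcal{J}(\pi, \cdot)$. A point lying on both loci is by definition in their intersection, which establishes the claimed necessary (``only'') direction.

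To make this statement meaningful I would next argue that the two loci are genuine single-valued curves, since this is what the downstream uniqueness argument needs. For the policy locus this is exactly Lemma~\ref{lemma:C2}: on the feasibility region $\mathcal{F}_\phi$, where $\phi$ is non-decreasing, $\mathcal{J}(\pi, \cdot)$ is quasiconvex in $\pi$ with the unique minimizer $\pi_Q = \frac{1}{Z_s}\exp(Q)$, so $\argmin_\pi \mathcal{J}(\pi, \cdot)$ is a well-defined map $Q \mapsto \pi_Q$. For the $Q$-locus I would use that $\mathcal{T}^\pi$ is an affine bijection in $Q$, so maximizing $\mathcal{J}(\pi, \cdot) = L(\pi, \mathcal{T}^\pi \cdot)$ over $Q$ is equivalent to maximizing the concave objective $L(\pi, \cdot)$ over $r$, whose maximizer is pinned down by the stationarity condition $\phi'(r) = \rho/\rho_E$; crucially, Lemma~\ref{lemma:C3} guarantees that this maximizer already lands inside $\mathcal{F}_\phi$, where $\phi$ is non-decreasing and the characterization is consistent.

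The main obstacle here is not the definitional unpacking, which is immediate, but the careful bookkeeping that lets this lemma feed the uniqueness conclusion. One must keep the optimization direction straight (maximize over $Q$, minimize over $\pi$) and verify that the $\argmax_Q$ locus supplied by Lemma~\ref{lemma:C3} and the $\argmin_\pi$ locus supplied by Lemma~\ref{lemma:C2} both live in the common region $\mathcal{F}_\phi$; otherwise the relaxed (non-monotone $\phi$) setting could harbor spurious stationary points outside $\mathcal{F}_\phi$. Granting this, the two curves intersect in at most one point, and since Sion's minimax theorem already supplies existence of a saddle point, the intersection is nonempty and the saddle point is unique — which is the purpose for which Lemma~\ref{lemma:C4} is invoked.
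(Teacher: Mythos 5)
Your proof of the lemma itself is correct and reaches the same conclusion as the paper's, but by a somewhat different and arguably cleaner route. The paper parameterizes the loci as $f(\pi) = \argmax_Q \mathcal{J}(\pi, Q)$ and $g(Q) = \argmin_\pi \mathcal{J}(\pi, Q)$, starts from the minimax equality $\min_\pi \max_Q \mathcal{J} = \max_Q \min_\pi \mathcal{J}$, rewrites it as $\min_\pi \mathcal{J}(\pi, f(\pi)) = \max_Q \mathcal{J}(g(Q), Q)$, and asserts that this equality ``can only be satisfied when both the curves intersect.'' You instead unpack the pointwise saddle inequalities $\mathcal{J}(\pi^*, Q) \le \mathcal{J}(\pi^*, Q^*) \le \mathcal{J}(\pi, Q^*)$ and read off directly that $(\pi^*, Q^*)$ lies on both loci. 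Your version is the standard characterization of a saddle point and avoids the paper's informal final step, so it buys a tighter argument for the stated necessity direction; the paper's version is shorter but leans on an unproved ``can only be satisfied when'' claim. Your additional discussion of single-valuedness of the two loci and of the feasibility region $\mathcal{F}_\phi$ is not part of this lemma but correctly anticipates how it is combined with Lemmas C.2 and C.3; the one overstatement there is the closing claim that two single-valued curves ``intersect in at most one point,'' which does not follow from single-valuedness alone --- in the paper the uniqueness of the saddle point inside $\mathcal{F}_\phi$ comes from concavity/quasiconvexity plus Sion's theorem, and Lemma C.4 is used only to exclude saddle points outside $\mathcal{F}_\phi$.
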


\begin{proof}
 We parameterize the curves $f(\pi) = \argmax_Q \mathcal{J}(\pi, Q)$ and  $g(Q) = \argmin_\pi \mathcal{J}(\pi, Q)$. A saddle point has to satisfy $ \underset{\pi}{\min} \ \underset{Q}{\max} \ \mathcal{J}(\pi, Q) = \underset{Q}{\max} \ \underset{\pi}{\min}   \ \mathcal{J}(\pi, Q)$. This implies, $ \underset{\pi}{\min} \ \mathcal{J}(\pi, f(\pi)) = \underset{Q}{\max}  \ \mathcal{J}(g(Q), Q)$. This equation can only be satisfied when both the curves intersect.
 
 Therefore, any saddle point lies at the intersection of the Q-maxima and policy minima curves.
\end{proof}

We have established that within the feasibility region $\mathcal{F}_\phi$, lemma C.1 and C.2 hold. Thus, there exists a single saddle point in this region. Furthermore, $\argmax_Q \mathcal{J}(\cdot, Q)$ lies in $\mathcal{F}_\phi$ so lemma C.4 tells us there cannot exist any other saddle points outside $\mathcal{F}_\phi$.

This completes our proof of the existence of a unique saddle point of $\mathcal{J}$ for any concave $\phi$.

We summarize these properties in Fig~\ref{fig:feasible}.
\subsection{Convergence Guarantee}

\begin{figure}[ht]
\centering
\includegraphics[width=\textwidth]{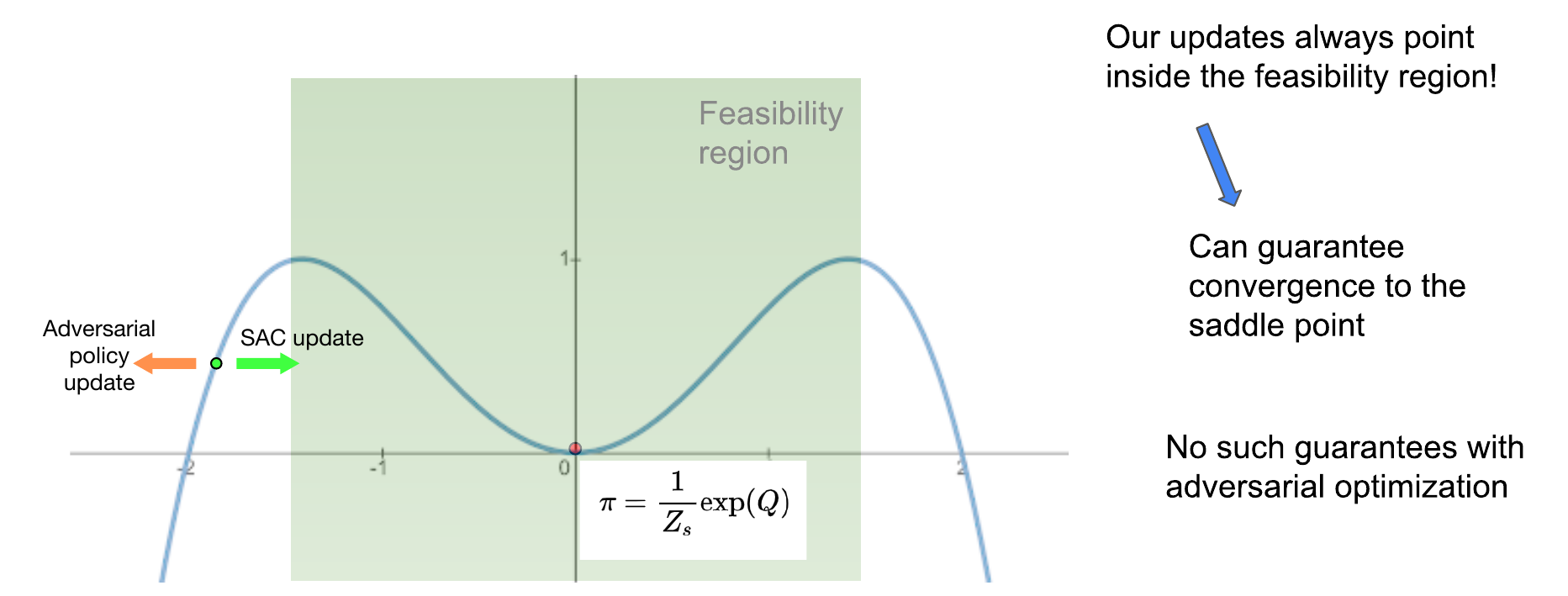}
\vskip -5pt
\caption{\small\textbf{Policy learning.} Comparison of SAC vs adverserial policy update outside the feasibility region.}
\label{fig:update}
\end{figure}

For any $\phi$, our soft actor-critic (SAC) policy update (Sec 4.4) minimizes the KL divergence between the current policy $\pi$ and $\pi_Q$, always pointing towards the the policy minima manifold whereas adversarial policy update relying on the local gradient can diverge away from it (outside the feasibility region). This has the effect, that with sufficient steps, learning with SAC updates is guaranteed to converge to the saddle point, but no such guarantee exists with adversarial policy updates.

\subsection{Effect of various divergences}

In the $Q$-policy space, the policy minima manifold $\pi_Q$ is an energy-based model of $Q$, and doesn't depend on the choice of regularizer $\psi$.

Whereas, the $Q$-maxima manifold is dependent on the choice of regularizer. As the saddle point is formed by the intersection of these two curves (Lemma C.4), we can study how different divergences will affect the saddle point which solves the regularized-IRL problem.

We have that for a choice of $\phi$, the $Q$-maxima manifold is given by the condition:
$$
\phi'(r) \rho_E  - \rho  = 0.
$$
Thus on the maxima manifold, $r = (\phi')^{-1}( \rho/ \rho_E)$. We visualize this in the Fig.~\ref{fig:saddle}, we see that different statistical distances correspond to different saddle points. The overall effect is that that at the saddle point $\pi^*$ remains close to $\pi_E$, but may not be exactly equal as the regularization constrains the policy class.

\begin{figure}[ht]
\centering
\includegraphics[width=0.6\textwidth]{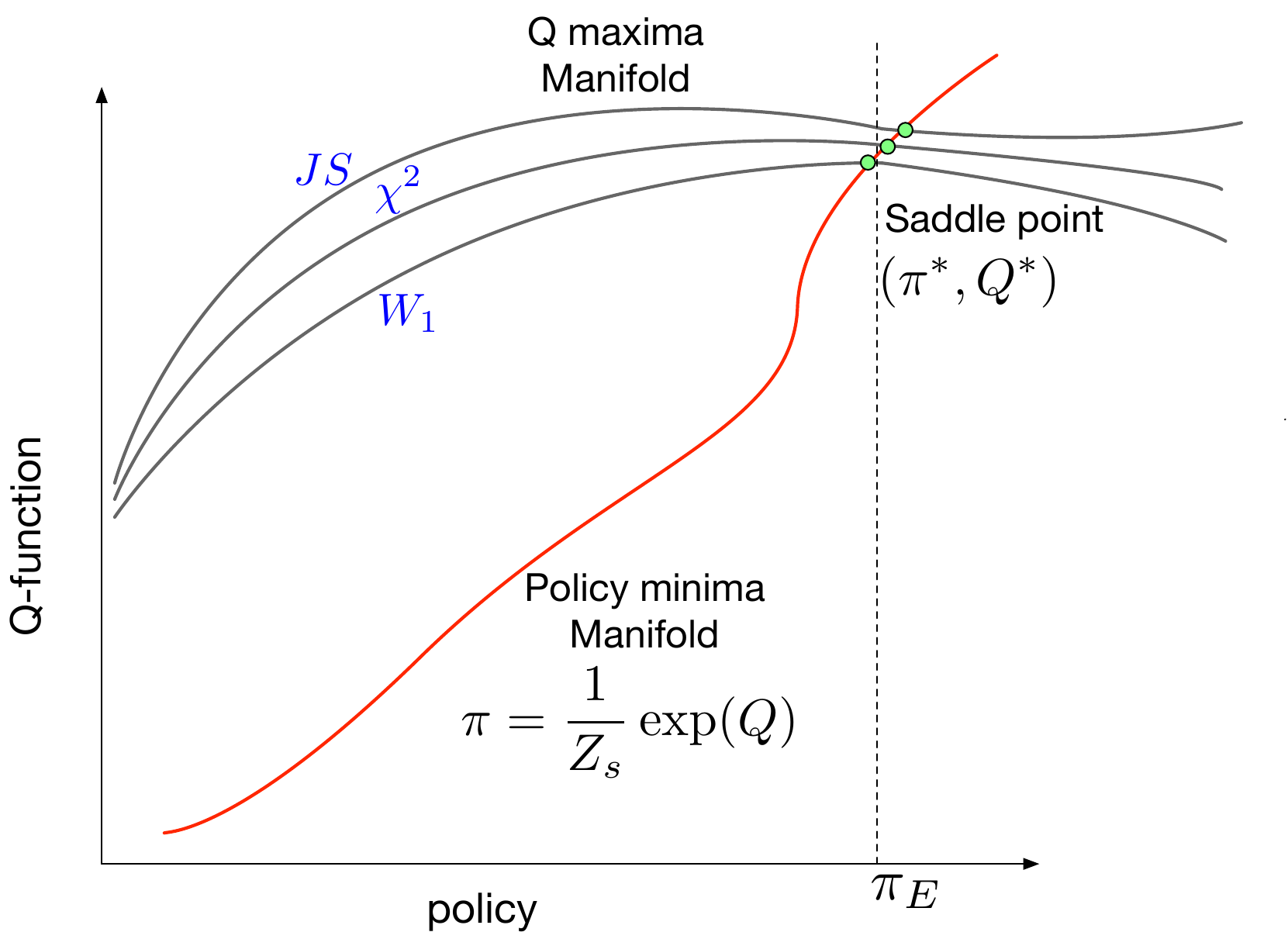}
\vskip -5pt
\caption{\small\textbf{Saddle points.} Effect of regularizer $\psi$ on the saddle point. (not to scale)}
\label{fig:saddle}
\end{figure}

In general, $\pi^*$ is the solution to the (transcendental) equation:
\begin{equation}
\phi'(\mathcal{T}^\pi Q) \rho_E  - \rho_Q  = 0,
\end{equation}
where $\rho_Q$ is the occupancy measure corresponding to $\pi_Q = \frac{1}{Z}\exp{Q}$.

For $f$-divergences, this can be simplified as
\begin{equation}
\mathcal{T}^\pi Q = -f'\left(\frac{\rho_Q}{\rho_E}\right).
\end{equation}

For an IPM parametrized by $\mathcal{F}$, $\phi'(x) = 0$ and the equation will be maximized on the boundary of $\mathcal{F}$, without a closed form equation.

Now, SQIL~\cite{Reddy2020SQILIL} uses the reward of the form $1-0$ dependent on sampling from the expert or policy distributions. This condition corresponds to a maxima manifold in this space, such that instead of the reward being a function of the ratio density of the expert and the policy, it is stochastically dependent on the sampling. Thus, instead of being fixed, the manifold will shift stochastically with the sampling. This has the corresponding effect of shifting the saddle point and can result in numerical instabilities near convergence, as a unique convegence point does not exist for the SQIL style update.

Similary, we can analyze ValueDICE~\cite{Kostrikov2020Imitation}. ValueDICE mimimizes the Reverse-KL divergence between the expert and policy using the Donsker-Varadhan (DV) variational form of Reverse-KL. This corresponds to the maxima manifold with rewards satisfying $r = \log(\rho_E / \rho)$, but suffers from two issues: 1) biased gradient estimates, and 2) adversarial policy updates.

We have already shown how adverserial policy updates are not optimal, we will now focus on fixing the biasing issue with the Reverse-KL distance.

First, the DV representation is given as:
$$
KL(\rho, \rho_E) = \max_{r \in \mathcal{R}} \log  \mathbb{E}_{\rho_E} [e^{-r(s, a)}] - \mathbb{E}_{\rho} [r(s, a)]
$$

This corresponds to  a $\phi(x) = \log  \mathbb{E}_{\rho_E} [e^{-x}]$,
even though its outside the class of $\psi$ we study, it satisfies all the previous properties we developed (Lemma C.1 - C.4).

Now, to unbias the Reverse-KL representation, we propose using the $f$-divergence representation, with $f(t) = t\log t - t + 1$. Then the $f$-divergence for this choice of $f$ is just the Reverse-KL divergence, but it's variational form is:
$$
\max_{r \in \mathcal{R}}  \mathbb{E}_{\rho_E} [-e^{-r(s, a)}] - \mathbb{E}_{\rho} [r(s, a)] - 1
$$ and  corresponds to $\phi(x) = -e^{-x}$ with rewards $r = \log(\rho_E / \rho)$.

Thus, we can obtain the same $Q$-maxima manifold to minimize the Reverse-KL distance as ValueDICE by using this new representation, while avoiding the biasing issue.

\paragraph{Effect of different forms of Reverse-KL}
We test IQ-Learn with different variational representations of Reverse-KL:   Donsker-Varadhan (DV), Original KL (KL), ours Modified KL (KL-fix). We use the LunarLander environment with our offline IL experimental settings and a single expert trajectory. All experiments are repeated over 10 seeds. We show a box-plot of the environment returns for different variational forms and find that our proposed form (KL-fix) and the DV representation perform similarly. The original f-divergence form of KL remains problematic, performing noticeably worse, which may be due to an issue with its corresponding Q-maxima manifold. Compared to DV, our proposed KL variation representation has the advantage of giving unbiased gradient estimates and can be more stable.

\begin{figure}[h]
\vskip -10pt
\centering
\includegraphics[width=0.9\textwidth]{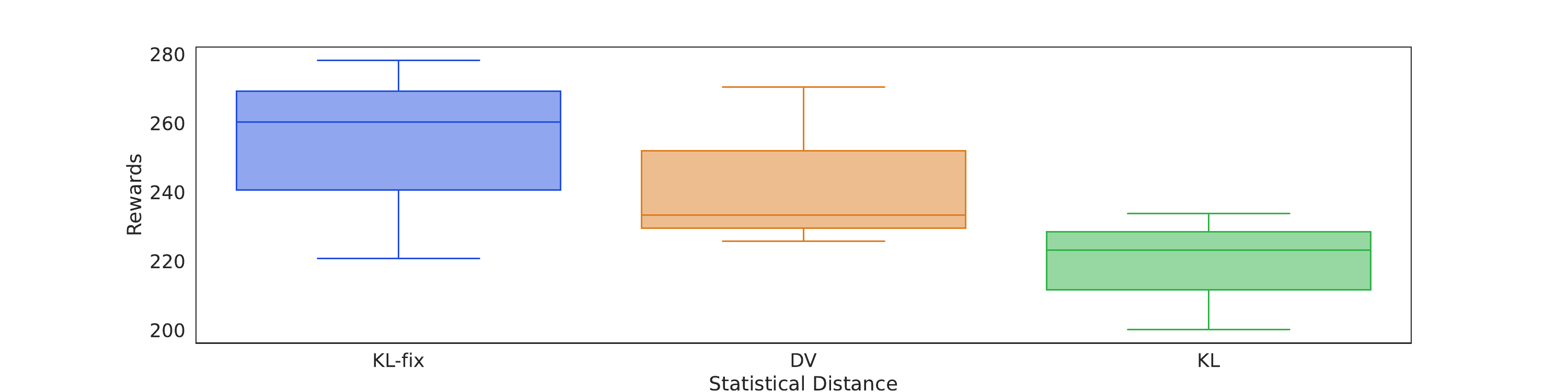}
\vskip -5pt
\caption{\small\textbf{Reverse-KL ablation.} We show environment returns for different variational forms of Reverse-KL on LunarLander.}
\label{fig:div_KL} 
\vskip -10pt
\end{figure}

\subsection{A theoretical guarantee for A.1}
\label{subappx:monotonic}

Motivated by learning a state-dependent reward function, Section~\ref{sec:state_rewards} proposes an alternative objective function for learning $Q$. Here, we provide a form of monotonic improvement guarantee for an idealized version of the objective in Eq.~\ref{eq:state-rewards}. 

Before this, lets introduce some notations that will be useful, as we'll now work with both state and state-action occupancy measures. Let write $\mu\in\Delta_{\states \times \actions}$ an occupancy measure on state-actions, $\rho\in\Delta_{\states}$ an occupancy measure on states. For $\rho \in\Delta_\states$ and $\pi\in \Delta_\actions^\states$, write $\rho \pi \in \Delta_{\states\times\actions}$ defined as $(\rho \pi)(s,a) = \rho_\pi(s)\pi(a|s)$.
Now, let recall Eq.~\ref{eq:state-rewards}:
\begin{equation}
    \J^*(Q) = \E_{s\sim\rho_E}[ \E_{a \sim \text{stop\_grad}(\pi_Q)(\cdot|s)}[\phi((\T^*Q)(s,a))]]  - (1-\gamma) \E_{s\sim\rho_0}[\log \sum_a \exp Q(s,a)].
    \label{eq:Jstar_div}
\end{equation}
Now, we'll consider a more conservative version of this objective function. For a given policy $\pi_k$, define 
\begin{equation*}
    \J_k^*(Q) = \E_{s\sim\rho_E}[ \E_{a \sim \pi_k(\cdot|s)}[\phi((\T^*Q)(s,a))]]  - (1-\gamma) \E_{s\sim\rho_0}[\log \sum_a \exp Q(s,a)].
\end{equation*}
Now, for any initial $Q_0$, define for $k\geq 0$
\begin{equation}
    \begin{cases}
        \pi_k = \pi_{Q_k} = \sm(Q_k)
        \\
        Q_{k+1} = \argmax_Q \J_k^*(Q)
    \end{cases}.
    \label{eq:analysed}
\end{equation}
Eq.~\eqref{eq:Jstar_div} can be see as an optimistic version of Eq.~\eqref{eq:analysed}, in the sense that instead of optimizing to the end each subproblem, we update $\pi_k$ at each gradient step. It is Eq.~\eqref{eq:analysed} that we'll analyse. Before that, we need some assumption: $\phi$ is concave and non-decreasing. Write $f$ the convex conjugate of $-\phi(-x)$ (that is $f^*(x) = - \phi(-x)$), $f$ satisfies $f(1)=0$.


In other words, we restrict ourselves to (a subclass of) $f$-divergences, but it should be possible to adapt the analysis to other cases (eg, IPMs). The core result is the following.

\begin{theorem}
    Under the previous assumption, the sequence of policies $(\pi_k)_{k\geq 0}$ produced by Eq.~\eqref{eq:analysed} satisfies a monotonic improvement guarantee, in the sense that for any $k\geq 0$, we have
    \begin{equation}
        D_f(\rho_{\pi_{k+1}}||\rho_E) - H(\pi_{k+1}) \leq D_f(\rho_{\pi_k}||\rho_E) - H(\pi_k).
    \end{equation}
\end{theorem}
\begin{proof}
    Define for $k\geq 0$
    \begin{align*}
        L_k(\pi, r) &= \E_{s\sim\rho_E}[\E_{a\sim \pi_k(\cdot|s)}[\phi(r(s,a))]] - \E_{s,a\sim\rho_\pi}[r(s,a) - \ln\pi(a|s)]
        \\ \text{and }
        \J_k(\pi, Q) &= L_k(\pi, \mathcal{T}^\pi Q). 
    \end{align*}
    We have that
    \begin{align*}
        \J^*_k(Q) &= \argmin_\pi \J_k(\pi, Q) &\text{(minimized for $Q_{k+1}$)}
        \\ \text{and }
        \pi_{Q_{k+1}} &= \argmin_\pi \max_r L_k(\pi, r)
        \\
        &= \argmin_\pi D_f(\mu_{\pi}|| \rho_E \pi_k) - H(\pi)
        &\text{(by def. of $L_k$).}
    \end{align*}
    This implies that
    \begin{equation*}
        D_f(\mu_{\pi_{k+1}}|| \rho_E \pi_{k}) - H(\pi_{k+1}) \leq D_f(\mu_{\pi_{k}}|| \rho_E \pi_{k}) - H(\pi_{k}).
    \end{equation*}
    We'll work both sides of this bound. For the r.h.s., we have that
    \begin{align*}
        D_f(\mu_{\pi_{k}}|| \rho_E \pi_{k}) &= \sum_{s,a} \rho_E(s)\pi_k(a|s) f\left(\frac{\rho_{\pi_k}(s) \pi_k(a|s)}{\rho_E(s)\pi_k(a|s)}\right)
        \\
        &= \sum_{s} \rho_E(s) \sum_a \pi_k(a|s) f\left(\frac{\rho_{\pi_k}(s)}{\rho_E(s)}\right)
        \\
        &= \sum_s \rho_E(s) f\left(\frac{\rho_{\pi_k}(s)}{\rho_E(s)}\right)
        \\
        &= D_f(\rho_{\pi_k}||\rho_E).
    \end{align*}
    For the l.h.s., we have
    \begin{align*}
        D_f(\mu_{\pi_{k+1}}|| \rho_E \pi_{k}) &= \sum_{s} \rho_E(s) \sum_a \pi_k(a|s) f\left(\frac{\rho_{\pi_{k+1}}(s) \pi_{k+1}(a|s)}{\rho_E(s) \pi_k(a|s)}\right)
        \\
        &\geq  \sum_{s} \rho_E(s) f \left( \sum_a \pi_k(a|s) \frac{\rho_{\pi_{k+1}}(s) \pi_{k+1}(a|s)}{\rho_E(s) \pi_k(a|s)} \right) &\text{(by Jensen)}
        \\
        &= \sum_{s} \rho_E(s) f \left( \frac{\rho_{\pi_{k+1}}(s)}{\rho_E(s)} \right)
        \\
        &= D_f(\rho_{\pi_{k+1}}||\rho_E).
    \end{align*}
    Putting things together, we get
    \begin{align*}
        D_f(\rho_{\pi_{k+1}}||\rho_E) - H(\pi_{k+1}) &\leq 
        D_f(\mu_{\pi_{k+1}}|| \rho_E \pi_{k}) - H(\pi_{k+1}) 
        \\
        &\leq D_f(\mu_{\pi_{k}}|| \rho_E \pi_{k}) - H(\pi_{k})
        \\
        &= D_f(\rho_{\pi_k}||\rho_E) - H(\pi_{k}),
    \end{align*}
    and thus the stated result.
\end{proof}
\section{Appendix D}
\label{appx:D}





\subsection{Implementation Details}
For reproducibility, we release all our expert demonstrations, either trained from scratch or obtained using Stable Baslines3 Zoo~\cite{rl-zoo3}. We also release an efficient expert data generation and data-loading pipeline, that can work with pre-trained Stable Baselines3 models, or arbitary pytorch RL agents. We hope this will make benchmarking for IL easier and help with standardization. Our code is available at \url{https://github.com/Div99/IQ-Learn}.

\subsubsection{Offline Setup}
We mimic prior works' settings \cite{jarrett2020strictly, chan2021scalable} to make our results directly comparable for offline IL.

\paragraph{Expert Demonstrations}
We obtain expert demonstrations by training a DQN~\cite{Mnih2013PlayingAW} agent from scratch for all the environments tested.
Our trajectories were then sub-sampled for every 20th step in Acrobot
and CartPole, and every 5th step in LunarLander.

\paragraph{Training Setup} We test with (1,3,7,10,15) expert trajectories
uniformly sampled from a pool of 1000 expert demonstrations. Each algorithm is trained until
convergence and tested by performing 300 live rollouts in the simulated environment and recording the average episode rewards. We repeat this over 10 seeds, consequently with different initializations and seen trajectories.

\paragraph{Implementation}  All methods use neural networks with the
same architecture of 2 hidden layers of 64 units each connected by exponential linear unit (ELU) activation functions.

We use the original public code implementations of EDM, AVRIL and ValueDICE. Note, ValueDICE is adapted to discrete environments using an actor with  Gumbel-softmax distribution output.

\paragraph{Hyperparameters}
We use batch size $32$ and  $Q$-network learning rate $1e-4$ with  entropy coefficient $0.01$. We found learning rate of $1e-4$ worked best for IQ-Learn on discrete environments. We also found entropy coefficient values $[1e-2, 1e-3]$ to be optimal depending on the environment. Here, we don't use target updates as we found them to give no visible improvement and slow down the training.

\subsubsection{Online Setup}

\paragraph{Expert Demonstrations}
For Mujoco environments, we generate expert demonstrations from scratch using a Pytorch implementation of SAC. For Atari, we generate demonstrations using pre-trained DQN agents from Stable Baselines3 Zoo. For both, we generate a pool of 30 expert demonstrations and sample trajectories uniformly. For Mujoco results, we sample 1 expert demo and for Atari we sample 20 expert demos without any subsampling.

\paragraph{Implementation} For Mujoco, with all methods we use critic and actor networks with an MLP architecture with 2 hidden layers and 256 hidden units, keeping settings similar to original SAC~\cite{Haarnoja2018SoftAO}.  For Atari, with all methods we use a single convolution neural network same as the original DQN architecture~\cite{Mnih2013PlayingAW}. 
For IQ-Learn in continuous environments, for SAC policy updates we sample states from both policy and expert distributions. We regularize policy states in addition to expert states to improve the stability of learning $Q$-values. We use soft target updates and find them helpful for stabilizing the training.  

For BC and GAIL, we use the stable-baselines implementations. For SQIL, we use original public code for Atari environments. For ValueDICE, we use the open-sourced official code.

\paragraph{Hyperparameters}
For SAC style learning, we use default settings of critic learning rate $3e-4$ and policy learning rate values $[3e-4, 3e-5]$. We found $3e-5$ to work well in complex environments and remain stable, although $3e-4$ can be better with simpler environments (like Half-Cheetah). We use a fixed batch size of 256 and found entropy coefficient 0.01 to work well. We use soft target updates with the default SAC smoothing constant $\tau=0.05$. For DQN-style learning on Atari, we use $Q$-network learning rate $1e-4$ with entropy coefficient $1e-4$ and batch size 64. We found entropy coefficient values $[1e-3, 1e-4]$ to work well. We didn't find noticeable improvements with using target updates on Atari (with the exception of Space Invaders, where they stabilize the training).

\subsection{Additional Results}
\paragraph{Mujoco} We show additional results on Mujoco obtained using 10 expert trajectories in Table~\ref{tbl:mujoco10}. We find IQ-Learn gets state-of-art performance in all environments and reaches expert-level rewards.

\begin{table}[h]
    \centering
	\small
	\caption{\small \textbf{Mujoco Results.} We show our performance on MuJoCo control tasks using 10 expert trajectories.}  
	\vskip2pt
	\label{tbl:mujoco10}
  \begin{adjustbox}{max width=\textwidth}
	\begin{tabular}{l|c||c|c|c|c||c}

	Task & Random & BC & GAIL & ValueDICE & IQ (Ours) & Expert \\ \hline
	Hopper & $14 \pm 8$ & $1345 \pm 422$ & $3322 \pm 510$ & $3399 \pm 651$ & $\mathbf{3529 \pm 15}$ & $3533 \pm 39$  \\
    Half-Cheetah & $-282 \pm 80$ & $2701 \pm 950$ & $4280 \pm 1002$ & $4840 \pm 132$  &  $\mathbf{5154 \pm 82}$ & $5098 \pm 62$ \\
	Walker & $1 \pm 5$ & $3730 \pm 1440$ & $4417 \pm 420$ & $4384 \pm 345$ & $\mathbf{5212 \pm 85}$ & $5274 \pm 53$ \\ 
	Ant & $-70 \pm 111$ & $2272 \pm 472$ & $3997 \pm 312$ & $4507 \pm 265$ & $\mathbf{4683 \pm 67}$ & $4700 \pm 80$ \\ 
	Humanoid & $123 \pm 35$ & $2057 \pm 843 $ & $372 \pm 51$ & $2001 \pm 524 $ & $\mathbf{5288 \pm 73}$ & $5313 \pm 210$ \\ \hline
    \end{tabular}
    \end{adjustbox}
    \vskip0pt
\end{table}

\paragraph{Atari Suite.} We show detailed performance of IQ-Learn on Atari Suite environments using 20 expert demonstrations in Table~\ref{tbl:atari_full}. 

\begin{table}[h]
  \centering
   \small
  \caption{\textbf{Results on Atari Suite}. We show our results on Atari Suite tasks using 20 expert demonstrations.}
  \label{tbl:atari_full}
\begin{tabular}{l|c|c}

Env                  &  IQ (Ours) & Expert \\ \hline
Pong    & $19 \pm 2$   &     $21 \pm 0$         \\
Breakout    & $320 \pm 72$  & $376 \pm 34$   \\
Space Invaders & $807 \pm 102$  & $823 \pm 272$ \\
BeamRider & $3025 \pm 845$  & $4295 \pm 1173 $ \\ 
Seaquest & $2349 \pm 342$  & $2393 \pm 291$ \\ 
Qbert & $12940 \pm 2026$  & $11496 \pm 1988$ \\ \hline
\end{tabular}
\end{table}

\paragraph{Reward Correlations.} We show the Pearson correlation coefficient of our learnt rewards with environment rewards in  Table~\ref{tbl:reward_corr}.

\begin{table}[h]
  \centering
  \small
  \caption{\textbf{Reward Correlations}. We show pearson correlations between our learnt reward and the env rewards. }
  \label{tbl:reward_corr}
\begin{tabular}{l|c}

Env                  &  Reward correlation\\ \hline
Cartpole & 0.99 \\
LunarLander & 0.92 \\
Hopper & 0.99 \\
Half-Cheetah & 0.86 \\
Pong    & 0.67    \\ \hline
\end{tabular}
\vskip -5pt
\end{table}

\begin{wrapfigure}{r}{0.5\textwidth}
\vskip -25pt
\centering
\includegraphics[width=\linewidth]{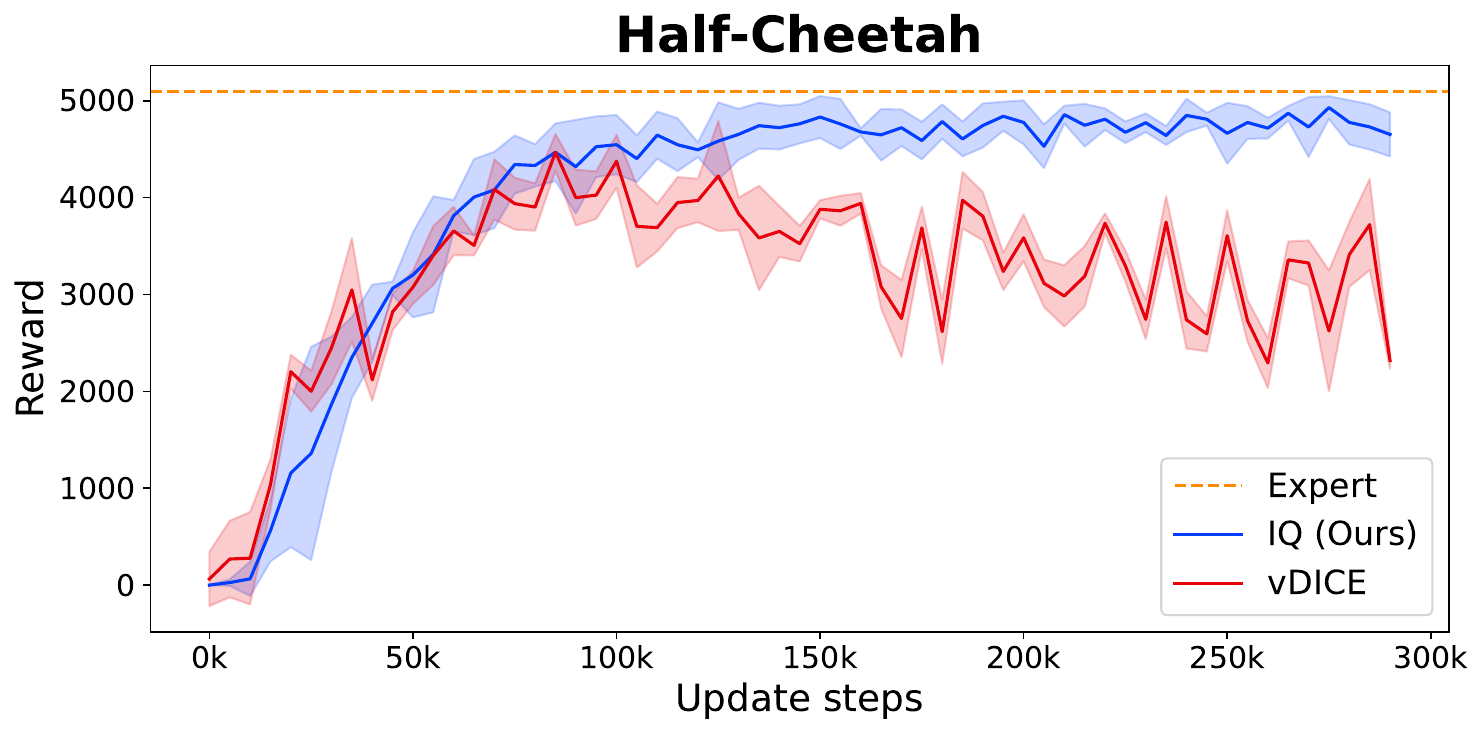}
\vskip -7pt
\caption{\small{\textbf{Half-Cheetah overfitting comparision}}}
\label{fig:cheetah}
\vskip -30pt
\end{wrapfigure}
\paragraph{Do we overfit?} Compared to ValueDICE, we don't observe overfitting using IQ-Learn with the number of update steps. We show a comparision on Half-Cheetah environment using one expert trajectory in Fig~\ref{fig:cheetah}. ValueDICE begins to overfit around 100k update steps, whereas IQ-Learn converges to expert rewards and remains stable.


\subsection{Recovering Rewards}
\begin{wrapfigure}{r}{0.35\textwidth}
\vskip -20pt
\centering
\includegraphics[width=\linewidth]{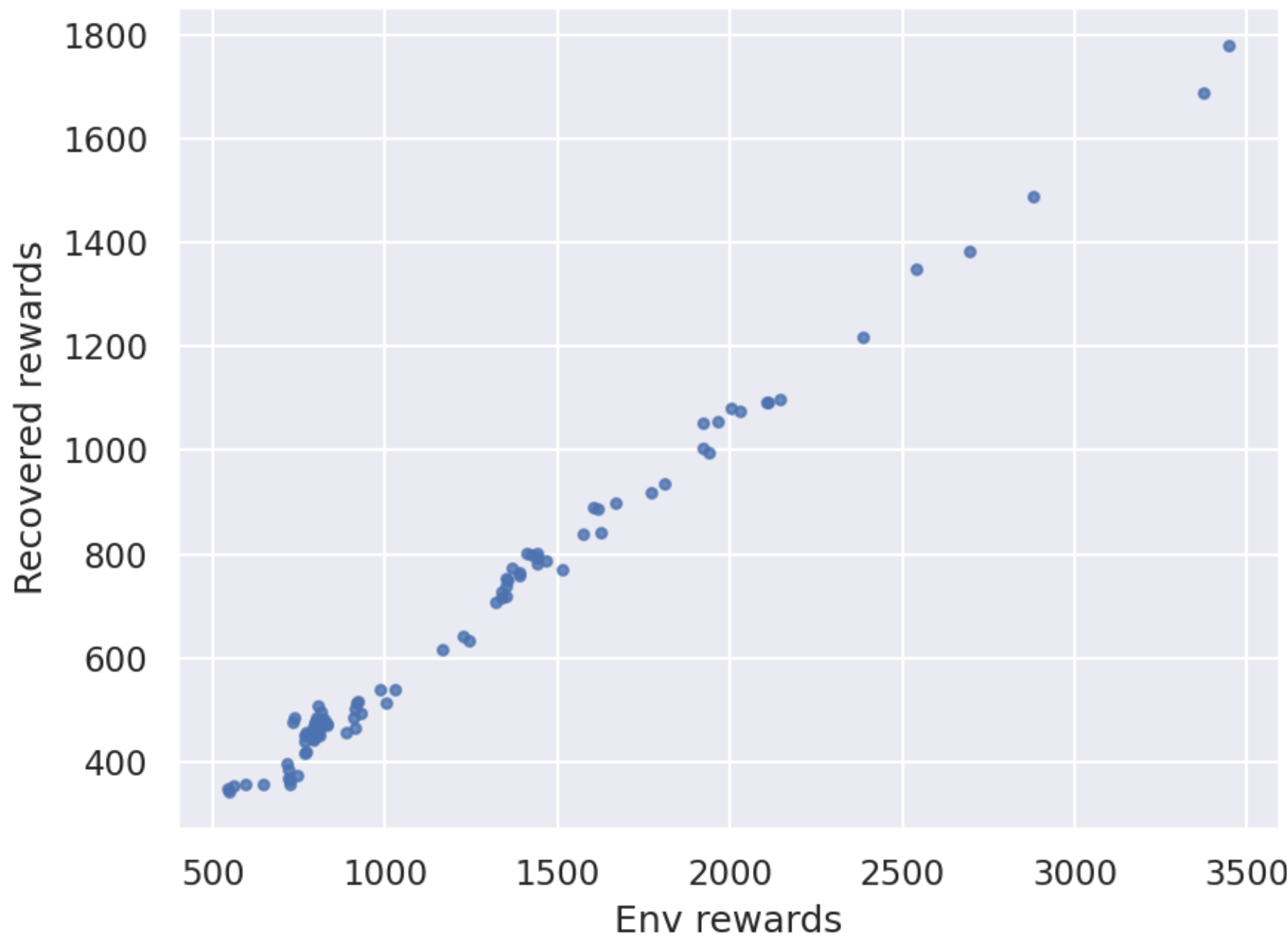}
\vskip -5pt
\caption{\small\textbf{Hopper correlations}}
\label{fig:corr} 
\vskip -20pt
\end{wrapfigure}

 We show visualizations of our reward correlations on the Hopper environment using 10 expert demonstrations in Fig~\ref{fig:corr}. We obtain a Pearson correlation of $0.99$ of our recovered episode rewards compared with the original environment rewards, showing that our rewards are almost linear with the actual rewards, and thus can be used for Inverse RL. Note, that to recover rewards with IQ-Learn, we need to sample the current state and the next state.
 
 We perform similar comparisons on GAIL and SQIL, obtaining Pearson coefficients of $0.90$ and $0.72$ respectively.

In the main paper, we also show recovered rewards on a simple grid environment by using sampling based $Q$-learning with a simple $Q$-network having two hidden layers. In the section below, we further compare IQ-Learn on a tabular setting. 

\paragraph{Tabular Inverse RL} 
To further validate IQ-Learn as a method for IRL and show we recover correct rewards, we directly compare with the classical Max Entropy IRL~\cite{Ziebart2008MaximumEI} method on a tabular Grid world setting, by using an open-source implementation\footnote{\url{https://github.com/yrlu/irl-imitation}}. We implement IQ-Learning as a modification to tabular value iteration. The classical method requires repeated backward and forward passes, to calculate soft-values and action probabilities for a given reward and optimize the rewards respectively. IQ-Learn skips the expensive backward pass and directly optimizes the rewards. We show comparision in Fig~\ref{fig:rewards}, where we find our method recovers very similar rewards while being more than 3x faster.

\begin{figure}[t]
\vskip -10pt
\centering
\includegraphics[width=\linewidth]{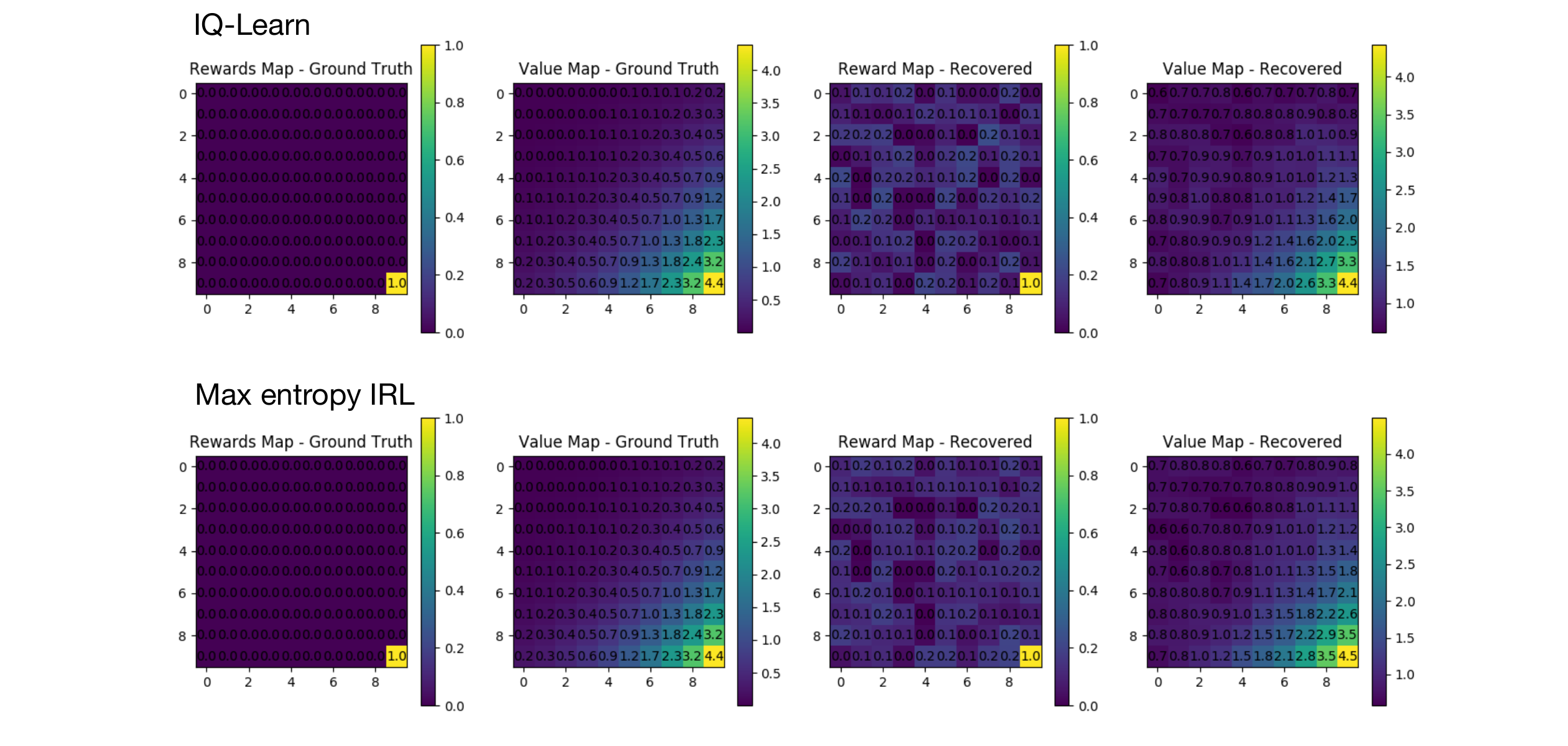}
\vskip -5pt
\caption{\small\textbf{Tabular Grid Rewards.} We recover similar rewards as Max entropy IRL (Ziebert et al.) while avoiding an expensive backward pass.}
\label{fig:rewards} 
\vskip -15pt
\end{figure}
\subsection{Imitation learning with Observations}
\begin{wraptable}{R}{0.5\linewidth}
  \centering
  \vskip -15pt
  \caption{\textbf{Results on ILO}. We show evironment returns using 1 and 10 expert demonstrations.}
  \label{tbl:ilo}
\begin{tabular}{l|c|c}

Env                  & 1 demo & 10 demos \\ \hline
CartPole    & $452 \pm 50$   &     $485 \pm 25$         \\
LunarLander    & $20 \pm 102$  & $220 \pm 69$   \\
Hopper & $2507 \pm 345$  & $3465 \pm 51$ \\ \hline
\end{tabular}
\vskip -10pt
\end{wraptable}
We show results for IQ-Learn trained with using only expert observations in Table~\ref{tbl:ilo}. We test on CartPole, LunarLander and Hopper environments with 1 and 10 expert demonstrations using online IL settings without any subsampling of trajectories. We find that with one expert demonstration, we get below expert-level rewards, and as expected, our performance suffers compared to with using expert actions. We find using 10 demonstrations is enough to reach expert-level performance in these simple environments.

Target updates are helpful in stabilizing the training in this setting.

\section{Appendix E}
\subsection{Dynamics-Aware Imitation Learning and the Loop MDP}
In this section we illustrate the importance of dynamics-awareness in imitation learning with a toy MDP based on the \texttt{Loop} MDP from~\cite{ross2010efficient}.
The MDP is shown in Fig~\ref{fig:loop-MDP}. The MDP has a fixed length of 100 steps. The key problem for dynamics-unaware algorithms, such as behavioural cloning, is the behaviour in state $s_2$. If we happen to use an expert trajectory where the expert never visits state $s_2$, then the learned policy will not necessarily have the right behaviour in state $s_2$. This is because the objective for behavioural cloning is to match the action probabilities in the expert states, and $s_2$ is not in the expert states visited. However, the dynamics-aware methods are able to deduce that taking action $a_1$ in state $s_2$ will return the imitator to state $s_1$. Although this MDP is simple, it illustrates a general advantage of dynamics-aware methods which will hold in many situations. In particular, it will hold for environments where the expert may keep very close to an optimal trajectory, yet it is possible to recover back to that trajectory if a small mistake is made, such as in autonomous lane-keeping in a car.
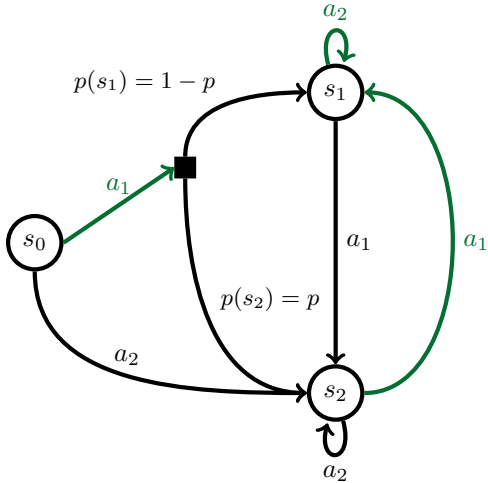
\begin{figure}[H]
\centering
 \begin{tikzpicture}
   \node[circle, ultra thick] (S0) at (-4,0) [draw, minimum width=0.5cm,minimum height=0.5cm] {$s_0$};
   \node[rectangle, ultra thick] (S0A1) at (-2,1) [draw, fill=black, minimum width=0.05cm,minimum height=0.05cm] {};
  \node[circle, ultra thick] (S1) at (0,2) [draw, minimum width=0.5cm,minimum height=0.5cm] {$s_1$};
  \node[circle, ultra thick] (S2) at (0,-2) [draw, minimum width=0.5cm,minimum height=0.5cm] {$s_2$};
  \draw [->, darkgreen, ultra thick] (S0.east) to node[above]{$a_1$} (S0A1.west);
  \draw [->, ultra thick] (S0.south) to [out=270, in=180] node[above]{$a_2$} (S2.west);
  \draw [->, ultra thick] (S0A1.north) to [out=90, in=180] node[above left]{{\footnotesize $p(s_1)=1 - p$}}(S1.west);
  \draw [->, ultra thick] (S0A1.south) to [out=270, in=180] node[above right]{{\footnotesize $p(s_2)=p$}}(S2.west);
    \draw [->, darkgreen, ultra thick] (S1) edge[loop above] node[above]{$a_2$} (S1);
    \draw [->, ultra thick] (S1) to node[right]{$a_1$} (S2);
    \draw [->, ultra thick] (S2) edge[loop below] node[below]{$a_2$} (S2);
    \draw [->,darkgreen, ultra thick] (S2.east) to [out=0,in=0] node[right]{$a_1$}(S1.east);
  \end{tikzpicture}
    \caption{A variant of the \texttt{Loop} MDP from \cite{ross2010efficient}. Taking actions labelled in {\color{darkgreen} green} gives $1$ reward, while actions in black give reward $0$. The MDP is stochastic for action $a_1$ in state $s_0$, which with probability $p$ leads to state $s_2$, and with probability $1-p$ leads to state $s_1$. 
 }\label{fig:loop-MDP}
\end{figure}%
To substantiate this illustrative case, we implemented this MDP and evaluated a few methods. We use a single expert trajectory which goes from $s_0$ to $s_1$, never going to state $s_2$. We set $p=0.5$ for this experiment. The results are in Table~\ref{tab:loop-mdp-results}, averaged over five random seeds. They are as we expect, with the dynamics-aware methods able to convincingly master the environment and find the optimal policy, while the behavioural cloning approach achieves around 50 reward. This is because it learns the wrong behaviour in state $s_2$ so gets zero reward in that state in the 50\% of the time that taking action $a_1$ results in a transition to state $s_2$. 

\begin{table}[h]
  \centering
 \caption{Results of imitation learning algorithms on the \texttt{Loop} MDP described above. We observe that the dynamics-unaware behavioural cloning baseline performs much worse than the other dynamics-aware methods.}
\label{tab:loop-mdp-results}
\begin{tabular}{@{}ll@{}}
\toprule
Method                  & Episode Reward \\ \midrule
Behavioural Cloning     & $54 \pm 5$               \\
SQIL                    & $100 \pm 0$               \\
IQ (Online, $\chi^2$) & $100 \pm 0$               \\ \bottomrule
\end{tabular}
\end{table}

\subsection{Ablation on Gamma}

\begin{wrapfigure}{r}{0.55\textwidth}
\vskip -40pt
\centering
\includegraphics[width=\linewidth]{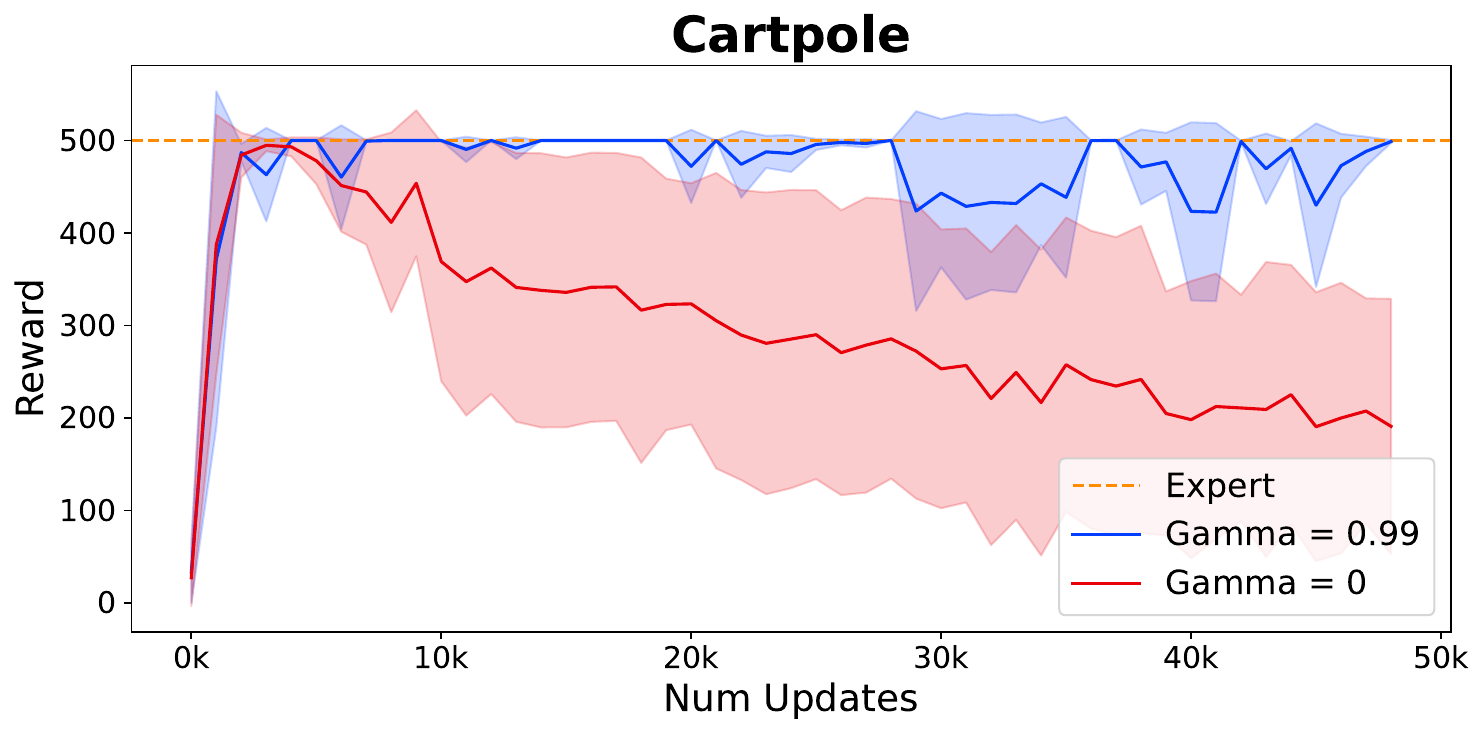}
\vskip -5pt
\caption{\small\textbf{Ablation on Gamma}}
\label{fig:gamma} 
\vskip -20pt
\end{wrapfigure}

The dynamics are encoded in our learning objective by the discount factor $\gamma$, and setting it to zero removes dynamic-awareness in IQ-Learn.

To show how dynamics help with learning, we do an ablation on $\gamma$ with IQ-Learn. We use the offline IL settings for CartPole environment with one expert trajectory.

We set $\gamma$ to $0.99$ and $0$. The results are visualized in Fig~\ref{fig:gamma}, we can see that without the dynamics the training is not stable and there is a strong decay in the rewards obtained by the IL agent from the environment. Whereas, when using dynamics, we see that the training is stable and properly converges.
\section{Appendix F}
\label{appx:F}
\subsection{Generalization over distribution shift}

We show our method can be robust to distribution shifts between the expert and policy and perform additional experiments over two different settings: 1) Initial distribution shift using a modified LunarLander env motivated by \cite{Reddy2020SQILIL} and 2) Goal distribution shift using DeepMind Control Suite.

\subsubsection{Initial distribution shift}

We experiment with initial shift distribution in the LunarLander-v2 environment similar to \cite{Reddy2020SQILIL}. The agent is typically initialized in a small zone at the middle top of the screen. Instead, we modify the environment to initialize the agent near the top-left corner of the screen. We use experts from the unmodified environment, and test whether the agent can still learn to land the lunar lander while recovering from the initial distribution shift.

\textbf{Offline Case}: We find in the offline case that the agent cannot learn to recover from the occupancy shift. The lander typically tends to fly off the frame and shows random behavior. This is expected as IQ-learn is not aware of the shift of initial distributions between the agent and the expert, and can’t explore the environment to correct the initial state shift to match the occupancy distributions.

\textbf{Online Case}: In the online case, we find that the agent can sufficiently explore the environment, and learns a behavior of first horizontally moving the lander from the top left to the top center and then successfully imitating the original expert trajectory, receiving an avg. episode reward of ~250 with 10 expert demos.

An extra consideration here is in Eq. 9, where we originally only apply reward regularization to the expert states, but we find applying regularization to both expert and policy states to be beneficial in this case. As it enforces the learning of an implicit reward function that can generalize outside the expert distribution to more arbitrary policy states.

\subsubsection{Goal Distribution Shift}

We experiment with the \textit{reacher\_easy} task in DeepMind Control Suite. We choose the reacher environment as it is a multi-task environment, where the goal given by the target position changes in every episode randomly. Such environments have been found to be very difficult to solve using IRL \cite{Yu2019MetaInverseRL} as a large number of expert demos are needed to fully cover the goal distributions, and usually require meta-IRL methods to figure the right task context for a given expert demonstration like PEMIRL \cite{Yu2019MetaInverseRL}.

We test with different number of expert demonstrations: $(1, 5, 10, 20)$ each with different target positions on the offline and online settings. The average expert performance is $\sim 990$ in this case and we report averaged results over $100$ episodes with different targets.

\begin{wraptable}{R}{0.5\linewidth}
  \centering
  \vskip -12pt
  \caption{\textbf{Offline}. We show evironment returns vs number of experts on \textit{reacher\_easy} for offline case.}
  \label{tbl:offline_reacher}
\begin{tabular}{l|c}

Num Experts  & Rewards \\ \hline
1 & 105.4 \\
5 & 120.1 \\
10 & 210.6 \\
20 & 325.0 \\
\hline
\end{tabular}
\vskip -5pt
\end{wraptable}

\textbf{Offline Case}: In the offline setting, a single demonstration is typically not enough to learn a generalized reward function and leads to a reward that overfits to a particular target position. We quantify the results in Table~\ref{tbl:offline_reacher}, with the observation that imitation learning performance improves with the number of expert demos. This can be justified, as more experts with different targets allow learning a reward function that is better generalizable.

\begin{wraptable}{R}{0.5\linewidth}
  \centering
  \vskip -1pt
  \caption{\textbf{Online}. We show evironment returns vs number of experts  on \textit{reacher\_easy} for online case.}
  \label{tbl:online_reacher}
\begin{tabular}{l|c}

Num Experts  & Rewards \\ \hline
1 &	271.3 \\
5 &	485.1 \\
10 & 545.0 \\
20 & 734.9 \\
50 & 926.1 \\
\hline
\end{tabular}
\vskip -5pt
\end{wraptable}

\textbf{Online Case}: In the online setting, our method is able to explore the environment over different episodes and can learn to correct the behavior leading to better performance. In particular, given a sufficient number of expert demos, it can learn to associate what expert behavior to imitate given a particular target and learns a more reward function generalizable over multiple goals. We show quantitative results in Table~\ref{tbl:online_reacher}.\\


BC and GAIL on \textit{reacher\_easy}  even with $50$ experts obtain mean rewards of $325.2$ and $440.1$ respectively, which is equivalent to what we see using our method with just $5$ expert demos! It is surprising to us that our method can learn a reward to figure out what goal state to reach, acting as a \textbf{meta-learner} even when not engineered specifically to do so.


\end{document}